\DeclareMathOperator*{\argmax}{arg\,max} 
\newcommand\numberthis{\addtocounter{equation}{1}\tag{\theequation}}
\def\ci{\perp\!\!\!\perp}
\def\circlinecirc{\hspace{1mm}{\circ\! {-} \! \circ}\hspace{1mm}}
\def\linecirc{\hspace{1mm}{-\!\circ}\hspace{1mm}}
\begin{document}

\title{Estimating and Controlling the False Discovery Rate of the PC Algorithm Using Edge-Specific P-Values}

\author{\name Eric V. Strobl \email evs17@pitt.edu \\
       \addr Department of Biomedical Informatics\\
       University of Pittsburgh\\
       Pittsburgh, PA 15206, USA
       \AND
\name Peter L. Spirtes \email ps7z@andrew.cmu.edu \\
       \addr Department of Philosophy\\
       Carnegie Mellon University\\
       Pittsburgh, PA 15213, USA
       \AND
	   \name Shyam Visweswaran \email shv3@pitt.edu \\
       \addr Department of Biomedical Informatics\\
       University of Pittsburgh\\
       Pittsburgh, PA 15206, USA}

\editor{TBA}

\maketitle

\begin{abstract}
The PC algorithm allows investigators to estimate a complete partially directed acyclic graph (CPDAG) from a finite dataset, but few groups have investigated strategies for estimating and controlling the false discovery rate (FDR) of the edges in the CPDAG. In this paper, we introduce PC with p-values (PC-p), a fast algorithm which robustly computes edge-specific p-values and then estimates and controls the FDR across the edges. PC-p specifically uses the p-values returned by many conditional independence (CI) tests to upper bound the p-values of more complex edge-specific hypothesis tests. The algorithm then estimates and controls the FDR using the bounded p-values and the Benjamini-Yekutieli FDR procedure. Modifications to the original PC algorithm also help PC-p accurately compute the upper bounds despite non-zero Type II error rates. Experiments show that PC-p yields more accurate FDR estimation and control across the edges in a variety of CPDAGs compared to alternative methods\footnote{MATLAB implementation: https://github.com/ericstrobl/PCp/}.
\end{abstract}

\begin{keywords}
PC Algorithm, Causal Inference, False Discovery Rate, Bayesian Network, Directed Acyclic Graph
\end{keywords}

\section{Introduction} \label{sec_1}

Discovering causal relationships is often much more important than discovering associational relationships in the sciences. As a result, the research community has been conducting extensive investigations into causal inference with the hope of developing practically useful algorithms to speed-up the scientific process. This research has resulted in a wide range of high performing algorithms over the years such as PC \citep{Spirtes00}, FCI \citep{Spirtes95, Spirtes00}, and CCD \citep{Richardson96}

The PC algorithm is currently one of the most popular methods for inferring causation from observational data. Given an observational dataset, the algorithm outputs a complete partially directed acyclic graph (CPDAG) which has helped some investigators elucidate important causal relationships in several domains. For example, the PC algorithm has been used to discover new causal relationships between genes and brain regions in biology \citep{Wu06,Li08,Joshi10,Sun12,Harris13,Iyer13,Le13,Teramoto14,Ha15}. The algorithm has also been used to discover causal relations between corporate structures and strategies in economics \citep{Chong09} as well as academic and musical achievements in psychology \citep{Mullensiefen15}.

The increased use of PC in recent years has nonetheless led to growing concern about algorithm's confidence level in each edge of the CPDAG. For example, PC may have more confidence in the edge $A-B$ but have less confidence in the edge $B \rightarrow C$ in the subgraph $A-B \rightarrow C$ of the CPDAG. Currently, PC alone does not output any edge-specific measure of confidence, even though scientists often must report measures of confidence such as p-values or confidence intervals in their scientific articles. This incongruency has resulted in the relatively slow adoption or even avoidance of the PC algorithm in the sciences, despite the algorithm's impressive capabilities in causal inference. Clearly then rectifying the problem by developing an edge-specific measure of confidence will increase the adoption of PC as well as hopefully ease the transition of ever-more complex causal inference algorithms into the scientific community.

We can of course consider multiple different ways of representing the confidence level in each edge of the CPDAG. However, we choose to pay special attention to the p-value, since it is by far the most popular notion of confidence in the sciences. Indeed, nearly all scientists report p-values in modern scientific reports because they rely on p-values to help justify their hypotheses. We therefore would ideally like to assign a p-value to each edge in the CPDAG as in Figure \hyperref[fig_pvalues]{1a} in order to best integrate the algorithm within a well-known framework. In this paper, we propose such a ``causal p-value'' in detail.

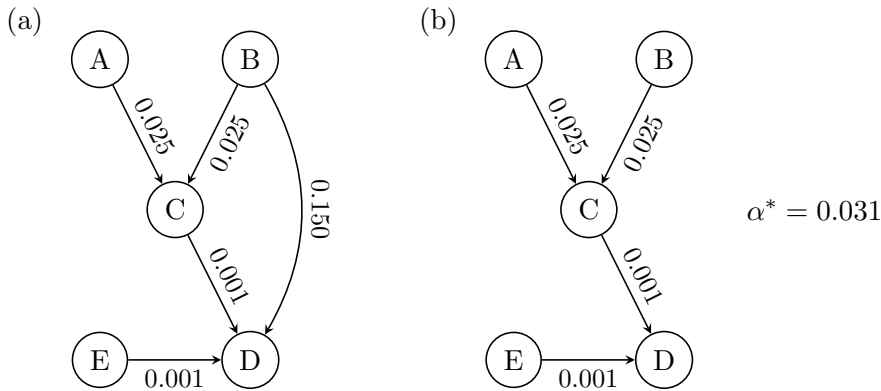
\begin{figure}[b!]
\centering 
\begin{tikzpicture}[
            > = stealth, 
            auto,
            semithick 
        ]
        \node[shape=circle] (F) at (-1,0.5) {(a)};
    \node[shape=circle,draw=black] (A) at (0,0) {A};
    \node[shape=circle,draw=black] (B) at (2,0) {B};
    \node[shape=circle,draw=black] (C) at (1, -2) {C};
    \node[shape=circle,draw=black] (D) at (2, -4) {D};
    \node[shape=circle,draw=black] (E) at (0,-4) {E};

    \path [->] (A) edge node[sloped,above] {\small{0.025}}(C);
    \path [->] (B) edge node[sloped,below] {\small{0.025}}(C);
    \path [->] (C) edge node[sloped,above] {\small{0.001}}(D);
    \path [->] (E) edge node[sloped,below] {\small{0.001}}(D);
    \path [->, bend left] (B) edge node[sloped,above] {\small{0.150}}(D);
    
        \node[shape=circle] (F) at (4.5,0.5) {(b)};
        \node[shape=circle,draw=black] (A) at (5.5,0) {A};
    \node[shape=circle,draw=black] (B) at (7.5,0) {B};
    \node[shape=circle,draw=black] (C) at (6.5, -2) {C};
    \node[shape=circle,draw=black] (D) at (7.5, -4) {D};
    \node[shape=circle,draw=black] (E) at (5.5,-4) {E};
        \node[shape=circle] (F) at (9.5,-2) {$\alpha^* = 0.031$};

    \path [->] (A) edge node[sloped,above] {\small{0.025}}(C);
    \path [->] (B) edge node[sloped,below] {\small{0.025}}(C);
    \path [->] (C) edge node[sloped,above] {\small{0.001}}(D);
    \path [->] (E) edge node[sloped,below] {\small{0.001}}(D);
    
\end{tikzpicture}
\caption{We seek to associate edge-specific p-values to the output of the PC algorithm such as in (a). The PC algorithm currently does not associate such p-values with its output. We would also like to control the FDR of the edges. In (b), we set the FDR to $0.1$ and obtained a $\alpha^*$ cutoff of $0.031$ for the output in (a), so we eliminated the edge between $B$ and $D$ because its p-value exceeds $\alpha^*$.}
\label{fig_pvalues}
\end{figure}

We however also believe that assigning p-values to each edge is not enough to ease the transition of PC into mainstream science, since the CPDAG actually contains many edges and therefore also represents a complicated multiple hypothesis testing problem. Fortunately, the problem of multiple hypothesis testing has a long history, as scientists have often required the results of multiple hypothesis tests in order to answer complex scientific questions. Currently, a standard approach to tackling the multiple hypothesis testing problem involves controlling the proportion of false positives among the rejected null hypotheses, or the \textit{false discovery rate (FDR)}, by using an \textit{FDR controlling procedure} that takes a desired FDR level $q$ and a set of p-values as input. The procedure then outputs a corresponding significance level $\alpha^*$ for the set of p-values. An investigator subsequently rejects the null hypotheses for those tests with p-values that fall below $\alpha^*$ in order to ensure that the expected FDR does not exceed $q$. For example, consider the set of p-values $\{0.02, 0.01, 0.03\}$ and suppose that the FDR controlling procedure with $q=0.1$ outputs $\alpha^*=0.019$. Then, rejecting the null hypotheses of the first and third hypothesis tests guarantees that the expected FDR does not exceed 10\%. Several other FDR controlling strategies also exist, but the ease of use, speed and accuracy of the above method have made it the most widely adopted strategy in the last two decades. 

We would therefore like to control the FDR in the edges of the CPDAG using an FDR controlling procedure like in Figure \hyperref[fig_pvalues]{1b}. As a first idea, one may wonder whether an investigator can control the FDR in the CPDAG by simply feeding in all of the CI test p-values computed by PC into an FDR controlling procedure. Unfortunately, this approach fails for at least two reasons. First, it is unclear how to use the p-values which exceed the $\alpha^*$ cut-off to reject edges in the CPDAG, since one would first need to elucidate the correspondence between the p-values and edges. Second, even if one could solve this problem, the strategy may only loosely bound the expected FDR. An accurate FDR controlling procedure should instead take into account the specific computations executed by PC in order to identify a sharp bound. The p-value based approach therefore necessitates a more fine-grained strategy which has thus far remained undiscovered.

Several groups have nonetheless attempted to control the FDR in the CPDAG by avoiding the complicated nature of the above problem with a different, data re-sampling approach. For example, Friedman and colleagues proposed to estimate the FDR by using the parametric bootstrap \citep{Friedman99}. This procedure involves first learning a causal graph with the PC algorithm. The procedure then generates data from the causal graph and re-applies the PC algorithm multiple times on each generated dataset to estimate the FDR using the learnt causal graphs. An investigator can subsequently control the FDR by repeating the above process with different $\alpha$ values until he or she reaches the desired FDR level $q$. However, notice that the method requires multiples calls to PC and can therefore require too much time with high dimensional data. The procedure also requires parametric knowledge about the underlying distribution which limits the applicability of the method to simple cases. Fortuitously, two groups later proposed a permutation-based method which drops the parametric assumption \citep{Listgarten07,Armen14}. The permutation method nevertheless also requires multiple calls to an algorithm and in fact only applies to the parts of PC which can be decomposed into independent searches for the parents of each vertex; this has thus far limited the applicability of the method to adjacency discovery with local to global discovery algorithms (e.g., MMHC) and incomplete edge orientation. We conclude that both the bootstrap and permutation approaches to FDR estimation and control are either incomplete or too time consuming. 

Another class of methods fortunately attempts to control the FDR without resampling procedures by instead using a standard FDR controlling procedure with bounded p-values. For instance, one method proposed in \citep{Tsamardinos08} and then refined in \citep{Armen11,Armen14}  assigns a p-value to each adjacency by taking the maximum over all of the significant p-values from the associated CI tests executed by PC. The method then controls the FDR in the estimated adjacencies by applying an FDR controlling procedure, such as the one proposed by Benjamini and Yekutieli (BY) \citep{Benjamini01}, on the edge-specific p-values. Under faithfulness and a zero Type II error rate, the method controls the FDR across the estimated adjacencies, or the estimated \textit{skeleton} \citep{Armen14}. This two stage method also performs comparably with the one stage method proposed in \citep{Li08, Li09}, which controls the FDR during, as opposed to after, the execution of the skeleton discovery phase of the PC algorithm. Of course, the Type II error rate never reaches zero in practice but researchers have also investigated a strategy for reducing the realized Type II error rate by introducing a heuristic reliability criterion for CI tests when dealing with discrete data \citep{Armen14}. Experiments have shown that these methods finish in a relatively short amount of time and perform well in practice. However, the methods are also incomplete because they only apply to the skeleton discovery phase of PC.

In this report, we build on the previous outstanding work for deriving p-values for adjacencies by contributing a sound, complete and fast algorithm called PC with p-values (PC-p) which appropriately combines the p-values of PC's CI tests and then uses the BY FDR controlling procedure to accurately control the FDR in a CPDAG. The method relies on two upper bounds of the p-value that relate to logical conjunctions and disjunctions as described in Section \ref{sec_3}. These upper bounds allow us to formulate several hypothesis tests for recovering the skeleton, discovering unshielded v-structures, and orienting additional edges as presented in Section \ref{sec_4}. Accurately estimating the p-values of the hypothesis tests nonetheless requires a modified version of PC called PC-p which we propose in Section \ref{sec_5}. Finally, we provide experimental results in Section \ref{sec_6} which show that PC-p's p-value estimates yield accurate estimates of the FDR with the BY procedure and improve upon alternative methods.

\section{Preliminaries} \label{sec_2}
\subsection{Causal graphs}

A \textit{causal graph} consists of vertices representing variables and edges representing causal relationships between any two variables. In this paper, we will use the terms ``vertices" and ``variables" interchangeably. \textit{Directed graphs} are graphs where two distinct vertices can be connected by edges ``$\to$" and ``$\gets$." We only consider \textit{simple graphs} in this paper, or graphs with no edges originating from and connecting to the same vertex. \textit{Directed acyclic graphs} (DAGs) are directed graphs without directed cycles. We say that $X$ and $Y$ are \textit{adjacent} if they are connected by an edge independent of the edge's direction. A \textit{path} $p$ from $X$ to $Y$ is a set of consecutive edges (also independent of their direction) from $X$ to $Y$ such that no vertex is visited more than once. Given a path between two vertices $X$ and $Y$ with a middle vertex $Z$, the path is a \textit{chain} if $X \to Y \to Z$, a \textit{fork} if $X \gets Y \to Z$, and a \textit{v-structure} if $X \to Y \gets Z$. We refer to $Y$ as a \textit{collider}, if it is the middle vertex in a v-structure. A v-structure is called an \textit{unshielded v-structure} if $X \to Y \gets Z$, but $X$ and $Z$ are non-adjacent. A \textit{directed path} from $X$ to $Y$ is a set of consecutive edges with direction. We say that $X$ is an \textit{ancestor} of $Y$ (and $Y$ is a \textit{descendant} of $X$), if there exists a directed path from $X$ to $Y$. 

If $\mathbb{G}$ is a directed graph in which $\bm{X}$, $\bm{Y}$ and $\bm{Z}$ are disjoint sets of vertices, then $\bm{X}$ and $\bm{Y}$ are \textit{d-connected} by $\bm{Z}$ in $\mathbb{G}$ if and only if there exists an undirected path $p$ between some vertex in $\bm{X}$ and some vertex in $\bm{Y}$ such that, for every collider $C$ on $p$, either $C$ or a descendant of $C$ is in $\bm{Z}$, and no non-collider on $p$ is in $\bm{Z}$. On the other hand, $\bm{X}$ and $\bm{Y}$ are \textit{d-separated} by $\bm{Z}$ in $\mathbb{G}$ if and only if they are not d-connected by $\bm{Z}$ in $\mathbb{G}$.  Next, the joint probability distribution $\mathbb{P}$ over variables $\bm{X}$ satisfies the \textit{global directed Markov property} for a directed graph $\mathbb{G}$ if and only if, for any three disjoint subsets of variables $\bm{A}$, $\bm{B}$ and $\bm{C}$ from $\bm{X}$, if $\bm{A}$ and $\bm{B}$ are d-separated given $\bm{C}$ in $\mathbb{G}$, then $\bm{A}$ and $\bm{B}$ are conditionally independent given $\bm{C}$ in $\mathbb{P}$. We refer to the converse of the global directed Markov property as \textit{d-separation faithfulness}; that is, if $\bm{A}$ and $\bm{B}$ are conditionally independent given $\bm{C}$ in $\mathbb{P}$, then $\bm{A}$ and $\bm{B}$ are d-separated given $\bm{C}$ in $\mathbb{G}$. 

A \textit{Markov equivalence class} of DAGs refers to a set of DAGs which entail the same conditional independencies. A \textit{complete partially directed acyclic graph} (CPDAG) is a partially directed acyclic graph with the following properties: (1) each directed edge exists in every DAG in the Markov equivalence class, and (2) there exists a DAG with $X \to Y$ and a DAG with $X \gets Y$ in the Markov equivalence class for every undirected edge $X-Y$. A CPDAG $\mathbb{G}^C$ represents a DAG $\mathbb{G}$, if $\mathbb{G}$ belongs to the Markov equivalence class described by $\mathbb{G}^C$. We will occasionally use the meta-symbol ``$\circ$'' at the endpoint(s) of an edge to denote the presence or absence of an arrowhead. For example, the edge ``$\linecirc$'' may denote either ``$-$'' or ``$\to$''.  

\subsection{The PC Algorithm} \label{sec_2_2}
The PC algorithm is comprised of three stages. We have summarized these stages as pseudocode in Algorithms \ref{pc_1}, \ref{pc_2} and \ref{pc_3} in Section \ref{appendix_1} of the Appendix. The first stage estimates the adjacencies of $\mathbb{G}$, or the skeleton of $\mathbb{G}$. Starting with a fully connected skeleton, the algorithm attempts to eliminate the adjacency between any two variables, say $A$ and $B$, by testing if $A$ and $B$ are conditionally independent given some subset of the neighbors of $A$ or the neighbors of $B$. The search is performed progressively, whereby the algorithm increases the size of the conditioning set starting from zero using a step size of 1. The edge between $A$ and $B$ is removed, if $A$ and $B$ are rendered conditionally independent given some subset of the neighbors of $A$ or the neighbors of $B$.

The PC algorithm orients unshielded colliders in its second stage. Specifically, PC finds triples $A$, $B$, $C$ such that $A-B-C$, but $A$ and $C$ are non-adjacent. The algorithm then determines whether $B$ is contained in the set which rendered $A$ and $C$ conditionally independent in the first stage of PC. If not, $A-B-C$ is replaced with $A \to B \gets C$. 

The third and final stage of PC involves the repetitive application of three rules to orient as many of the remaining undirected edges as possible. The three rules include:
\begin{equation} \label{rules1}
\begin{aligned}
1. \hspace{3mm} & \text{If } A-B \text{, } C \to A \text{ and } C \text{ and } B \text{ are non-adjacent, then replace } A-B \text{ with } \\ & A \to B. \\
2. \hspace{3mm} & \text{If } A-B \text{ and }  A \to C \to B \text{, then replace } A-B \text{ with } A \to B. \\
3. \hspace{3mm} & \text{If } A-B \text{, } A-C \to B \text{, } A-D \to B \text{, and } C \text{ and } D \text{ are non-adjacent, } \\ \hspace{3mm} &\text{then replace } A-B \text{ with } A \to B.
\end{aligned}
\end{equation}

Overall, the PC algorithm has been shown to be complete in the sense that it finds and then orients edges up to $\mathbb{G}^C$, a CPDAG that represents $\mathbb{G}$ \citep{Meek95}.

\subsection{Hypothesis Testing}

A \textit{hypothesis test} is a method of statistical inference usually composed of one \textit{null} ($H_0$) and one \textit{alternative} ($H_1$) \textit{hypothesis} which are mutually exclusive; that is, if one occurs, then the other cannot occur. The null hypothesis refers to the default position which asserts that whatever one is trying to statistically infer actually did not happen. Note that the null and alternative do not necessarily need to be logical complements of each other. For example, one may be interested in determining whether the parameter $\mu$ is greater than zero. In this case, the null can be defined as $\mu=0$ while the alternative can be defined as $\mu>0$ instead of $\mu \not = 0$. 

A \textit{Type I error} is the incorrect rejection of a true null hypothesis, or a false positive.  On the other hand, a \textit{Type II error} is the failure to reject a false null hypothesis, or a false negative. The \textit{p-value} ($p$) is the probability of the Type I error, or the Type I error rate. More specifically, the p-value is the probability of obtaining a result equal to or more extreme than the observed value under the assumption of the null hypothesis. The null hypothesis is thus rejected when the p-value is at or below a predefined \textit{$\alpha$ threshold} (typically the $\alpha$ threshold is set to 0.05), because a low p-value demonstrates the improbability of the null hypothesis.

\subsection{False Discovery Rate}

\textit{Multiple comparisons} or \textit{multiple hypothesis testing} refers to the process of considering more than one statistical inference simultaneously. Failure to compensate for multiple comparisons can result in erroneous inferences. For example, if an investigator performs one hypothesis test with an $\alpha$ threshold of 0.05, then he or she has only a $5\%$ chance of making a Type I error. However, if the investigator performs 100 independent tests with the same $\alpha$ threshold, then he or she has a $1-(1-0.05)^{100}=99.4\%$ chance of making a Type I error on at least one test.  

In multiple hypothesis testing, the \textit{false discovery rate} (FDR) at threshold $\alpha$ is the expected proportion of false positives among the rejected null hypotheses. Specifically, we define the FDR at $\alpha$ as follows:
\begin{equation} \nonumber
FDR(\alpha) \triangleq \mathbb{E}\left[\frac{V}{\max\{R, 1\}}\right],
\end{equation}
where $V$ is the number of false positives, $R$ is the total number of null hypotheses rejected, and $\max\{R, 1\}$ ensures that $FDR(\alpha)$ is well-defined when $R=0$. We define the \textit{realized FDR} at $\alpha$ as $V/\max\{R, 1\}$.

\textit{FDR estimation}, or \textit{conservative point estimation of the FDR}, refers to the process of estimating $FDR(\alpha)$ in a conservative manner such that:
\begin{equation} \nonumber
\mathbb{E}[\widehat{FDR}(\alpha)] \geq FDR(\alpha),
\end{equation}
where $\widehat{FDR}(\alpha)$ represents an estimate of $FDR(\alpha)$. We denote $\mathbb{E}[\widehat{FDR}(\alpha)]-FDR(\alpha)$ as the \textit{estimation bias}. Note that there are several ways of obtaining $\widehat{FDR}(\alpha)$. In 2001, Benjamini and Yekutieli proposed the following \textit{FDR estimator} for $m$ hypothesis tests: 
\begin{equation} \label{FDR_BY}
\widehat{FDR}_{BY}(\alpha) \triangleq \frac{m\alpha\sum_{i=1}^{m}\frac{1}{i}}{\max\{R, 1\}}.
\end{equation}
FDR estimators such as $\widehat{FDR}_{BY}$ can be used to define \textit{FDR controlling procedures}. These procedures determine the optimal threshold $\alpha^*$ which achieves \textit{strong control}\footnote{\textit{Strong control} of the FDR refers the process of controlling the FDR under any configuration of true and false null hypotheses; on the other hand, \textit{weak control} refers to the process of controlling the FDR when all of the null hypotheses are true. Strong control is therefore preferable to weak control.} of the FDR in the following sense:
\begin{equation} \label{alpha_star}
\alpha^* \triangleq \argmax_{\alpha} \{\widehat{FDR}(\alpha) \leq q\}
\end{equation}
The FDR controlling procedure based on $\widehat{FDR}$ involves the rejection of all null hypotheses with p-values below the $\alpha^*$ threshold. We refer to the quantity $FDR(\alpha^* )-q$ as the \textit{control bias}. Benjamini and Yekutieli proved that the estimate $\widehat{FDR}_{BY}$ in particular achieves strong control of the FDR with any form of dependence among the p-values of $m$ hypothesis tests. 

\section{Upper Bounds on the P-Value} \label{sec_3}
We present two upper bounds of the Type I error rate of hypothesis tests which can be constructed using a set of simpler hypothesis tests. These upper bounds will serve as useful tools in Section \ref{sec_4} for bounding the Type I error rate of the hypothesis tests which will be used to infer the presence or absence of edges in a CPDAG. 
\subsection{Union Bound}
Consider the following hypothesis test for two random variables given a conditioning set:
\begin{align*}
H_0: &\text{ Conditionally independent}, \\
H_1: &\text{ Conditionally dependent}.
\end{align*}
Trivially, we can rephrase the null and alternative in terms of a conditional independence (CI) oracle:
\begin{align*}
H_0: &\text{ The CI oracle outputs independent}, \\
H_1: &\text{ The CI oracle outputs dependent}.
\end{align*}
Now suppose we want to query $m$ CI oracles about $m$ CI relations. We can then consider the following null and alternative:
\begin{equation} \label{union_CI}
\begin{aligned}
H_0: &\text{ All CI oracles output independent}, \\
H_1: &\text{ At least one CI oracle outputs dependent}.
\end{aligned}
\end{equation}
From here on, we write Pr(CI test $i$ outputs dependent $|$ CI oracle $i$ outputs independent) to denote Pr($|\widehat{s}_i| \geq s_i^\alpha \big| s_i=0$), where $s_i$ refers to a parameter of some standardized distribution used by CI test $i$, $\widehat{s}_i$ a random variable and the test statistic estimating $s_i$, and $s_i^\alpha$ a value of $s_i$ determined by an $\alpha$ level. We provide an example in Figure \ref{fig_stand_norm}, where $s_i$ may correspond to Fisher's $z$-statistic in the case of Fisher's $z$-test for the mean parameter $s_i=0$ of the standard normal distribution.

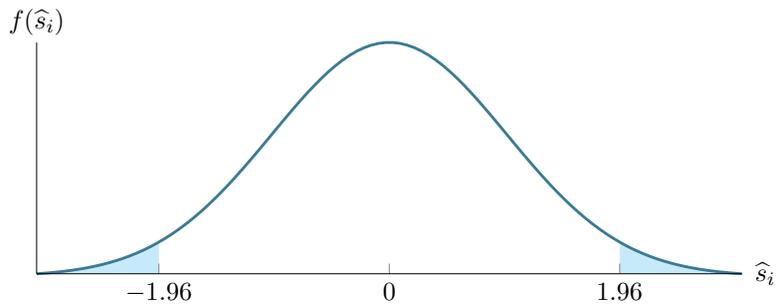
\begin{figure}
\centering
\begin{tikzpicture}[scale=0.9]
\begin{axis}[
  no markers, domain=-3:3, samples=100,
  axis lines*=left, xlabel=$\widehat{s}_i$, ylabel=$f(\widehat{s}_i)$,
  every axis y label/.style={at=(current axis.left of origin),above=34mm},
  every axis x label/.style={at=(current axis.right of origin),right=11mm},
  height=5cm, width=12cm,
  xtick={0, -1.96, 1.96}, ytick=\empty,
  enlargelimits=false, clip=false, axis on top
  ]
  \addplot [fill=cyan!20, draw=none, domain=-3:-1.96] {gauss(0,1)} \closedcycle;
  \addplot [fill=cyan!20, draw=none, domain=1.96:3] {gauss(0,1)} \closedcycle;
  \addplot [very thick,cyan!50!black] {gauss(0,1)};

\end{axis}
\end{tikzpicture}
\caption{In the above standard normal case, we have Pr$(|\widehat{s}_i| \geq s_i^\alpha \big| s_i=0)=0.05$, where $s_i^\alpha = 1.96$. We reject the null hypothesis when $|\widehat{s}_i|$ falls in the blue colored regions at the tails.}
\label{fig_stand_norm}
\end{figure}

We now bound the Type I error rate of the hypothesis test \eqref{union_CI} by using the new notation and the union bound:
\begin{align*} \label{union_bound_der}
&\hspace{5mm}\text{Pr(Type I error)}\\
&=\text{Pr(at least one CI test outputs dependent}|H_0) \\
&=\text{Pr}\left(\bigvee_{i=1}^m\text{CI test $i$ outputs dependent}|H_0 \right) \\
&\leq \sum_{i=1}^m \text{Pr(CI test $i$ outputs dependent}|H_0 ) \\
&= \sum_{i=1}^m \text{Pr(CI test $i$ outputs dependent}|\text{CI oracle $i$ outputs independent}) \\
&= \sum_{i=1}^m p_i, \numberthis
\end{align*}
where $p_i$ denotes the Type I error rate of CI test $i$. Thus, if the r.h.s. of \eqref{union_bound_der} is less than the $\alpha$ threshold, then we can conclude that the Type I error rate of \eqref{union_CI} is also below the threshold. In other words, \eqref{union_bound_der} is a conservative p-value.

Note that the third equality in the derivation of \eqref{union_bound_der} uses the simplifying assumption that the probability of the output of CI test $i$ only depends on the output of CI oracle $i$ when given the outputs of all CI oracles. Several papers have used this assumption implicitly in their proofs \citep{Tsamardinos08,Li09}, and we will also use it throughout this paper. We can justify the assumption based on three facts. First, most CI test statistics $s_i$ have a limiting distribution which only depends on $s_i=0$ under the null. For example, Fisher's $z$-statistic has a limiting standard normal distribution with mean parameter $z_i=0$ and constant variance. Moreover, the $G$-statistic for the $G$-test has a limiting $\chi^2$-distribution with non-centrality parameter $g_i=0$ and degrees of freedom determined by the number of cells in the contingency table. Second, existing methods which utilize bounds based on the assumption have strong empirical performance; loose-enough bounds therefore appear to accommodate the assumption well in most finite sample cases. Third, recall that simplifying assumptions are not new in the causality literature; indeed, many authors have made simplifying assumptions regarding parameter independence for Bayesian methods which similarly increase computational efficiency and achieve strong empirical performance (e.g., \citep{Cooper99}).

We can now also generalize the bound in \eqref{union_bound_der} to any hypothesis test consisting of a series of logical disjunctions in the alternative and a series of logical conjunctions in the null. Namely:
\begin{equation} \label{gen_union}
\begin{aligned}
H_0: &\bigwedge_{i=1}^m \text{oracle $i$ outputs $\neg P_i$}, \\
H_1: &\bigvee_{i=1}^m \text{oracle $i$ outputs $P_i$},
\end{aligned}
\end{equation}
where $P_i$ denotes an arbitrary output of oracle $i$. We now have:
\begin{align*} 
&\hspace{5mm} \text{Pr(Type I error)}=\text{Pr(}\bigvee_{i=1}^m \text{test $i$ outputs $P_i$ }〗|H_0 ) \\
&\leq \sum_{i=1}^m \text{Pr(test $i$ outputs $P_i$} |H_0 ) \\
&= \sum_{i=1}^m \text{Pr(test $i$ outputs $P_i$} | \text{oracle $i$ outputs $\neg P_i$} ) \\
& = \sum_{i=1}^m h_i ,
\end{align*}
where $h_i$ is the Type I error rate of test $i$, and the second equality uses the assumption that the probability of the output of test $i$ only depends on the output of oracle $i$ when given all oracles. We will use this generalization in Section \ref{sec_4}.

\subsection{Intersection Bound}
Suppose we want to perform a hypothesis test with the following null and alternative which are different than the null and alternative in \eqref{union_CI}:
\begin{equation} \label{max_bound_CI}
\begin{aligned}
H_0: &\text{ At least one CI oracle outputs independent}, \\
H_1: &\text{ All CI oracles output dependent}.
\end{aligned}
\end{equation}
Now assume we know that the $i^{\text{th}}$ CI oracle outputs independent. We can then bound the Type I error rate of \eqref{max_bound_CI} as follows with $m$ queries to the CI oracle:
\begin{equation} \label{max_bound_der1}
\begin{aligned}
&\hspace{5mm}\text{Pr(Type I error)}=\text{Pr(all $m$ CI tests output dependent}|H_0 ) \\
&\leq \text{Pr(CI test $i$ outputs dependent } | H_0) \\
&= \text{Pr(CI test $i$ outputs dependent } | \text{ CI oracle $i$ outputs independent} \\ & \hspace{5mm}\wedge \text{other CI oracles \textit{may} output independent}) \\
&= \text{Pr(CI test $i$ outputs dependent } | \text{ CI oracle $i$ outputs independent}) \\
&=p_i,
\end{aligned}
\end{equation}
where the second equality again holds under the assumption that the probability of the output of CI test $i$ only depends on the output of CI oracle $i$ when given the outputs of all CI oracles. We can therefore bound the Type I error rate of \eqref{max_bound_CI} using the p-value of a single CI test for which the CI oracle outputs independent. Nevertheless, in practice, we often do not know for which query the oracle outputs independent in the null. We do however know that at least one unknown CI oracle $i$ outputs independent, so we can bound the Type I error rate of \eqref{max_bound_CI} using the maximum over all of the $m$ CI test p-values:
\begin{align*} \label{max_bound_der2}
&\hspace{5mm}\text{Pr(Type I error)}=\text{Pr(all $m$ CI tests output dependent}|H_0 ) \\ \nonumber
&\leq\text{Pr(CI test $i$ outputs dependent } | \text{ CI oracle $i$ outputs independent} ) \\ \nonumber
&=p_i \leq \max_{j=1,\dots,m} p_j. \numberthis
\end{align*}

Note that we can generalize the above bound to any hypothesis test consisting of a series of logical conjunctions in the alternative and a series of logical disjunctions in the null. Namely:
\begin{equation} \label{max_bound_gen}
\begin{aligned}
H_0: &\bigvee_{i=1}^m \text{oracle } i \text{ outputs } \neg P_i, \\
H_1: &\bigwedge_{i=1}^m \text{oracle } i \text{ outputs } P_i.
\end{aligned}
\end{equation}
We therefore have:	
\begin{align*} \label{max_bound_gen2}
\text{Pr(Type I error)}&=\text{Pr(}\bigwedge_{i=1}^m \text{test $i$ outputs $P_i$} |H_0 ) \\ \nonumber
&\leq\text{Pr(test $i$ outputs $P_i$}  | H_0) \\\nonumber
&=\text{Pr(test $i$ outputs $P_i$}  | \text{oracle $i$ outputs $\neg P_i$ )} \\\nonumber
&= h_i \leq \max_{j=1,\dots,m} h_j,\numberthis
\end{align*}
where the second equality again uses the assumption that the probability of the output of test $i$ only depends on the output of oracle $i$ when given all oracles.

\section{Edge-Specific Hypothesis Tests} \label{sec_4}
We now show how to apply the two upper bounds of the Type I error rate to derive p-value estimates for both the undirected and directed edges in the CPDAG as estimated by PC. Bounding the p-value for each edge therefore amounts to adding up and/or maximizing over the p-values returned from multiple CI tests.

Note that we will sometimes invoke a zero Type II error rate assumption in this section. This assumption is necessary to correctly upper bound the p-values of the edge-specific hypothesis tests of the CPDAG according to the CI tests executed by the PC algorithm. In fact, we can always correctly bound the p-values, if we perform all of the possible CI tests between the considered variables; however, this approach is impractical, since it ignores the efficiencies of the PC algorithm. A more interesting strategy involves designing the edge-specific hypothesis tests so that the p-value bounds are robust to Type II errors as well as redesigning the PC algorithm to catch many Type II errors. We will discuss these approaches in detail in Sections \ref{sec_4_4} and \ref{sec_5}, so we encourage readers to accept the zero Type II error rate assumption for now.

\subsection{Skeleton Discovery} \label{sec_4_1}
We first consider the skeleton discovery phase of the PC algorithm. We wish to test whether each edge is absent in the true skeleton starting from a completely connected undirected graph. This problem has already been investigated in \citep{Li08,Tsamardinos08,Li09,Armen11,Armen14}, but we review it here for completeness. We construct a hypothesis test with the following null and alternative:
\begin{equation} \label{undirected_HT}
\begin{aligned}
H_0: & \text{ } A-B \text{ is absent}, \\
H_1: & \text{ } A-B \text{ is present}.
\end{aligned}
\end{equation}
Now consider the following proposition, where $\bm{Pa}(A)$ denotes the true parents of $A$:
\begin{proposition} \label{thm_adj}
\citep{Spirtes00} Consider a DAG $\mathbb{G}$ which satisfies the global directed Markov property. Moreover, assume that the probability distribution is d-separation faithful. Then, there is an edge between two vertices $A$ and $B$ if and only if $A$ and $B$ are conditionally dependent given any subset of $\bm{Pa}(A) \setminus B$ and any subset of $\bm{Pa}(B) \setminus A$.
\end{proposition}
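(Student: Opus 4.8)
The plan is to prove both directions of the biconditional by contraposition, invoking exactly one of the two stated assumptions for each direction, with essentially all of the work isolated in a purely graphical lemma about d-separation by parent sets. Throughout I read ``conditionally dependent given any subset of $\bm{Pa}(A) \setminus B$ and any subset of $\bm{Pa}(B)\setminus A$'' as: $A$ and $B$ are conditionally dependent given \emph{every} $\bm{S}\subseteq \bm{Pa}(A)\setminus B$ and every $\bm{T}\subseteq \bm{Pa}(B)\setminus A$.

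For the forward direction I would show that if $A$ and $B$ are adjacent, then they are conditionally dependent given every such $\bm{S}$ and $\bm{T}$. The key observation is that an edge between $A$ and $B$ is itself a path with no intermediate vertices and hence no collider; such a path cannot be blocked by any conditioning set that excludes $A$ and $B$. Therefore $A$ and $B$ remain d-connected given any such $\bm{S}$ or $\bm{T}$, and by the contrapositive of d-separation faithfulness (d-connection implies conditional dependence) they are conditionally dependent given each of these sets.

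For the backward direction I would argue the contrapositive: assume $A$ and $B$ are non-adjacent and exhibit a single admissible conditioning set that renders them independent. Since $\mathbb{G}$ is acyclic, at least one of $A,B$ is not a descendant of the other; assume without loss of generality that $B$ is not a descendant of $A$. I claim $A$ and $B$ are then d-separated given $\bm{Pa}(A)$. Because $A$ and $B$ are non-adjacent we have $B\notin\bm{Pa}(A)$, so $\bm{Pa}(A)=\bm{Pa}(A)\setminus B$ is itself an admissible subset. Applying the global directed Markov property to this d-separation yields $A \ci B \mid \bm{Pa}(A)\setminus B$, contradicting conditional dependence given every admissible subset. (If instead $A$ is the non-descendant, the symmetric argument uses $\bm{Pa}(B)\setminus A$.)

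The main obstacle, and the only step requiring genuine graph-theoretic work, is the lemma that if $B$ is not a descendant of $A$ and $A,B$ are non-adjacent, then $A$ and $B$ are d-separated given $\bm{Pa}(A)$. I would prove it by showing every path $p$ between $A$ and $B$ is blocked, inspecting the edge of $p$ incident to $A$. If that edge points into $A$, its other endpoint is a parent of $A$ and a non-collider on $p$, so conditioning on $\bm{Pa}(A)$ blocks $p$ at once. If it points out of $A$, then, since $B$ is not a descendant of $A$, the path must contain a collider; walking along $p$ to the first collider $W$ forces the initial segment $A \to \cdots \to W$ to be a directed path, so $W$ is a descendant of $A$, as is every descendant of $W$. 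For $W$ not to block $p$ we would need $W$ or one of its descendants to lie in $\bm{Pa}(A)$, making some descendant of $A$ also a parent (hence ancestor) of $A$ and creating a directed cycle, contradicting acyclicity. Hence the collider $W$ blocks $p$. Since no path survives, d-separation holds. This collider-plus-acyclicity case is where the care is concentrated; the remainder is bookkeeping.
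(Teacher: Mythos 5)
Your proof is correct. Note, however, that the paper offers no proof of this proposition at all --- it is stated with a citation to Spirtes et al.\ (2000) and used as a known black box --- so there is no in-paper argument to compare against. What you have supplied is the standard textbook derivation: the forward direction via the unblockable single-edge path plus the contrapositive of d-separation faithfulness, and the backward direction via the classical lemma that a vertex is d-separated from its non-descendant non-neighbors given its parent set (essentially the equivalence of the local and global Markov properties), applied with the observation that $B \notin \bm{Pa}(A)$ makes $\bm{Pa}(A)$ itself an admissible conditioning set. Your handling of the one delicate case --- that the first collider on a path leaving $A$ along an out-edge is a descendant of $A$, so neither it nor its descendants can lie in $\bm{Pa}(A)$ without creating a directed cycle --- is exactly where the work lives, and you have it right. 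The only reading choice worth flagging is your interpretation of ``dependent given any subset \dots and any subset'' as universal quantification over each family separately rather than over unions $\bm{S} \cup \bm{T}$; this matches how the paper uses the proposition (e.g., in the bound $p_{A-B}'$, which maximizes over subsets of one parent set \emph{or} the other), and in any case both readings go through with your argument since the separating set you exhibit is $\bm{Pa}(A)\setminus B$ alone.
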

We thus consider the following two scenarios for the undirected edge $A-B$:
\begin{equation} \label{rules2}
\begin{aligned}
1. \hspace{3mm} & \text{If } A \text{ and } B \text{ are conditionally independent given some subset of } \bm{Pa}(A)\setminus B \\ 
& \text{or some subset of } \bm{Pa}(B)\setminus A \text{, then } A-B \text{ is absent.} \\
2. \hspace{3mm} & \text{If } A \text{ and } B \text{ are conditionally dependent given any subset of } \bm{Pa}(A)\setminus B\\ 
& \text{and any subset of } \bm{Pa}(B)\setminus A \text{, then } A-B \text{ is present.} \\  
\end{aligned}
\end{equation}
The following null and alternative are therefore equivalent to \eqref{undirected_HT}, where CI oracles are queried about $A$ and $B$ given all possible subsets of $\bm{Pa}(A)\setminus B$ and all possible subsets of $\bm{Pa}(B)\setminus A$:
\begin{equation} \label{parents1}
\begin{aligned}
H_0: & \text{ At least one CI oracle outputs independent},\\
H_1: & \text{ All CI oracles output dependent}.
\end{aligned}
\end{equation}
Notice that the above hypothesis test is the same as the hypothesis test in \eqref{max_bound_CI}. We can therefore bound the p-value of \eqref{undirected_HT} using:
\begin{equation} \label{undir_bound}
p_{A-B}^\prime \triangleq \max_{i=1,\dots,q'}{p_{A\ci B|\mathbf{R}_i}},
\end{equation}
where $\mathbf{R}_i \subseteq \{\textbf{Pa}(A)\setminus B \}$ or $\mathbf{R}_i \subseteq \{ \textbf{Pa}(B)\setminus A \}$ and $q'$ denotes the total number of such subsets. 

Note that the skeleton discovery phase of the PC algorithm cannot differentiate between the parents and children of a particular vertex using its neighbors. However, we can further bound \eqref{undir_bound} using the following quantity:

\begin{equation} \label{undir_bound2}
p_{A-B}^\prime \leq \max_{i=1,\dots,q}{p_{A\ci B|\mathbf{S}_i}} \triangleq p_{A-B},
\end{equation}
where $\mathbf{S}_i \subseteq \{\textbf{N}(A)\setminus B \}$ or $\mathbf{S}_i \subseteq \{ \textbf{N}(B)\setminus A \}$ and $q$ denotes the total number of such subsets. 

Now assume that the Type II error rate of all CI tests is zero. Then, if the alternative holds for the CI tests (conditional dependence), then the alternative is accepted. Hence, the PC algorithm will not remove any of the edges between $\bm{N}(A)$ and $A$ as well as any of the edges between $\bm{N}(B)$ and $B$. PC therefore performs all necessary CI tests for computing \eqref{undir_bound2}, so upper bounding the Type I error rate for \eqref{undirected_HT} reduces to taking the maximum of the p-values for all of the CI tests performed by PC regarding $A$ and $B$. For example, suppose we measure three random variables $A$, $B$ and $C$. Then we obtain p-values after the PC algorithm tests whether $A \ci B$ and $A \ci B|C$. Suppose these p-values are $(0.03, 0.04)$ so that the PC algorithm with an $\alpha$ threshold of 0.05 determines that $A-B$ is present. The p-value upper bound of \eqref{undirected_HT} thus corresponds to $\max{\{0.03,0.04\}}=0.04$.

\subsection{Detecting V-Structures}
\subsubsection{Deterministic Skeleton}
The hypothesis testing procedure for directed edges is more complicated than the procedure for adjacencies. Edges can be oriented in the PC algorithm according to unshielded v-structures or the orientation rules as described in Section \ref{sec_2_2}. Let us first focus on the former and, for further simplicity, let us also assume that 1) we have access to the ground truth skeleton and 2) no edge is involved in more than one unshielded v-structure (we will later drop these assumptions in Section \ref{sec_4_2_2}). Our task then is to statistically infer the presence of an unshielded v-structure.

We now present the following null and alternative for each unshielded v-structure after finding a triple $A-C-B$ such that $A$ and $B$ are non-adjacent in the skeleton:
\begin{equation} \label{first_v_struc}
\begin{aligned}
H_0: &\text{ Unshielded $A \to C \leftarrow B$ is absent},\\
H_1: &\text{ Unshielded $A \to C \leftarrow B$ is present}.
\end{aligned}
\end{equation}
Next, consider the following proposition:
\begin{proposition} \label{thm_adj2}
\citep{Spirtes00} Consider the same assumptions as Proposition \ref{thm_adj}. Further assume that $A,C$ are adjacent and $C,B$ are adjacent but $A,B$ are non-adjacent. Then, A and B are conditionally independent given some subset of $\bm{Pa}(A)\setminus B$ which does not include $C$ or some subset of $\bm{Pa}(B)\setminus A$ which does not include $C$ if and only if $A \to C \leftarrow B$.
\end{proposition}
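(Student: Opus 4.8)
The plan is to treat this as a biconditional and exploit the fact that $A-C-B$ is an unshielded triple, so the middle vertex $C$ is either a collider (the v-structure $A \to C \leftarrow B$) or a non-collider (one of the chains $A \to C \to B$, $A \leftarrow C \leftarrow B$, or the fork $A \leftarrow C \to B$). The whole argument rests on the elementary d-separation behavior of the length-two path through $C$: a non-collider at $C$ blocks $A-C-B$ exactly when $C$ lies in the conditioning set, whereas a collider at $C$ blocks it exactly when neither $C$ nor a descendant of $C$ lies in the conditioning set. I would combine this dichotomy with d-separation faithfulness, used in its contrapositive form (d-connection implies conditional dependence), and with Proposition \ref{thm_adj}.

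For the ($\Leftarrow$) direction I would assume $A \to C \leftarrow B$ and first observe that $C$ is then a child of both $A$ and $B$, so by acyclicity $C \notin \bm{Pa}(A) \cup \bm{Pa}(B)$. Since $A$ and $B$ are non-adjacent, reading Proposition \ref{thm_adj} in the negative gives that $A$ and $B$ are not conditionally dependent given every candidate subset; hence there is a separating set $\mathbf{S}$ that is a subset of $\bm{Pa}(A)\setminus B$ or of $\bm{Pa}(B)\setminus A$ with $A \ci B \mid \mathbf{S}$. Because $C$ is not a parent of either $A$ or $B$, this $\mathbf{S}$ cannot contain $C$, so it is already a witness of the required form. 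The collider orientation therefore supplies the excluding-$C$ separating set for free, and this is the step I would keep shortest.

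For the ($\Rightarrow$) direction I would argue by contraposition: assume $C$ is a non-collider and show that $A$ and $B$ are conditionally dependent given \emph{every} subset of $\bm{Pa}(A)\setminus B$ and every subset of $\bm{Pa}(B)\setminus A$ that omits $C$. Fix any conditioning set $\mathbf{S}$ with $C \notin \mathbf{S}$. On the path $A-C-B$ the middle vertex is a non-collider not contained in $\mathbf{S}$, so this single path is active, which already d-connects $A$ and $B$ given $\mathbf{S}$; faithfulness then forces conditional dependence given $\mathbf{S}$. Since $\mathbf{S}$ ranges over all the prescribed subsets that exclude $C$, no separating set of the required form can exist, which is exactly the contrapositive of the claim.

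The main thing to get right, and the only place the argument can slip, is the d-connection step in ($\Rightarrow$): one must note that a \emph{single} active path suffices for d-connection, so the possible presence of other paths between $A$ and $B$ is irrelevant and cannot restore independence. The companion subtlety is recording in ($\Leftarrow$) that the collider orientation automatically removes $C$ from both candidate parent sets, which is what makes the ``excluding $C$'' qualifier vacuous there rather than an additional constraint to verify.
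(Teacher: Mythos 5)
Your proof is correct. Note that the paper itself gives no proof of this proposition --- it is cited to Spirtes et al.\ (2000) --- so there is nothing to compare line by line; but your argument is sound and self-contained, and it is structurally parallel to the proof the paper \emph{does} supply for the companion proposition (the ``containing $C$'' version): both reduce to a case analysis on the orientation of the unshielded triple, invoke Proposition \ref{thm_adj} to produce or exclude a separating set among the parent subsets, and use d-separation faithfulness in contrapositive form to convert an open path into conditional dependence. The two points you flag as the load-bearing steps are exactly right: in the ($\Leftarrow$) direction, $A \to C \leftarrow B$ forces $C \notin \bm{Pa}(A) \cup \bm{Pa}(B)$ by acyclicity, so the separating set guaranteed by the negation of Proposition \ref{thm_adj} excludes $C$ automatically; and in the ($\Rightarrow$) direction, the single length-two path $A - C - B$ with a non-collider $C$ outside the conditioning set is active on its own, so d-connection (hence dependence, by faithfulness) holds regardless of any other paths. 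If anything, your contrapositive argument proves slightly more than needed --- dependence given \emph{any} conditioning set omitting $C$, not just the prescribed parent subsets --- which is harmless.
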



The following null and alternative is therefore equivalent to \eqref{first_v_struc}:
\begin{equation} \label{first_v_struc_alt}
\begin{aligned}
H_0: &\text{ $A$ and $B$ are conditionally dependent given any subset of $\bm{Pa}(A)\setminus B$} \\ &\text{  which does not include $C$ and any subset of $\bm{Pa}(B)\setminus A$ which} \\ &\text{  does not include $C$,} \\
H_1: &\text{ $A$ and $B$ are conditionally independent given some subset of $\bm{Pa}(A)\setminus B$} \\ &\text{ which does not include $C$ or some subset of $\bm{Pa}(B)\setminus A$ which} \\ &\text{  does not include $C$.}
\end{aligned}
\end{equation}
The above alternative is reminiscent of the way in which PC determines the presence of an unshielded v-structure according to Algorithm \ref{pc_2} in the Appendix; specifically, if $C$ is not in the set which renders $A$ and $B$ conditionally independent, then $C$ in $A-C-B$ must be a collider. We however cannot bound the p-value of \eqref{first_v_struc_alt} using CI tests, because conditional dependence is in the null and conditional independence is in the alternative, as opposed to vice versa. As a result, we also consider the following proposition:
\begin{proposition}
Consider the same assumptions as Proposition \ref{thm_adj2}. Then, $A$ and $B$ are conditionally dependent given any subset of $\bm{Pa}(A)\setminus B$ containing $C$ and any subset of $\bm{Pa}(B)\setminus A$ containing $C$ if and only if $A \to C \leftarrow B$.
\end{proposition}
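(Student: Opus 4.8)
The plan is to prove the biconditional one implication at a time. Write $Q$ for the claim ``$A \to C \leftarrow B$'' and $P$ for the statement that $A$ and $B$ are conditionally dependent given every subset of $\bm{Pa}(A)\setminus B$ containing $C$ and every subset of $\bm{Pa}(B)\setminus A$ containing $C$. It will emerge that $Q \Rightarrow P$ is essentially free, whereas $P \Rightarrow Q$ carries all the content and is best attacked through its contrapositive, paralleling the reasoning behind Proposition \ref{thm_adj2}.

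For $Q \Rightarrow P$ I would observe that if $A \to C \leftarrow B$, then $C$ is a common child of $A$ and $B$, so acyclicity forces $C \notin \bm{Pa}(A)$ and $C \notin \bm{Pa}(B)$. Hence no subset of $\bm{Pa}(A)\setminus B$ or of $\bm{Pa}(B)\setminus A$ contains $C$, and the universally quantified statement $P$ holds vacuously. (The substantive intuition --- that conditioning on the collider $C$ activates the path $A \to C \leftarrow B$ and so induces dependence --- is what later justifies the neighbor-based test, in which $C$ appears as a neighbor rather than a parent; at the level of true parents the claim is simply vacuous.)

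For $P \Rightarrow Q$ I would argue the contrapositive: assuming $C$ is a non-collider on the triple $A - C - B$, I would exhibit a conditioning set of the required form under which $A$ and $B$ are independent, thereby falsifying $P$. First, since $A$ and $B$ are non-adjacent, Proposition \ref{thm_adj} yields a set $S^\ast$ with $A \ci B \mid S^\ast$ and either $S^\ast \subseteq \bm{Pa}(A)\setminus B$ or $S^\ast \subseteq \bm{Pa}(B)\setminus A$. Second, d-separation faithfulness upgrades this conditional independence to the graphical fact that $S^\ast$ d-separates $A$ and $B$. Third, because $C$ is a non-collider, the length-two path $A - C - B$ is blocked by $S^\ast$ if and only if $C \in S^\ast$; since $S^\ast$ d-separates $A$ and $B$, it must block this path in particular, forcing $C \in S^\ast$. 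Therefore $S^\ast$ is a subset of the required form that contains $C$ yet satisfies $A \ci B \mid S^\ast$, contradicting $P$.

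I expect the third step to be the main obstacle: the delicate point is to show that $C$ must lie inside the separating set, not merely that a separating set exists. The argument hinges on two facts --- that d-separation demands every path between $A$ and $B$, including the specific path $A - C - B$, be blocked, and that a non-collider occupying the sole interior vertex of a length-two path can be blocked only by conditioning on it. Any additional paths between $A$ and $B$ are irrelevant here, which keeps the reasoning purely local. Finally, I note that only faithfulness (to convert independence into d-separation) and the global directed Markov property (entering through Proposition \ref{thm_adj}) are actually used; the forward direction requires nothing beyond acyclicity.
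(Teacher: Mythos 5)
Your proof is correct, and for the direction that carries all the content ($P \Rightarrow Q$) it is essentially the paper's argument in different clothing: the paper proceeds by contradiction and enumerates the three non-collider configurations of $C$, observing in each that the d-separating set supplied by Proposition \ref{thm_adj} and d-separation faithfulness ``must include $C$ \dots otherwise $A$ and $B$ would be d-connected''; your step three compresses those three cases into the single observation that a non-collider interior vertex of the length-two path $A-C-B$ blocks that path only when conditioned on, which is the same blocking argument stated once instead of thrice. Where you genuinely diverge is the easy direction. The paper argues substantively that $A \to C \leftarrow B$ d-connects $A$ and $B$ given any conditioning set of the stated form containing the collider $C$, then applies the contrapositive of d-separation faithfulness; you instead note that acyclicity prevents a common child $C$ from lying in $\bm{Pa}(A)$ or $\bm{Pa}(B)$, so the universally quantified statement holds vacuously. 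Your reading is the literal one and is perfectly valid --- it in fact exposes that the paper's own backward argument is quantifying over an empty family of subsets when the sets really are the true parents. The paper's substantive version is nonetheless the one that does work downstream, where the proposition is relaxed from parent sets to neighbor sets in \eqref{first_v_struc_bound} and the conditioning sets $\bm{T}_i$ genuinely contain $C$; your parenthetical remark shows you already see this, so nothing is missing.
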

\begin{proof}
First notice that $\bm{Pa}(A)=\{\bm{Pa}(A)\setminus B\}$ and $\bm{Pa}(B)=\{\bm{Pa}(B)\setminus A\}$, since $A$ and $B$ are non-adjacent. As a result, we can instead prove that the if and only if statement holds for $\bm{Pa}(A)$ and $\bm{Pa}(B)$ without loss of generality.

For the forward direction, suppose $A$ and $B$ are conditionally dependent given any subset of $\bm{Pa}(A)$ containing $C$ and any subset of $\bm{Pa}(B)$ containing $C$. Then $A$ and $B$ are d-connected given any subset of $\bm{Pa}(A)$ containing $C$ and any subset of $\bm{Pa}(B)$ containing $C$ by the global directed Markov property. Clearly, $C\in \bm{N}(A)$ and $C\in \bm{N}(B)$, so $C$ must either be a parent of $A$ and a parent of $B$, a child of $A$ and a parent of $B$, a parent of $A$ and a child $B$, or a child of $A$ and a child of $B$. Note that $A$ and $B$ are non-adjacent, so $A$ and $B$ are d-separated given some subset of $\bm{Pa}(A)$ or some subset of $\bm{Pa}(B)$ by Proposition \ref{thm_adj} and d-separation faithfulness. Moreover, the subset must include $C$ if $C$ is a parent of $A$ and a parent of $B$, a child of $A$ and a parent of $B$, or a parent of $A$ and a child $B$; otherwise, $A$ and $B$ would be d-connected. As a result, in those three situations, we arrive at the contradiction that $A$ and $B$ are d-separated given some subset of $\bm{Pa}(A)$ containing $C$ or some subset of $\bm{Pa}(B)$ containing $C$. We conclude that $C$ must be a child of $A$ and a child of $B$.

For the other direction, if $A\to C\leftarrow B$ holds, then $A$ and $B$ are d-connected given any subset of $\bm{Pa}(A)$ containing $C$ and any subset of $\bm{Pa}(B)$ containing $C$. D-separation faithfulness then implies that $A$ and $B$ are conditionally dependent given any subset of $\bm{Pa}(A)$ containing $C$ and any subset of $\bm{Pa}(B)$ containing $C$.
\end{proof}
We can thus equivalently write \eqref{first_v_struc_alt} as:
\begin{equation} \label{first_v_struc_alt2}
\begin{aligned}
H_0: & \text{ $A$ and $B$ are conditionally independent given some subset of $\bm{Pa}(A)\setminus B$} \\ &\text{ containing $C$ or some subset of $\bm{Pa}(B)\setminus A$ containing $C$},\\
H_1: & \text{ $A$ and $B$ are conditionally dependent given any subset of $\bm{Pa}(A)\setminus B$} \\ &\text{ containing $C$ and any subset of $\bm{Pa}(B)\setminus A$ containing $C$}.
\end{aligned}
\end{equation}
We can bound the Type I error rate of the above hypothesis test by taking the maximum p-value over certain CI tests:
\begin{equation} \nonumber
p_{\gamma_{AB|C}}'\triangleq \max_{i=1,..,m^\prime} p_{A\ci B|\bm{M}_i },
\end{equation}
where $\bm{M}_i$ denotes a subset of $\bm{Pa}(A)\setminus B$ containing $C$ or a subset of $\bm{Pa}(B)\setminus A$ containing $C$, and $m^\prime$ is the total number of subsets $\bm{M}_i$. Of course, in practice, we do not know which vertices are the parents. However, we can also upper bound \eqref{first_v_struc_alt2} as follows:
\begin{equation} \label{first_v_struc_bound}
p_{\gamma_{AB|C}}'\leq \max_{i=1,..,m} p_{A\ci B|\bm{T}_i } \triangleq p_{\gamma_{AB|C} },
\end{equation}
where $\bm{T}_i$ denotes a subset of $\bm{N}(A)\setminus B$ containing $C$ or a subset of $\bm{N}(B)\setminus A$ containing $C$, and $m$ denotes the total number of subsets $\bm{T}_i$. Note that we do not need the zero Type II error rate assumption for computing \eqref{first_v_struc_bound}, since we assume that the skeleton is provided.

\subsubsection{Inferred Skeleton} \label{sec_4_2_2}
We have considered orienting the colliders, if we have access to the ground truth skeleton. We now consider the more complex problem of orienting the colliders, if we must also statistically infer the skeleton. 

We again consider the following null and alternative:
\begin{equation} \label{v_struc_inf}
\begin{aligned}
H_0: &\text{ Unshielded $A \to C \leftarrow B$ is absent},\\
H_1: &\text{ Unshielded $A \to C \leftarrow B$ is present}.
\end{aligned}
\end{equation}
Now, the PC algorithm determines that the alternative holds, if all of the following conditions are true:
\begin{equation} \label{v_struc_conds}
\begin{aligned}
1. \hspace{3mm} & A \text{ and } C \text{ are conditionally dependent given any subset of } \bm{Pa}(A)\setminus C \\ & \text{and any subset of } \bm{Pa}(C)\setminus A. \\
2. \hspace{3mm} & B \text{ and } C \text{ are conditionally dependent given any subset of } \bm{Pa}(B)\setminus C \\ & \text{and any subset of } \bm{Pa}(C)\setminus B. \\
3. \hspace{3mm} & A \text{ and } B \text{ are conditionally dependent given any subset of } \bm{Pa}(A)\setminus B \\ & \text{containing $C$ and any subset of } \bm{Pa}(B)\setminus A \text{ containing $C$}.
\end{aligned}
\end{equation}
We therefore have the following equivalent form of the null and alternative as in \eqref{v_struc_inf}, if we assume $A$ and $B$ are non-adjacent:
\begin{equation} \label{v_struc_conds_hyp}
\begin{aligned}
H_0: &\text{ At least one condition from \eqref{v_struc_conds} does not hold}, \\
H_1: &\text{ All conditions from \eqref{v_struc_conds} hold}.
\end{aligned}
\end{equation}
Note that the non-adjacency assumption is reasonable because we did not have enough statistical evidence to invalidate the assumption when we executed \eqref{undirected_HT}. Indeed, non-adjacencies are always assumed unless the data suggests that the null of \eqref{undirected_HT} is unlikely. Now, the alternative of \eqref{v_struc_conds_hyp} is a series of three logical conjunctions, and the null is a series of three logical disjunctions as in \eqref{max_bound_gen}, so the Type I error rate of \eqref{v_struc_conds_hyp} can be bounded using the intersection bound:
\begin{equation} \label{v_bound_cond}
\begin{aligned}
\text{Pr(Conditions 1, 2, 3 $|H_0$)} &\leq \text{Pr(Any one condition $|H_0$)} \\
&\leq \max\{h_1,h_2,h_3\}.
\end{aligned}
\end{equation}
We will be using shorthand from here on. We write \eqref{v_struc_conds_hyp} equivalently as:
\begin{equation} \label{v_struc_symb}
\begin{aligned}
H_0: &\text{ }\neg(A-C) \vee \neg (B-C) \vee \neg \gamma_{AB|C}, \\
H_1: &\text{ }(A-C)\wedge (B-C)\wedge \gamma_{AB|C},
\end{aligned}
\end{equation}
where $A-C$, $B-C$, and $\gamma_{AB|C}$ represent Condition 1, 2 and 3 from \eqref{v_struc_conds}, respectively. We therefore have a p-value bound of \eqref{v_struc_inf} similar to \eqref{v_bound_cond}:
\begin{align*} \label{one_v_bound_f}
&\hspace{5mm} \text{Pr}\Big((A-C)\wedge (B-C)\wedge \gamma_{AB|C} |H_0 \Big) \\ \nonumber
&\leq \text{Pr}\Big(A-C\big|\neg(A-C)\Big)\\ \nonumber
&\leq \max\Big\{\text{Pr}\Big(A-C\big|\neg(A-C)\Big), \text{Pr}\Big(B-C\big|\neg(B-C)\Big), \text{Pr}\Big(\gamma_{AB|C}\big|\neg\gamma_{AB|C}\Big)\Big\} \\ \nonumber
&\leq \max\{p_{A-C},p_{B-C},p_{\gamma_{AB|C} } \}. \numberthis
\end{align*}
Notice that computing $p_{\gamma_{AB|C} }$ requires $\bm{N}(A)$ and $\bm{N}(B)$, not just their respective empirical estimates $\widehat{\bm{N}}(A)$ and $\widehat{\bm{N}}(B)$ which PC can discover. However, we can invoke a zero Type II error rate assumption in order to ensure that $\bm{N}(A)\subseteq \widehat{\bm{N}}(A)$ and $\bm{N}(B)\subseteq \widehat{\bm{N}}(B)$ as explained in detail in Section 4.4, so $p_{\gamma_{AB|C} }$ can still be upper bounded. The assumption also ensures that we can upper bound $p_{A-C}$ and $p_{B-C}$ according to Section 4.1. We conclude that a zero Type II error rate ensures that \eqref{one_v_bound_f} can be computed.

\begin{figure}[b!]
\centering
\begin{tikzpicture} [
            > = stealth, 
            auto,
            semithick 
        ]
    \node[shape=circle,draw=black] (A) at (0,0) {$A$};
    \node[shape=circle,draw=black] (C) at (0,-1.2) {$C$};
    \node[shape=circle,draw=black] (B1) at (-1.2,0) {$\bm{B}_1$};
    \node[shape=circle,draw=black] (B2) at (1.2,0) {$\bm{B}_2$};

    \path [->] (A) edge node[left] {}(C);
    \path [->] (B1) edge node[left] {}(C);
    \path [->] (B2) edge node[left] {}(C);
 
\end{tikzpicture}
\caption{Here, one can orient the edge $A-C$ according to the two unshielded v-structures $A \to C \leftarrow \bm{B}_1$ and $A \to C \leftarrow \bm{B}_2$.}
\label{fig_v_struc_both}
\end{figure}
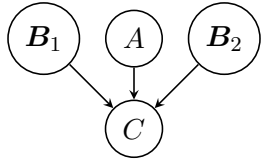

Next, consider the situation where PC can orient any one edge by using more than one unshielded v-structure. For example, consider the DAG in Figure \ref{fig_v_struc_both}. In this case, PC can orient $A-C$ by using either $\bm{B}_1\to C$ or $\bm{B}_2\to C$ (or both); we may therefore want to take both situations into account. Note that the original PC algorithm always orients an edge according to one v-structure which it picks arbitrarily according to the ordering of its computations. We thus only require the bound \eqref{one_v_bound_f} in this case. However, we will propose a modified PC algorithm in Section \ref{sec_5} which takes into account all possible ways to orient one edge. Now, we can use the following null and alternative for Figure \ref{fig_v_struc_both} when assuming that both $A$ and $\bm{B}_1$ and $A$ and $\bm{B}_2$ are non-adjacent:
\begin{equation} \label{mult_v_bound}
\begin{aligned}
H_0: &\text{ } \neg(A-C)\vee\Big([\neg(\bm{B}_1-C)\vee\neg\gamma_{A\bm{B}_1 |C} ]\wedge[\neg(\mathbf{B}_2-C)\vee\neg\gamma_{A\bm{B}_2 |C} ]\Big), \\
H_1: &\text{ } (A-C)\wedge\Big([(\bm{B}_1-C)\wedge\gamma_{A\bm{B}_1 |C} ]\vee[(\bm{B}_2-C)\wedge\gamma_{A\bm{B}_2 |C} ]\Big).
\end{aligned}
\end{equation}
We can therefore bound the Type I error rate of \eqref{mult_v_bound} as follows, where $\mathcal{G}=\neg(A-C)$ and $\mathcal{H}=[\neg(\bm{B}_1-C)\vee\neg\gamma_{A\bm{B}_1 |C} ]\wedge[\neg(\bm{B}_2-C)\vee\neg\gamma_{A\bm{B}_2 |C} ]$, $\mathcal{H}_1=\neg(\bm{B}_1-C)\vee\neg\gamma_{A\bm{B}_1 |C} $, and $\mathcal{H}_2=\neg(\bm{B}_2-C)\vee\neg\gamma_{A\bm{B}_2 |C} $:
\begin{equation}
\begin{aligned}
&\hspace{5mm}\text{Pr}\Big((A-C)\wedge \Big([(\bm{B}_1-C)\wedge\gamma_{A\bm{B}_1 |C} ]\vee[(\bm{B}_2-C)\wedge\gamma_{A\bm{B}_2 |C} ]\Big)\big|H_0 \Big) \\
&\leq \max\Big\{\text{Pr}(A-C|\mathcal{G} ),\text{Pr}\Big(\Big[(\bm{B}_1-C)\wedge\gamma_{A\bm{B}_1 |C} \Big]\vee\Big[(\bm{B}_2-C)\wedge\gamma_{A\bm{B}_2|C} \Big]\big|\mathcal{H} \Big) \Big\} \\
&\leq \max\Big\{\text{Pr}(A-C|\mathcal{G} ),\text{Pr}\Big((\bm{B}_1-C)\wedge\gamma_{A\bm{B}_1 |C}|\mathcal{H} \Big)+\text{Pr}\Big((\bm{B}_2-C)\wedge\gamma_{A\bm{B}_2|C} |\mathcal{H} \Big) \Big\} \\
&= \max\Big\{\text{Pr}(A-C|\mathcal{G} ),\text{Pr}\Big((\bm{B}_1-C)\wedge\gamma_{A\bm{B}_1 |C}|\mathcal{H}_1 \Big)+\text{Pr}\Big((\bm{B}_2-C)\wedge\gamma_{A\bm{B}_2|C} |\mathcal{H}_2 \Big) \Big\} \\
&\leq \max\Big\{\text{Pr}(A-C|\mathcal{G} ),\max\Big\{\text{Pr}\Big(\bm{B}_1-C \big| \neg(\bm{B}_1-C) \Big),\text{Pr}(\gamma_{A\bm{B}_1 |C} \big| \neg\gamma_{A\bm{B}_1 |C} )  \Big\} \\ & \hspace{4.5mm}+\max\Big\{\text{Pr}\Big(\bm{B}_2-C \big| \neg(\bm{B}_2-C) \Big),\text{Pr}(\gamma_{A\bm{B}_2 |C} \big| \neg\gamma_{A\bm{B}_2 |C} )  \Big\} \Big\} \\
&\leq \max \Big\{p_{A-C},\max\{p_{\bm{B}_1-C},p_{\gamma_{A\bm{B}_1 |C} } \}+\max\{p_{\bm{B}_2-C},p_{\gamma_{A\bm{B}_2 |C} } \} \Big\}.
\end{aligned}
\end{equation}
More generally, for an arbitrary number, say $j$, of multiple possible ways to orient $A-C$ by unshielded v-structures, we have:
\begin{align*} \label{bound_f_v}
&\text{Pr}\Big((A-C)\wedge\Big([(\bm{B}_1-C)\wedge\gamma_{A\bm{B}_1 |C} ]\vee ... \vee[(\bm{B}_j-C)\wedge\gamma_{A\bm{B}_j |C} ]\Big)\big|H_0 \Big) \\ \nonumber
&\leq \max\Big\{p_{A-C},\sum_{i=1}^j \max\{p_{\bm{B}_i-C},p_{\gamma_{A\bm{B}_i |C} } \} \Big\}, \numberthis
\end{align*}
assuming that $\bm{B}_1,\dots,\bm{B}_j$ are all non-adjacent to $A$.

\subsection{Orientation Rules}

We now consider bounding the p-values of edges which are oriented using the orientation rules of the PC algorithm. Recall from \eqref{rules1} that the PC algorithm only requires the repeated application of three orientation rules to be complete. We analyze these three orientation rules in separate subsections.

\subsubsection{First orientation rule}
We can construct the hypothesis test for the first orientation rule as follows according to the sufficient conditions of the first rule in \eqref{rules1}:
\begin{equation} \label{rule_one_test}
\begin{aligned}
H_0: &\text{ }\neg(A-B)\vee\neg(C\to A), \\
H_1: &\text{ }(A-B)\wedge(C\to A).
\end{aligned}
\end{equation}
We again also assume that $C$ and $B$ are non-adjacent. Now the PC algorithm determines that the alternative holds, if all of the following conditions are true:
\begin{enumerate}
\item $A-B:$  $A$ and $B$ are conditionally dependent given any subset of $\bm{Pa}(A)\setminus B$ and any subset of $\bm{Pa}(B)\setminus A$.
\item $C\to A:$ An edge is oriented from $C$ to $A$ under two scenarios. In the first, the edge is oriented because $A$ is the collider in an unshielded v-structure. In the second, the edge is oriented due to the previous application of an orientation rule.
\end{enumerate}
We thus have a logical conjunction and can bound the Type I error rate using the intersection bound:
\begin{equation} \nonumber
\text{Pr}((A-B)\wedge(C\to A)|H_0 )\leq \max\{p_{A-B},p_{C\to A} \},
\end{equation}
where $p_{C\to A}$ refers to the p-value bound for the hypothesis test of an unshielded v-structure or a previously applied orientation rule. Of course, $p_{C\to A}$ will be the former when the PC algorithm begins to execute the orientation rules. More generally, for $\bm{C}_i \to A$ that can orient $A-B$ where $i=1,...,j$, we have:
\begin{align*} \label{bound_o1}
\text{Pr}\Big((A-B)\wedge \Big[(\bm{C}_1\to B)\vee\dots\vee(\bm{C}_j\to A)\Big]|H_0 \Big) \\ \nonumber
\leq \max\Big\{p_{A-B},\sum_{i=1}^j p_{\bm{C}_i\to A} \Big\}, \numberthis
\end{align*}
where we require that $\bm{C}_1,\dots,\bm{C}_j$ are all non-adjacent to $B$.

\subsubsection{Second orientation rule}
We have the following hypothesis test according to the sufficient conditions of the second rule in (1):
\begin{equation} \label{rule_two_test}
\begin{aligned}
H_0: &\text{ }\neg (A-B)\vee \neg(A\to C\to B), \\
H_1: &\text{ }(A-B)\wedge (A\to C\to B).
\end{aligned}
\end{equation}
Hence, by conjunction:
\begin{equation} \nonumber
\text{Pr}((A-B)\wedge (A\to C\to B)|H_0 )\leq \max\{p_{A-B},p_{A\to C\to B} \},
\end{equation}
where  $p_{A\to C\to B}\leq\max\{p_{A\to C},p_{C\to B}\}$. The above Type I error rate can therefore be further upper bounded by $\max\{p_{A-B},p_{A\to C},p_{C\to B} \}$. More generally, we have:
\begin{align*} \label{bound_o2}
\text{Pr}\Big((A-B)\wedge \Big[(A\to \bm{C}_1\to B)\vee ...\vee(A\to \bm{C}_j\to B)\Big]|H_0 \Big) \\ \nonumber
\leq \max\Big\{p_{A-B},\sum_{i=1}^j p_{A\to \bm{C}_i\to B} \Big\}\leq\max\Big\{p_{A-B},\sum_{i=1}^j \max\{p_{A\to \bm{C}_i },p_{\bm{C}_i\to B}\} \Big\}. \numberthis
\end{align*}

\subsubsection{Third orientation rule}
We have the following null and alternative by the sufficient conditions of the third rule in \eqref{rules1}, assuming that $C$ and $D$ are non-adjacent:
\begin{equation} \label{rule_three_test}
\begin{aligned}
H_0: &\text{ } \neg(A-B)\vee\neg(A-C\to B)\vee\neg(A-D\to B), \\
H_1: &\text{ } (A-B)\wedge(A-C\to B)\wedge(A-D\to B).
\end{aligned}
\end{equation}
We can bound the Type I error rate of the above hypothesis test as follows:
\begin{equation} \nonumber
\begin{aligned}
&\hspace{5mm} \text{Pr}\Big((A-B)\wedge (A-C\to B)\wedge (A-D\to B)|H_0 \Big) \\
&\leq \max\{p_{A-B},p_{A-C\to B},p_{A-D\to B} \} \\
&\leq\max\Big\{p_{A-B},\max\{p_{A-C},p_{C\to B} \},\max\{p_{A-D},p_{D\to B}\}\Big\}.
\end{aligned}
\end{equation}
The general case is slightly more complicated than the first and second orientation rules. In this case, we need to control the Type I error rate of accepting at least two paths as opposed to one. Let the set $\bm{D}$ include all three-node paths from $A$ to $B$ with the first edge undirected from $A$ to a middle vertex and the second edge directed from the middle vertex to $B$ such that the $i^{\text{th}}$ element of $\bm{D}$ is:
\begin{equation} \nonumber
\bm{D}_i\triangleq {A-\bm{C}_i\to B}.
\end{equation}
Let us suppose $\bm{D}$ has a total of $n$ elements and assume that no middle vertex $\bm{C}_i$ is adjacent to any other middle vertex. Now, let $\bm{D}'$ be the set containing all of the $n$ choose 2 elements of $\bm{D}$. The $i^{\text{th}}$ element in $\bm{D}'$ is therefore:
\begin{equation} \nonumber
\bm{D}_i'\triangleq \{A-\bm{C}_k\to B,A-\bm{C}_l\to B\},
\end{equation}
where $k$ and $l$ are the distinct indices represented the two chosen middle vertices. Let $\bm{D}_{i,1}'$ and $\bm{D}_{i,2}'$ be the first and second elements in $\bm{D}_i'$, respectively. Also let $r={n \choose 2}$. We then have:
\begin{equation} \nonumber
\begin{aligned}
&\hspace{5mm} \text{Pr}\Big((A-B)\wedge\Big[ (\bm{D}_{1,1}'\wedge \bm{D}_{1,2}' ) \vee ... \vee (\bm{D}_{r,1}'\wedge \bm{D}_{r,2}' )\Big]\big|H_0 \Big)\\
&\leq \max\Big\{p_{ A-B},\sum_{i=1}^r\text{Pr}(\bm{D}_i' |H_0)\Big\},\\
\end{aligned}
\end{equation}
where Pr$(\bm{D}_i' | H_0 )\triangleq p_{\{A-\bm{C}_k\to B,A-\bm{C}_l\to B\}}$ and is bounded as follows:
\begin{equation} \nonumber
\begin{aligned}
\hspace{5mm} \text{Pr}(\bm{D}_i' | H_0 ) &\leq \max\{p_{A-\bm{C}_k\to B},p_{A-\bm{C}_l\to B} \}\\
&\leq \max\{p_{A-\bm{C}_k },p_{\bm{C}_k\to B},p_{A-\bm{C}_l },p_{\bm{C}_l\to B}\}\\ &\triangleq \text{\"{Pr}} (\bm{D}_i' | H_0 ),
\end{aligned}
\end{equation}
We therefore have:
\begin{align*} \label{bound_o3}
&\hspace{5mm} \text{Pr}\Big((A-B)\wedge{ \Big[ (\bm{D}_{1,1}'\wedge \bm{D}_{1,2}' ) \vee ... \vee (\bm{D}_{r,1}' \wedge \bm{D}_{r,2}' )\Big]}\big|H_0 \Big) \\ \nonumber
&\leq \max\Big\{p_{A-B},\sum_{i=1}^r \text{\"{Pr}} ̈⁡(\bm{D}_i' | H_0 ) \Big\}. \numberthis
\end{align*}

\subsection{Summary and Analysis of the Bounds} \label{sec_4_4}

We derived several bounds for edge orientation as summarized in Table 1. We created the bounds by engineering specific hypothesis tests and successively applying the union and intersection bounds accordingly. Note that $j$ and $r$ are usually very small in sparse graphs. 

\begin{table*}[b!]
\caption{P-value bounds for all of the edge types in a CPDAG. Note that $\bm{S}_i \subseteq \{\bm{N}(A)\setminus B \}$ or $\bm{S}_i \subseteq \{\bm{N}(B)\setminus A \} $.}  \label{table_bounds}
\begin{center}
\begin{tabular}{|c c c|}
\hline
& & \\[-6pt]
\textbf{Edge Type} & \textbf{P-Value Bound} & \textbf{Equation Num.} \\[5pt]
\hline
& & \\[-6pt]
Undirected & $\max_i{p_{A\ci B|\bm{S}_i}}$ & \eqref{undir_bound2}\\[7pt]
Unshielded v-structure & $\max\Big\{p_{A-C},\sum_{i=1}^j \max\{p_{\bm{B}_i-C},p_{\gamma_{A\bm{B}_i |C} } \} \Big\}$ & \eqref{bound_f_v}\\[7pt]
First orientation rule & $\max\Big\{p_{A-B},\sum_{i=1}^j p_{\bm{C}_i\to A} \Big\}$ & \eqref{bound_o1}\\[7pt]
Second orientation rule & $\max\Big\{p_{A-B},\sum_{i=1}^j \max\{p_{A\to \bm{C}_i },p_{\bm{C}_i\to B}\} \Big\}$ & \eqref{bound_o2}\\ [7pt]
Third orientation rule & $\max\Big\{p_{A-B},\sum_{i=1}^r \text{\"{Pr}} ̈⁡(\bm{D}_i' | H_0 ) \Big\}$ & \eqref{bound_o3}\\ [7pt]
\hline
\end{tabular}
\end{center}
\end{table*}

One may now wonder whether PC can actually control the bounds listed in Table \ref{table_bounds} (we say that a quantity can be \textit{controlled}, if the quantity can be upper bounded). Recall that we provided a rough, affirmative answer to the question in Sections \ref{sec_4_1} and \ref{sec_4_2_2} by assuming a zero Type II error rate. We now spell out a more detailed answer via a theorem whose proof builds on the argument of Theorem 4 in \citep{Armen14}.
\begin{theorem} \label{theorem_type2}
Suppose that the PC algorithm is applied to a sample from $\mathbb{P}$ represented by DAG $\mathbb{G}$. If we have:
\begin{enumerate}
\item $\mathbb{P}$ is d-separation faithful to $\mathbb{G}$,
\item The Type II error rate is zero,
\item The PC algorithm also tests whether any two non-adjacent vertices $A$, $B$ with common neighbor $C$ are conditionally dependent given any subset of $\bm{Pa}(A)\setminus B$ containing $C$ and any subset of $\bm{Pa}(B)\setminus A$ containing $C$,
\end{enumerate}
then all of the p-value bounds in Table \ref{table_bounds} can be controlled using the p-values of the CI tests executed by PC.
\end{theorem}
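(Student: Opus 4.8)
The plan is to prove the theorem by first establishing a single structural fact and then verifying the five bounds of Table~\ref{table_bounds} one family at a time, treating the three orientation-rule bounds by induction on the order in which PC orients edges. The structural fact is that Assumptions~1 and~2 together force $\bm{N}(X)\subseteq\widehat{\bm{N}}(X)$ for every vertex $X$; that is, no true adjacency is ever deleted. To see this, note that if $A$ and $B$ are truly adjacent then the single edge joining them is an active path under every conditioning set (it has no intermediate vertex, so the d-connection conditions hold vacuously), so $A$ and $B$ are d-connected, hence conditionally dependent by faithfulness, given \emph{every} subset of vertices. A zero Type~II error rate then forces every CI test PC runs on the pair to report ``dependent,'' so PC never finds the separating set it would need to remove the edge. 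This containment is precisely what lets PC's tests over its \emph{estimated} neighbor sets stand in for the tests over \emph{true} neighbor sets that appear in the bounds.

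First I would dispatch the undirected bound $\max_i p_{A\ci B\mid\bm{S}_i}$ of \eqref{undir_bound2}. Because a surviving edge is never found independent, PC never terminates the search for that pair early and therefore exhausts every subset of $\widehat{\bm{N}}(A)\setminus B$ and $\widehat{\bm{N}}(B)\setminus A$; since $\bm{N}(X)\subseteq\widehat{\bm{N}}(X)$, this covers every true-neighbor subset, and the maximum over the (possibly larger) estimated collection remains a valid upper bound. Next I would turn to the v-structure bound \eqref{bound_f_v}. Its constituents $p_{A-C}$ and $p_{\bm{B}_i-C}$ are undirected bounds already handled, and $C\in\bm{N}(A)\subseteq\widehat{\bm{N}}(A)$ and $C\in\bm{N}(B)\subseteq\widehat{\bm{N}}(B)$ because $A-C$ and $B-C$ are true edges, so $C$ is available as a conditioning vertex. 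The one genuinely new ingredient is $p_{\gamma_{A\bm{B}_i\mid C}}$, a maximum of p-values over conditioning sets that \emph{contain} $C$. Standard PC does not compute this: for a non-adjacent pair in an unshielded v-structure the separating set must exclude the collider $C$, so once PC finds such a set it stops and never conditions on sets containing $C$. Assumption~3 is designed exactly to close this gap, forcing PC to additionally test $A$ and $\bm{B}_i$ given every neighbor subset containing $C$; combined with the containment fact, these extra tests cover all true-neighbor subsets entering the maximum.

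The orientation-rule bounds \eqref{bound_o1}, \eqref{bound_o2}, and \eqref{bound_o3} I would handle by induction on the order in which PC orients edges. Each is a maximum of sums built from undirected bounds $p_{A-B}$, $p_{A-\bm{C}_k}$ (handled above) together with directed-edge bounds such as $p_{\bm{C}_i\to A}$, $p_{A\to\bm{C}_i}$, and $p_{\bm{C}_i\to B}$. The base case is a directed edge produced by an unshielded v-structure, whose bound was shown computable in the previous step; the inductive step is a directed edge produced by an application of a rule, whose bound refers only to edges oriented strictly earlier. Since PC orients finitely many edges in a fixed sequence, every directed-edge bound reduces through this recursion to v-structure and undirected bounds, all of which PC's executed CI tests control.

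The main obstacle is the v-structure quantity $p_{\gamma_{A\bm{B}_i\mid C}}$, and more precisely the verification that Assumption~3 together with $\bm{N}(X)\subseteq\widehat{\bm{N}}(X)$ really does make PC run every CI test in its defining maximum. The delicate point is reconciling two inclusions at once: the bound is defined over subsets of the \emph{true} neighbors containing $C$, whereas PC can only enumerate subsets of its \emph{estimated} neighbors, so I must argue both that the true subsets lie among those enumerated (via zero Type~II error) and that enlarging the maximum to the estimated collection keeps it a valid, if looser, upper bound on the v-structure test's Type~I error rate. The remaining pieces follow the skeleton argument of Theorem~4 in \citep{Armen14}.
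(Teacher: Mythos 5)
Your proposal is correct and follows essentially the same route as the paper's proof: establish $\bm{N}(X)\subseteq\widehat{\bm{N}}(X)$ from the zero Type~II error rate, conclude that the exhaustive subset search controls the undirected bound \eqref{undir_bound2}, invoke Assumption~3 to cover the conditioning sets containing $C$ needed for $p_{\gamma_{AB|C}}$ in \eqref{bound_f_v}, and reduce the orientation-rule bounds to these two cases. Your explicit faithfulness argument for why true adjacencies are never deleted, and your induction on orientation order where the paper says ``trivially,'' merely fill in details the paper leaves implicit.
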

\begin{proof} 
Consider any two vertices $A$ and $B$. Algorithm \ref{pcp_1} starts with a fully connected graph, so we have $B \in \widehat{\bm{N}}(A)$ and $A \in \widehat{\bm{N}}(B)$ in the beginning. Note that Algorithm \ref{pcp_1} executes $test_{A\ci B|\bm{S}}$ for all $\bm{S}\subseteq \widehat{\bm{N}}(A)\setminus B$ and for all $\bm{S} \subseteq \widehat{\bm{N}}(B)\setminus A$. The zero Type II error rate ensures the following: if the alternative holds, then the alternative is accepted. As a result, Algorithm \ref{pcp_1} will not remove any vertices adjacent to $A$ and any vertices adjacent to $B$ with a zero Type II error rate. Hence, we always have $\{\bm{N}(A)\setminus B\}\subseteq \{\widehat{\bm{N}} ̂(A)\setminus B\}$ and $\{\bm{N}(B)\setminus A\}\subseteq \{\widehat{\bm{N}} ̂(B)\setminus A\}$. Algorithm \ref{pcp_1} therefore must eventually execute $test_{A\ci B|\bm{S}}$ for all $\bm{S}\subseteq \{\bm{N}(A)\setminus B\}$ and for all $\bm{S}\subseteq \{\bm{N}(B)\setminus A$\}, so \eqref{undir_bound2} can be controlled.

For \eqref{bound_f_v}, the p-value bounds for undirected edges can already be controlled by the previous paragraph. We must now argue that $p_{\gamma_{AB|C} }$ can be controlled. Let $C$ be a collider between non-adjacent vertices $A$ and $B$. Now notice that $C \in \bm{N}(A)\subseteq \widehat{\bm{N}}(A)$ and $C \in \bm{N}(B) \subseteq \widehat{\bm{N}}(B)$, so Algorithm \ref{pcp_2} must execute $test_{A\ci B|\bm{S}}$ for all $\bm{S}\subseteq \{\bm{N}(A)\setminus B\}$ containing $C$ and for all $\bm{S}\subseteq \{\bm{N}(B)\setminus A\}$ containing $C$. Hence $p_{\gamma_{AB|C}}$ can be controlled.

Now the p-value bounds \eqref{bound_o1}, \eqref{bound_o2} and \eqref{bound_o3} can be controlled trivially because the p-value bounds for \eqref{undir_bound} and \eqref{bound_f_v} can be controlled. 
\end{proof}
In other words, PC can control the bounds in Table 1 with some additional CI tests and a zero Type II error rate. 

Of course, the Type II error rate is never zero in practice, but this becomes less of an issue as the sample size increases. We may also consider reducing the Type II error rate by simultaneously implementing  three strategies:
\begin{enumerate}
\item Use a liberal (higher) $\alpha$ threshold. We for example often use an $\alpha$ threshold of 0.20 in the experiments. This is the simplest strategy which decreases the Type II error rate but also increases the Type I error rate. However, we can then control the Type I error rate post-hoc with an FDR controlling procedure. Of course, setting the $\alpha$ threshold too high will prevent the PC algorithm from terminating within a reasonable amount of time as well as loosen the p-value bounds, since the CI tests will fail to explain away many edges. We therefore cannot rely entirely on this first strategy.
\item Use hypothesis tests whose p-value bounds are robust to Type II errors. The hypothesis tests in Section \ref{sec_4_4} are in fact robust to such errors due to the intersection bound as explained in detail in Appendix \ref{appendix_2}. Briefly, we can also reasonably consider modifying the null hypotheses of \eqref{v_struc_symb}, \eqref{rule_one_test}, \eqref{rule_two_test} and \eqref{rule_three_test} to ``no edges between any of the vertices." This corresponds to converting the logical disjunctions in the null of \eqref{max_bound_gen} into conjunctions which in turns leads to a less robust p-value bound involving the minimum of a set of p-values instead of the maximum. As a result, under-estimating one p-value in the p-value set due to Type II error(s) can cause PC to also under-estimate the bound of \eqref{v_struc_symb}, \eqref{rule_one_test}, \eqref{rule_two_test} or \eqref{rule_three_test}. 
\item Modify the PC algorithm to prevent and catch many Type II errors.
\end{enumerate}
The last strategy is more complex, so we discuss it in detail in the next section.

\section{The PC Algorithm with P-Values} \label{sec_5}
We now propose a modified PC algorithm called PC with p-values (PC-p) that reduces the influence of Type II errors by preventing and catching potential Type II errors. At the same time, PC-p is correct - the algorithm operates differently than PC, but it maintains PC's desirable soundness and completeness properties.

The PC-p algorithm involves two ideas. First, PC-p performs skeleton discovery with the same skeleton discovery procedure used in the PC-stable algorithm \citep{Colombo14}. This procedure ensures that the algorithm does not skip some CI tests due to Type II errors and variable ordering. The second idea behind PC-p involves a modification to the procedure for propagating edge orientations. Specifically, if two edge orientations conflict, PC-p admits bidirected edges instead of over-writing previous orientations like PC. PC-p then unorients the bidirected edges as well as the directed edges which were directly used to infer the presence of the bidirected edges. The algorithm subsequently labels the resulting undirected edges as ``ambiguous'' which ensures that PC-p does not orient additional edges using the ambiguous edges. Indeed, the PC-p algorithm uses conflicts in edge orientation to detect potential Type II errors and prevent the propagation of the errors throughout the graph. In practice, we find that these two modifications to the PC algorithm help PC-p with the BY estimator achieve more accurate strong estimation and control of the FDR than PC, as we will see in Section \ref{sec_6}.

We now describe the PC-p algorithm in detail; however, we will not describe the computation of the p-value upper bounds until Section \ref{sec_p_val_comp} in order to keep the presentation clear. We have divided the PC-p algorithm into Algorithms \ref{pcp_1}, \ref{pcp_2}, \ref{pcp_3} and  \ref{pcp_4}, where the first three procedures correspond to Algorithms \ref{pc_1}, \ref{pc_2} and \ref{pc_3} of the original PC algorithm. 

\subsection{Skeleton Discovery}
\begin{figure}[b!]
\centering 
\begin{tikzpicture}[
            > = stealth, 
            auto,
            semithick 
        ]
	\node[shape=circle] at (-0.3,1.8) {(a)};
    \node[shape=circle,draw=black] (A) at (0,0) {A};
    \node[shape=circle,draw=black] (E) at (1.5,0) {E};
    \node[shape=circle,draw=black] (C) at (1.5,1.5) {C};
    \node[shape=circle,draw=black] (D) at (1.5,-1.5) {D};
    \node[shape=circle,draw=black] (B) at (3,0) {B};

    \path [->] (A) edge node[left] {}(E);
    \path [->] (A) edge node[left] {}(C);
    \path [->] (A) edge node[left] {}(D);
    \path [->] (C) edge node[left] {}(E);
    \path [->] (D) edge node[left] {}(E);
    \path [->] (B) edge node[left] {}(C);
    \path [->] (B) edge node[left] {}(E);
    \path [->,bend left] (C) edge node[left] {}(D);
    
	\node[shape=circle] at (4.2,1.8) {(b)};
    \node[shape=circle,draw=black] (A) at (4.5,0) {A};
    \node[shape=circle,draw=black] (E) at (6,0) {E};
    \node[shape=circle,draw=black] (C) at (6,1.5) {C};
    \node[shape=circle,draw=black] (D) at (6,-1.5) {D};
    \node[shape=circle,draw=black] (B) at (7.5,0) {B};
    
    \path [-] (A) edge node[left] {}(E);
    \path [-] (A) edge node[left] {}(C);
    \path [-] (A) edge node[left] {}(D);
    \path [-] (C) edge node[left] {}(E);
    \path [-] (D) edge node[left] {}(E);
    \path [-] (B) edge node[left] {}(C);
    \path [-] (B) edge node[left] {}(E);
    
	\node[shape=circle] at (8.7,1.8) {(c)};
    \node[shape=circle,draw=black] (A) at (9,0) {A};
    \node[shape=circle,draw=black] (E) at (10.5,0) {E};
    \node[shape=circle,draw=black] (C) at (10.5,1.5) {C};
    \node[shape=circle,draw=black] (D) at (10.5,-1.5) {D};
    \node[shape=circle,draw=black] (B) at (12,0) {B};
    
    \path [-] (A) edge node[left] {}(E);
    \path [-] (A) edge node[left] {}(C);
    \path [-] (A) edge node[left] {}(D);
    \path [-] (C) edge node[left] {}(E);
    \path [-] (D) edge node[left] {}(E);
    \path [-] (B) edge node[left] {}(C);
    \path [-] (B) edge node[left] {}(E);
    \path [-] (B) edge node[left] {}(D);
\end{tikzpicture}
\caption{An example of a situation when PC infers different skeletons due to a Type II error and two variable orderings. (a) The true causal graph, (b) the skeleton inferred by PC from $order_1(\bm{X})$, (c) the skeleton inferred by PC from $order_2(\bm{X})$.}
\label{fig_skeleton}
\end{figure}
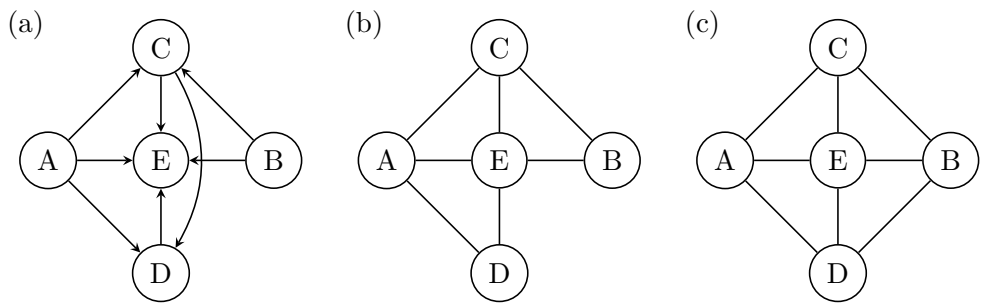

We first consider skeleton discovery. The original PC algorithm uses Algorithm \ref{pc_1} to discover the skeleton. However, Algorithm \ref{pc_1} can cause the sample version of the PC algorithm to skip some CI tests due to variable ordering and Type II errors. For example, consider the causal graph in Figure \hyperref[fig_skeleton]{4a} as first presented in \citep{Colombo14}. In this example, suppose the CI tests correctly determine that $A\ci B$ and $B\ci D|\{A,C\}$ but incorrectly determine that $C\ci D|\{A,E\}$. The incorrect inference is a Type II error, since $C$ and $D$ are adjacent in the true graph. Now consider the following ordering of variables for the PC algorithm: $order_1 (\bm{X})=(A,D,B,C,E)$. In this case, the ordered pair $(D,B)$ is considered before $(D,C)$ in Algorithm \ref{pc_1}, since $(D,B)$ comes earlier in $order_1 (\bm{X})$. The PC algorithm removes $D-B$ because a CI test determines that $D\ci B|\{A,C\}$ and $\{A,C\}$ is a subset of $\bm{N}(D)=\{A,B,C,E\}$. Next, $D-C$ is considered and erroneously removed because a CI test determines that $D\ci C|\{A,E\}$ and $\{A,E\}$ is a subset of $\bm{N}(D)=\{A,C,E\}$. We thus ultimately obtain the skeleton in Figure \hyperref[fig_skeleton]{4b} with $order_1 (\bm{X})$.

Now consider an alternative ordering of the variables: $order_2 (\bm{X})=(A,C,D,B,E)$. In this case, $(C,D)$ is considered before $(D,B)$ in Algorithm \ref{pc_1}, and the algorithm erroneously removes $C-D$. Next, the algorithm considers $D-B$ but $\{A,C\}$ is not a subset of $\bm{N}(D)=\{A,B,E\}$, so $D-B$ remains. Even when the PC algorithm eventually also considers the same undirected edge as $B-D$, $\{A,C\}$ is again not a subset of $\bm{N}(B)=\{C,D,E\}$, so $B-D$ remains. In other words, $(C,D)$ is considered first in $order_2 (\bm{X})$ which causes $C$ to be removed from $\bm{N}(D)$. Algorithm \ref{pc_1} therefore never executes $test_{B\ci D|\{A,C\}}$. We thus ultimately obtain the skeleton in Figure \hyperref[fig_skeleton]{4c} with $order_2 (\bm{X})$. 

The previous two examples show that the Type II error of incorrectly determining that $C\ci D|\{A,E\}$ leads PC to infer two different skeletons due to differences in variable ordering. Clearly, we would like to eliminate the dependency of skeleton discovery on variable ordering and also reduce its dependency on Type II errors at the same time. Fortunately, Colombo and Maathius proposed such a modification of Algorithm \ref{pc_1} as outlined in Algorithm \ref{pcp_1}. The key difference between Algorithm \ref{pc_1} and \ref{pcp_1} involves the for loop in steps \ref{pcp_1:adj_1}-\ref{pcp_1:adj_3} of Algorithm \ref{pcp_1} which computes and stores the adjacency sets after each new conditioning set size. As a result, an incorrect edge deletion due to a Type II error on line \ref{pcp_1:delete_edge} of Algorithm \ref{pcp_1} no longer effects which CI tests are performed for other pairs of variables with conditioning set size $l$. Indeed, the algorithm only modifies the adjacency sets when it increases the conditioning set size. Colombo and Maathius proved that Algorithm \ref{pcp_1} is order-independent. We review the proof here, since it is informative:

\begin{proposition} \citep{Colombo14}. The skeleton resulting from Algorithm \ref{pcp_1} is order-independent.
\end{proposition}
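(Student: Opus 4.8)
The plan is to argue by induction on the size $\ell$ of the conditioning set that the intermediate graph produced by Algorithm \ref{pcp_1} after processing all pairs at level $\ell$ is the same regardless of the variable ordering, from which order-independence of the final skeleton follows immediately. The single structural feature that makes this work is the for loop in steps \ref{pcp_1:adj_1}--\ref{pcp_1:adj_3}: the adjacency sets $\widehat{\bm{N}}(\cdot)$ are computed once at the start of each level $\ell$ and are \emph{frozen} for the duration of that level, so that edge deletions performed during level $\ell$ never alter which conditioning sets are examined for any other pair at the same level. This is precisely the property that the original Algorithm \ref{pc_1} lacks.

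First I would fix notation: let $G_\ell$ denote the graph at the moment the algorithm finishes testing all pairs with conditioning sets of size $\ell$, and let $\bm{a}_\ell(X)$ be the frozen adjacency set of $X$ used throughout level $\ell$, equal to the set of neighbors of $X$ in $G_{\ell-1}$. The key claim is that an edge $X-Y$ is deleted during level $\ell$ if and only if there exists some $\bm{S}$ with $|\bm{S}|=\ell$ satisfying $\bm{S}\subseteq \bm{a}_\ell(X)\setminus\{Y\}$ or $\bm{S}\subseteq \bm{a}_\ell(Y)\setminus\{X\}$ for which the CI test reports $X\ci Y\mid\bm{S}$. Since this is a purely existential condition over a fixed collection of conditioning sets, fixed precisely because $\bm{a}_\ell$ does not change during the level, its truth value does not depend on the order in which the pairs $(X,Y)$ or the conditioning sets $\bm{S}$ are enumerated.

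Next I would run the induction. For the base case $\ell=0$, every pair starts adjacent in the fully connected graph, so $\bm{a}_0(\cdot)$ is the same under any ordering and whether $X-Y$ survives depends only on the marginal test $X\ci Y$; hence $G_0$ is order-independent. For the inductive step I would assume $G_{\ell-1}$ is order-independent; then $\bm{a}_\ell(\cdot)$, being read directly off $G_{\ell-1}$, is order-independent, so by the key claim the set of edges deleted at level $\ell$ is order-independent, yielding that $G_\ell$ is order-independent. Because the algorithm terminates at some finite $\ell_{\max}$ and returns $G_{\ell_{\max}}$, the final skeleton is order-independent.

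The main obstacle is not a computation but making the key claim airtight: I must justify that deletions occurring \emph{within} a level cannot influence the outcome of the tests carried out for other pairs at that same level. In Algorithm \ref{pc_1} this justification fails, since the adjacency sets are updated immediately after each removal, so an early deletion can shrink a neighbor set and thereby suppress a conditioning set that would otherwise have been tested, exactly the phenomenon illustrated by Figure \ref{fig_skeleton}. I would therefore emphasize that in Algorithm \ref{pcp_1} the conditioning sets tested at level $\ell$ are drawn from the frozen $\bm{a}_\ell$ rather than from the live, mutating neighbor sets, so the existential condition in the key claim is evaluated against a collection of conditioning sets that is identical under any ordering; once this is established, the remainder is a routine unwinding of the two enumerations.
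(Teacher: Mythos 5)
Your proof is correct and follows essentially the same route as the paper's: both arguments hinge on the observation that the adjacency sets are frozen at the start of each conditioning-set level, so whether an edge is deleted at that level is a purely existential condition over an order-independent collection of conditioning sets. The only difference is presentational --- you make explicit the induction over levels that the paper's proof leaves implicit in the phrase ``by construction, the order in which the edges are considered does not affect the sets $\bm{a}(A)$ and $\bm{a}(B)$.''
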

\begin{proof}
Consider the removal or retention of some undirected edge $A-B$ at some conditioning set size $l$. The ordering of the variables determines the order in which the edges (line \ref{pcp_1:pair}) and subsets $\bm{S} \subseteq \bm{a}(A)$ and $\bm{S} \subseteq \bm{a}(B)$ (line \ref{pcp_1:subset}) are considered. However, by construction, the order in which the edges are considered does not affect the sets $\bm{a}(A)$ and $\bm{a}(B)$.

If there is at least one subset $\bm{S}$ of $\bm{a}(A)$ or $\bm{a}(B)$ such that $A\ci B| \bm{S}$, then any ordering of the variables will find a separating set for $A$ and $B$ (but different orderings may lead to different separating sets as illustrated in Example 2 of \citep{Colombo14}). Conversely, if there is no subset $\bm{S}'$ of $\bm{a}(A)$ or $\bm{a}(B)$ such that $A \ci B|\bm{S}'$, then no ordering will find a separating set.

Hence, any ordering of the variables leads to the same edge deletions and therefore to the same skeleton.
\end{proof}

In other words, modifying the adjacency sets only when changing the conditioning set size prevents PC-p from skipping some CI tests during skeleton discovery because of Type II errors and variable ordering. As a result, Algorithm \ref{pcp_1} enables PC-p to perform more of the required CI tests than Algorithm \ref{pc_1} in order to correctly upper bound the p-value of \eqref{undirected_HT}. However, notice that Algorithm \ref{pcp_1} does not prevent all Type II errors from effecting the skeleton. The edge $C-D$ is for example eliminated in Figure \ref{fig_skeleton} regardless of the ordering because of the erroneous conclusion that $C\ci D|\{A,E\}$. As a result, we have $C\not \in \widehat{\bm{N}}(D)$ which may lead to under-estimation of the p-value bounds for undirected edges connected to $D$. We will nonetheless see in Section \ref{sec_6} that Algorithm \ref{pcp_1} does help PC-p achieve tighter estimation and control of the FDR than the original skeleton discovery procedure, since Algorithm \ref{pcp_1} eliminates the influence of at least some Type II errors.

\subsection{Unshielded V-Structures} \label{sec_v_struc}
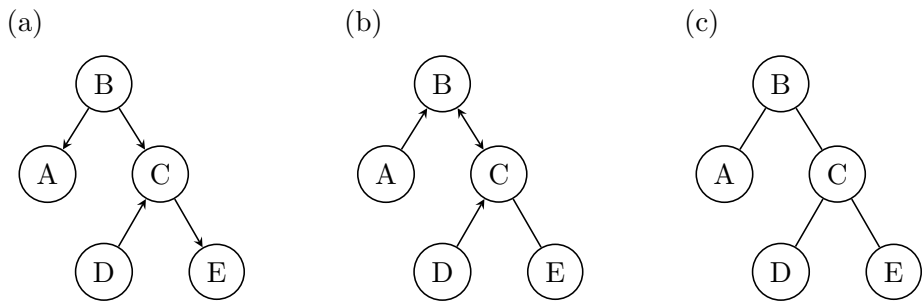
\begin{figure}[b!]
\centering
\begin{tikzpicture} [
            > = stealth, 
            auto,
            semithick 
        ]
	\node[shape=circle] at (-0.3,1.8) {(a)};
    \node[shape=circle,draw=black] (A) at (0,-0.2) {A};
    \node[shape=circle,draw=black] (B) at (0.75,1) {B};
    \node[shape=circle,draw=black] (C) at (1.5,-0.2) {C};
    \node[shape=circle,draw=black] (E) at (2.25,-1.5) {E};
    \node[shape=circle,draw=black] (D) at (0.75,-1.5) {D};

    \path [->] (B) edge node[left] {}(A);
    \path [->] (B) edge node[left] {}(C);
    \path [->] (D) edge node[left] {}(C);
    \path [->] (C) edge node[left] {}(E);
    
	\node[shape=circle] at (4.2,1.8) {(b)};
       \node[shape=circle,draw=black] (A) at (4.5,-0.2) {A};
    \node[shape=circle,draw=black] (B) at (5.25,1) {B};
    \node[shape=circle,draw=black] (C) at (6,-0.2) {C};
    \node[shape=circle,draw=black] (E) at (6.75,-1.5) {E};
    \node[shape=circle,draw=black] (D) at (5.25,-1.5) {D};

    \path [->] (A) edge node[left] {}(B);
    \path [<->] (C) edge node[left] {}(B);
    \path [->] (D) edge node[left] {}(C);
    \path [-] (E) edge node[left] {}(C);

	\node[shape=circle] at (8.7,1.8) {(c)};
     \node[shape=circle,draw=black] (A) at (9,-0.2) {A};
    \node[shape=circle,draw=black] (B) at (9.75,1) {B};
    \node[shape=circle,draw=black] (C) at (10.5,-0.2) {C};
    \node[shape=circle,draw=black] (E) at (11.25,-1.5) {E};
    \node[shape=circle,draw=black] (D) at (9.75,-1.5) {D};

    \path [-] (A) edge node[left] {}(B);
    \path [-] (C) edge node[left] {}(B);
    \path [-] (D) edge node[left] {}(C);
    \path [-] (E) edge node[left] {}(C);
\end{tikzpicture}
\caption{Example of how Algorithm \ref{pcp_2} deals with conflicting edge orientations. a) The ground truth, b) the inferred graph with two v-structures $A \to B \leftarrow C$ and $D \to C \leftarrow B$ that lead to the bi-directed edge $B \leftrightarrow C$, and c) the final graph after unorienting both v-structures.}
\label{fig_v_struc}
\end{figure}
We now describe Algorithm \ref{pcp_2}, where we use the circle edge endpoint ``$\circ$'' as a meta-symbol representing either a tail or an arrowhead. In Algorithm \ref{pcp_2}, PC-p orients edges according to all unshielded v-structures in line \ref{pcp_2:v_struc_orient}, even if two v-structures conflict with each other in the direction of a particular edge. In the case of conflict, PC-p admits a bidirected edge instead of favoring one particular direction over the other. The algorithm then unorients all v-structures involving the bidirected edges and labels the unoriented edges as ``ambiguous'' in line \ref{pcp_2:unorient} because bidirected edges may result from a Type II error. For example, consider the ground truth in Figure \hyperref[fig_v_struc]{5a} and assume that Algorithm \ref{pcp_1} correctly discovers all of the undirected edges. Moreover, assume Algorithm \ref{pcp_1} correctly finds a separating set of $B$ and $D$ that does not contain $C$ but incorrectly finds a separating set of $A$ and $C$ that does not contain $B$. The latter is a Type II error, since the alternative should have been accepted rather than rejected when conditioning on a subset not containing $B$. In this case, PC-p first orients the edges according to Figure \hyperref[fig_v_struc]{5b}. However, notice that the two unshielded v-structures conflict with each other due to the bidirected edge $B \leftrightarrow C$, and PC-p cannot determine which v-structure admitted the Type II error. As a result, the algorithm unorients all of the edges in both v-structures as in Figure \hyperref[fig_v_struc]{5c}. PC-p then labels the three unoriented edges as ``ambiguous'' so that the algorithm does not orient any other undirected edges based on these three edges using the orientation rules. The labeling thus prevents the algorithm from propagating Type II errors by orienting additional edges based on the erroneous directions. 

\begin{algorithm}[] \label{pcp_1}
 \KwData{$\bm{X}^n$, $\alpha$}
 \KwResult{$\widehat{\mathbb{G}}$, $\mathcal{P}^1$, $\mathcal{S}$, $\mathcal{I}$}
 \BlankLine
 Form a completely connected undirected graph $\widehat{\mathbb{G}}$ on the variable set in $\bm{X}^n$\\
 $l=-1$ \\
 \Repeat{all ordered pairs of adjacent variables $(A,B)$ in $\widehat{\mathbb{G}}$ satisfy $|\bm{a}(A)\setminus B|\leq l$}{
 $l=l+1$ \\
 \For{each variable $A$ in $\widehat{\mathbb{G}}$}{ \label{pcp_1:adj_1}
 	$\bm{a}(A) \leftarrow \widehat{\bm{N}}(A)$ \label{pcp_1:adj_2}
 } \label{pcp_1:adj_3}
 \Repeat{all ordered pairs of adjacent variables $(A,B)$ in $\widehat{\mathbb{G}}$ with $|\bm{a}(A)\setminus B| \geq l$ have been considered}{
 Select a new ordered pair of variables $(A, B)$ that are adjacent in $\widehat{\mathbb{G}}$ and satisfy $|\bm{a}(A)\setminus B|\geq l$ \label{pcp_1:pair}\\
 \Repeat{$A - B$ is deleted from $\widehat{\mathbb{G}}$ or all $\bm{S} \subseteq \{\bm{a}(A)\setminus B \}$ with $|\bm{S}| =l$ have been considered}{Choose a new set $\bm{S} \subseteq \{\bm{a}(A)\setminus B\}$ with $|\bm{S}|=l$ \label{pcp_1:subset}\\
 $p \leftarrow$ p-value from $test_{A \ci B | \bm{S}}$\\
 \uIf{$p \leq \alpha$}{
 Insert $p$ into $\mathcal{P}_{AB}^1$ and $\mathcal{P}_{BA}^1$ \label{pcp_1:pval_1}}
 \Else{Delete $A - B$ from $\widehat{\mathbb{G}}$ \label{pcp_1:delete_edge}\\
 Empty $\mathcal{P}_{AB}^1$ and $\mathcal{P}_{BA}^1$ \\
 Insert $\bm{S}$ into $\mathcal{S}_{AB}$ and $\mathcal{S}_{BA}$ \\
 }
 }
}
}

 \For{each nonempty $\mathcal{P}_{AB}^1$ in $\mathcal{P}^1$}{
 $\mathcal{P}_{AB}^1 \leftarrow\max\{\mathcal{P}_{AB}^1\}$ \label{pcp_1:pval_2} \\
 Place the same unique identifier for $A-B$ into $\mathcal{I}_{AB}$ and $\mathcal{I}_{BA}$ \label{pcp_1:identifier}
 }

 \caption{Skeleton Discovery}
\end{algorithm}

\begin{algorithm}[]  \label{pcp_2}
 \KwData{$\bm{X}^n$, $\widehat{\mathbb{G}}$, $\mathcal{P}^1$, $\mathcal{S}$, $\mathcal{I}$}
 \KwResult{$\widehat{\mathbb{G}}$, $\mathcal{P}^2$, $\mathcal{I}$}
 \BlankLine
 \For{all ordered pairs of non-adjacent variables $(A, B)$ with common neighbor $C$}{
 \If{$C \not \in \text{ any set in } \mathcal{S}_{AB}$}{
 Replace $A \circlinecirc C \circlinecirc B$ with $A \circ \!\! {\to} \hspace{1mm}  C \leftarrow \!\! \circ \hspace{1mm}  B$ \label{pcp_2:v_struc_orient}\\
 $l=0$ \label{pcp_2:pval_1}\\
 \Repeat{all $\bm{S}\subseteq \widehat{\bm{N}}(A)$ including $C$ and all $\bm{S}\subseteq \widehat{\bm{N}}(B)$ including $C$ satisfy $|\bm{S}|\leq l$ \label{pcp_2:pval_3}}{
 $l=l+1$\\
 \Repeat{all $\bm{S}\subseteq \widehat{\bm{N}}(A)$ including $C$ and all $\bm{S}\subseteq \widehat{\bm{N}}(B)$ including $C$ with $|\bm{S}|=l$ have been considered}{
 Choose a new set $\bm{S}\subseteq \widehat{\bm{N}}(A)$ including $C$ or $\bm{S}\subseteq \widehat{\bm{N}}(B)$ including $C$ with $|\bm{S}|=l$\\
 $p \leftarrow$ p-value from $test_{A \ci B | \bm{S}}$ \\
 Insert $p$ into $\mathcal{P}''$ \label{pcp_2:pval_2}
 
 }
 
 }
 Insert $\max\{\mathcal{P}_{AC}^1, \mathcal{P}''\}$ into $\mathcal{P}'_{BC}$ \label{pcp_2:pval_4}\\
 Insert $\max\{\mathcal{P}_{BC}^1, \mathcal{P}''\}$ into $\mathcal{P}'_{AC}$ \label{pcp_2:pval_5}\\
 Empty $\mathcal{P}''$

 }

 }
 $\widehat{\mathbb{G}}' \leftarrow \widehat{\mathbb{G}}$ \\
\For{each $A \leftrightarrow C$ in $\widehat{\mathbb{G}}$ \label{pcp_2:pval_7}}{
 Unorient the edge to $A-C$ in $\widehat{\mathbb{G}}'$\\
 For each additional edge directed to $A$ in $\widehat{\mathbb{G}}$, unorient the edge in $\widehat{\mathbb{G}}'$ \\
 For each additional edge directed to $C$ in $\widehat{\mathbb{G}}$, unorient the edge in $\widehat{\mathbb{G}}'$ \\
 Label the unoriented edges as ``ambiguous'' in $\widehat{\mathbb{G}}'$ \label{pcp_2:unorient} \\
  }
  $\widehat{\mathbb{G}} \leftarrow \widehat{\mathbb{G}}'$ \\
 \For{each $A \rightarrow C$ in $\widehat{\mathbb{G}}$ \label{pcp_2:pval_6}}{
 $\mathcal{P}_{AC}^2 \leftarrow \max\Big\{\mathcal{P}_{AC}^1, \text{sum}[\mathcal{P}'_{AC}]\Big\}$ \label{pcp_2:pval_9}\\
 \uIf{one p-value in $\mathcal{P}'_{AC}$}{
 Place the same unique identifier into $\mathcal{I}_{AC}$ and $\mathcal{I}_{BC}$ for unshielded collider $A \to C \leftarrow B$ \label{pcp_2:identifier1}\\
 }
 \ElseIf{more than one p-value in $\mathcal{P}'_{AC}$}{Place a unique identifier into $\mathcal{I}_{AC}$ \label{pcp_2:identifier2}\\}
 
 }

 \caption{Unshielded V-structures}
\end{algorithm}

\begin{algorithm}[] \label{pcp_3}
 \KwData{$\widehat{\mathbb{G}}$, $\mathcal{P}^1$, $\mathcal{P}^2$, $\mathcal{I}$}
 \KwResult{$\widehat{\mathbb{G}}$, $\mathcal{P}^2$, $\mathcal{I}$}
 \BlankLine
 \Repeat{there are no more edges to orient}{
 $\widehat{\mathbb{G}}' \leftarrow \widehat{\mathbb{G}}$ \\
 \If{$A - B$ non-ambiguous and $\exists i$ s.t. $\bm{C}_i \to A$ with $\bm{C}_i$ and $B$ non-adjacent in $\widehat{\mathbb{G}}$}{
 Replace $A \circlinecirc B$ in $\widehat{\mathbb{G}}'$ with $A \circ \!\! {\to} \hspace{1mm} B$\\
 $\mathcal{P}_{A,B}'\leftarrow \text{sum} \Big\{ \mathcal{P}_{A,B}', \text{sum}\{\mathcal{P}_{\bm{C}_i A}^2, \forall i \text{ s.t.  $\bm{C}_i \to A$ with $\bm{C}_i,B$ non-adjacent}\} \Big\}$ \label{pcp_3:pval_1}\\
 }
 \If{$A - B$ non-ambiguous and $\exists i$ s.t. $A \to \bm{C}_i \to B$ in $\widehat{\mathbb{G}}$}{
 Replace $A \circlinecirc B$ in $\widehat{\mathbb{G}}'$ with $A \circ \!\! {\to} \hspace{1mm} B$ \\
 $\mathcal{P}_{AB}' \leftarrow \text{sum}\Big\{\mathcal{P}_{AB}', \text{sum}\Big[\max\{\mathcal{P}_{A \bm{C}_i}^2,\mathcal{P}_{\bm{C}_i B}^2\}, \forall i \text{ s.t. } A \to \bm{C}_i \to B \Big]\Big\}$ \label{pcp_3:pval_2} \\
 }
 \If{$A - B$ non-ambiguous and $\exists i,j$ s.t. $A - \bm{C}_i \to B$, $A - \bm{C}_j \to B$ with $A - \bm{C}_i$ and $A - \bm{C}_j$ non-ambiguous, and $\bm{C}_i$ and $\bm{C}_j$ non-adjacent in $\widehat{\mathbb{G}}$}{
 Replace $A \circlinecirc B$ in $\widehat{\mathbb{G}}'$ with $A \circ \!\! {\to} \hspace{1mm} B$\\
 $\mathcal{P}_{AB}' \leftarrow \text{sum}\Big\{\mathcal{P}_{AB}', \text{sum}\Big[\max\{\mathcal{P}_{A \bm{C}_i}^1,\mathcal{P}_{\bm{C}_i B}^2,\mathcal{P}_{A\bm{C}_j}^1,\mathcal{P}_{\bm{C}_jB}^2\}, \forall i,j \text{ s.t. } A - \bm{C}_i \to B, A - \bm{C}_j \to B$ with $A - \bm{C}_i$ and $A - \bm{C}_j$ non-ambiguous, and $\bm{C}_i$ and $\bm{C}_j$ non-adjacent $\Big]\Big\}$ \label{pcp_3:pval_3}\\
 } 
 \For{each $A \leftrightarrow B$ in $\widehat{\mathbb{G}}'$}{
 Unorient to $A - B$ in $\widehat{\mathbb{G}}'$ \label{pcp_3:unorient_bi}\\
 For each edge in the sufficient conditions of each orientation rule that led to the creation of a directed edge to $A$ in $\widehat{\mathbb{G}}$, unorient the edge in $\widehat{\mathbb{G}}'$\\
 For each edge in the sufficient conditions of each orientation rule that led to the creation of a directed edge to $B$ in $\widehat{\mathbb{G}}$, unorient the edge in $\widehat{\mathbb{G}}'$ \label{pcp_3:unorient_suff}\\
 Label the unoriented edges as ``ambiguous'' in $\widehat{\mathbb{G}}'$ \label{pcp_3:unorient}\\
 }
 $\widehat{\mathbb{G}} \leftarrow \widehat{\mathbb{G}}'$ \\

 \For{each $A \rightarrow B$ in $\widehat{\mathbb{G}}$}{
 Place a unique identifier into $\mathcal{I}_{AB}$ \label{pcp_3:identifier}\\
 $\mathcal{P}_{AB}^2\leftarrow\max\{\mathcal{P}_{AB}^1,\mathcal{P}_{AB}'\}$ \label{pcp_3:pval_4}\\
 }
   Empty $\mathcal{P}'$ \\
 }
 \For{each non-empty cell $\mathcal{P}_{AB}^1$ s.t. $\mathcal{P}_{AB}^2$ and $\mathcal{P}_{BA}^2$ are empty \label{pcp_3:transfer1}}{
 $\mathcal{P}_{AB}^2, \mathcal{P}_{BA}^2 \leftarrow \mathcal{P}_{AB}^1$
 }\label{pcp_3:transfer2}
 \caption{Orientation Rules}
\end{algorithm}

\begin{algorithm}[] \label{pcp_4}
 \KwData{$\widehat{\mathbb{G}}$, $\mathcal{P}^2$, $\mathcal{I}$, $\alpha$, $q$}
 \KwResult{$\widehat{FDR}_{BY}(\alpha)$, $\widehat{\mathbb{G}}^*$}
 \BlankLine
 \tcp{Estimation}
 $\widehat{FDR}_{BY}(\alpha) \leftarrow$ Solution of \ref{FDR_BY} using threshold $\alpha$ and $m$ corresponding to the number of unique identifiers in $\mathcal{I}$ \\
 \BlankLine
 \tcp{Control}
 $\alpha^* \leftarrow$ Solution of \ref{alpha_star} using FDR level $q$ and $m$ corresponding to the number of unique identifiers in $\mathcal{I}$ \\
 $\widehat{\mathbb{G}}^* \leftarrow$ $\widehat{\mathbb{G}}$ with edges associated with p-values above $\alpha^*$ eliminated\\
 \caption{FDR Estimation and Control}
\end{algorithm}

\subsection{Orientation Rules}

\begin{figure}[b!]
\centering 
\begin{tikzpicture}[
            > = stealth, 
            auto,
            semithick 
        ]
    \node[shape=circle,draw=black] (A) at (0,0) {A};
    \node[shape=circle,draw=black] (B) at (1,-1) {B};
    \node[shape=circle,draw=black] (C) at (-0.75, -1) {C};
    \node[shape=circle,draw=black] (D) at (-1.5, 0) {D};
    \node[shape=circle,draw=black] (E) at (1.75,0) {E};
    \node[shape=circle,draw=black] (F) at (2.5,-1) {F};

    \path [-] (A) edge node[left] {}(B);
    \path [->] (D) edge node[left] {}(A);
    \path [->] (C) edge node[left] {}(A);
    \path [->] (E) edge node[left] {}(B);
    \path [->] (F) edge node[left] {}(B);
    
\end{tikzpicture}
\caption{Here, a bidirected edge between $A$ and $B$ results from the application of rule 1. PC-p therefore unorients and labels all edges in the above graph as ``ambiguous'' according to the sufficient conditions of rule 1.}
\label{fig_rule1_ambig}
\end{figure}
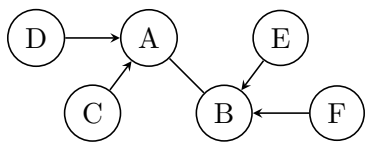

Notice that Algorithm \ref{pc_3} uses ``else if'' statements instead of all ``if'' statements. The ``else if'' approach is of course faster, but it also causes PC to ignore any interactions between the orientation rules in the sense that, if one rule orients an edge, then no other rule can orient an edge. PC-p performs the orientation rules according Algorithm \ref{pcp_3} which uses the ``if'' approach to attempt to apply all three orientation rules to each non-ambiguous undirected edge. Now, if bidirected edges exist after the rules are applied, then Algorithm \ref{pcp_3} unorients the edge as well as all edges involved in the sufficient conditions of the associated oientation rules in lines \ref{pcp_3:unorient_bi}-\ref{pcp_3:unorient_suff}. The algorithm then labels the unoriented edges as ``ambiguous'' in line \ref{pcp_3:unorient} similar to unshielded v-structure orientation in Section \ref{sec_v_struc}. For example, in Figure \ref{fig_rule1_ambig}, rule 1 of PC-p induces a bidirected edge between $A-B$, so PC-p unorients and labels all directed edges which satisfy the sufficient conditions of rule 1 as ambiguous; these include $D \rightarrow A, C \rightarrow A, E \rightarrow B,$ and $F \rightarrow B$.

\subsection{Analysis of PC-p}

We now have the following analysis of the PC-p algorithm:
\begin{theorem}
The PC-p algorithm with a CI oracle is sound and complete.
\end{theorem}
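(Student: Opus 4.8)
The plan is to show that, when every conditional independence query is answered by a perfect oracle (zero Type I and Type II error), the three modifications distinguishing PC-p from the original PC algorithm all become vacuous, so that PC-p returns exactly the CPDAG $\mathbb{G}^C$ representing $\mathbb{G}$; soundness and completeness then follow from the corresponding property of PC \citep{Meek95}. I would first note that the p-value bookkeeping ($\mathcal{P}^1$, $\mathcal{P}^2$, $\mathcal{S}$, $\mathcal{I}$) never affects which edges are present or oriented, so only the graph-modifying steps matter. For skeleton discovery (Algorithm \ref{pcp_1}), Proposition \ref{thm_adj} together with d-separation faithfulness implies that $A$ and $B$ are adjacent in $\mathbb{G}$ iff no subset of $\bm{Pa}(A)\setminus B$ or $\bm{Pa}(B)\setminus A$ makes them conditionally independent; with an oracle each such query is correct, so adjacent pairs are never deleted and non-adjacent pairs are deleted once a separating set is reached. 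Since Algorithm \ref{pcp_1} is order-independent \citep{Colombo14}, the recovered skeleton is exactly the skeleton of $\mathbb{G}$.

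Next I would show that no bidirected edge arises during v-structure orientation (Algorithm \ref{pcp_2}), so that the unorienting and ambiguity-labeling of lines \ref{pcp_2:pval_7}--\ref{pcp_2:unorient} do nothing. With an oracle the separating sets in $\mathcal{S}$ are correct, so by Proposition \ref{thm_adj2} an unshielded triple $A \circlinecirc C \circlinecirc B$ is oriented as a collider iff $C$ is genuinely a collider in $\mathbb{G}$. A bidirected edge $B \leftrightarrow C$ would require two v-structures that disagree on the orientation of the $B$--$C$ edge, i.e. $C$ acting simultaneously as a collider and a non-collider relative to that edge, contradicting the correctness of the collider determination. Hence PC-p's orientations coincide with PC's and no edge is labeled ambiguous.

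I would then treat the orientation rules (Algorithm \ref{pcp_3}). Given the correct skeleton and v-structures, the three rules of \eqref{rules1} only force orientations shared by every DAG in the Markov equivalence class, hence are sound, and since they converge to the unique CPDAG they are confluent; changing ``else if'' to ``if'' therefore alters only the order and number of orientations made per pass, not the fixed point reached by the outer loop. Consequently no conflicting orientation, and thus no bidirected edge, can appear on a correct input, so the ambiguity-labeling of lines \ref{pcp_3:unorient_bi}--\ref{pcp_3:unorient} is again vacuous and every ``non-ambiguous'' precondition holds automatically. PC-p thus reproduces PC's orientations and outputs $\mathbb{G}^C$.

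The main obstacle will be making the second and third steps rigorous, namely that the conflict-handling machinery is genuinely vacuous under an oracle. This reduces to two claims: that correct separating sets can never generate two v-structures disagreeing on a shared edge, and that simultaneous---rather than sequential---application of the Meek rules to a correct partially directed graph cannot manufacture a conflict. I expect both to follow from the soundness of collider detection and the confluence of the Meek rules \citep{Meek95}, but each requires care to state precisely.
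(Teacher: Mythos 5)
Your proposal is correct and follows essentially the same route as the paper's own proof: reduce to showing that, under a CI oracle, PC-p performs exactly the same edge deletions and orientations as PC (skeleton correctness via the order-independence of the PC-stable procedure, absence of conflicting v-structure orientations, and agreement of the orientation rules regardless of ``if'' versus ``else if''), and then invoke the soundness and completeness of PC. If anything, you are more explicit than the paper about why no bidirected edges can arise and about the confluence of the Meek rules, which the paper simply asserts.
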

\begin{proof} PC is sound and complete, so it is enough to prove that PC-p and PC will perform the exact same edge deletion and edge orientation operations with a CI oracle. Note that Algorithm \ref{pcp_1} has already been shown to be sound and complete up to skeleton discovery (Colombo \& Maathius 2014). Algorithm \ref{pcp_1} will therefore perform the exact same edge deletions as Algorithm \ref{pc_1} with a CI oracle. Now, Algorithm \ref{pcp_2} will also perform the same edge orientations as Algorithm \ref{pc_2} with a CI oracle, since there will never be conflicting edge orientations. Lastly, for Algorithm \ref{pcp_3}, if there exists an edge that can be oriented by more than rule, then the edge must be oriented in the same direction by the other two rules. Algorithm \ref{pcp_3} therefore returns the same edge orientations as Algorithm \ref{pc_3}. We have proved equivalence in outputs of Algorithms \ref{pcp_1}, \ref{pcp_2} and \ref{pcp_3} of PC-p to Algorithms \ref{pc_1}, \ref{pc_2} and \ref{pc_3} of PC, respectively. Algorithm \ref{pcp_4} is not involved in graph structure discovery. \end{proof}
The output of PC-p is therefore equivalent to the output of PC in the large sample limit with a consistent CI test, even though PC-p performs more operations than PC.

\subsection{Computation of the P-Values} \label{sec_p_val_comp}
\begin{figure}[b!]
\centering 
\begin{tikzpicture}[
            > = stealth, 
            auto,
            semithick 
        ]
    \node[shape=circle,draw=black] (A) at (0,0) {A};
    \node[shape=circle,draw=black] (B) at (0,-2) {B};
    \node[shape=circle,draw=black] (C) at (-1, -1) {C};
    \node[shape=circle,draw=black] (D) at (1, -1) {D};
    \node[shape=circle,draw=black] (E) at (2,-1) {E};
    \node[shape=circle,draw=black] (F) at (3,0) {F};

    \path [-] (A) edge node[left] {}(B);
    \path [-] (A) edge node[left] {}(C);
    \path [-] (A) edge node[left] {}(D);
    \path [->, bend left] (A) edge node[left] {}(E);
    \path [->] (C) edge node[left] {}(B);
    \path [->] (D) edge node[left] {}(B);
    \path [->] (C) edge node[left] {}(B);
    \path [->] (F) edge node[left] {}(E);
    \path [->, bend left] (E) edge node[left] {}(B);
    
\end{tikzpicture}
\caption{An example of a situation where two of PC's orientation rules, specifically rules 2 and 3, can orient one undirected edge ($A-B$) in the same direction. In this case, PC oriented all of the currently directed edges using unshielded v-structures.}
\label{fig_two_rules}
\end{figure}
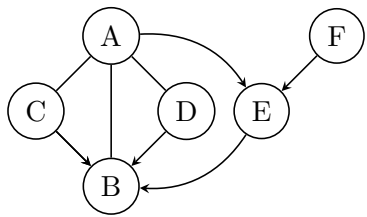

We now address the issue of computing the upper bounds of the p-values. Let us first consider Algorithm \ref{pcp_1}. Algorithm \ref{pcp_1} takes as input the dataset $\bm{X}^n$ and the significance threshold $\alpha$. The algorithm then stores the p-values of all significant CI tests in cell $\mathcal{P}^1$ when it reaches line \ref{pcp_1:pval_1}. Notice that the algorithm stores the p-values of all significant tests involving $A$ and $B$ in both $\mathcal{P}_{AB}^1$ and $\mathcal{P}_{BA}^1$. Algorithm 1 next computes the maximum over the p-values for all surviving edges in line \ref{pcp_1:pval_2} as in \eqref{undir_bound2}.

Algorithm \ref{pcp_2} takes $\mathcal{P}^1$ from Algorithm \ref{pcp_1} as input. Moreover, unlike Algorithm \ref{pc_2} of PC, Algorithm \ref{pcp_2} also takes as input the dataset $\bm{X}^n$, since PC-p must apply \eqref{bound_f_v} in order to obtain the upper bounds of the p-values for oriented unshielded v-structures. Indeed, Algorithm \ref{pcp_2} executes $test_{A\ci B|\bm{S}}$ for all $\bm{S}\subseteq \widehat{\bm{N}}(A)$ containing $C$ and all $\bm{S}\subseteq \widehat{\bm{N}}(B)$ containing $C$ in steps \ref{pcp_2:pval_1}-\ref{pcp_2:pval_3} for each $A-C-B$ such that $A$ and $B$ are non-adjacent and $C \not \in \mathcal{S}_{AB}$. Now, Algorithm \ref{pcp_2} ultimately stores all of the p-values needed to compute $p_{\gamma_{AB|C} }$ as in \eqref{first_v_struc_bound} in $\mathcal{P}''$ via line \ref{pcp_2:pval_2}. Algorithm \ref{pcp_2} then stores the maximum over $p_{A-C}$ and $p_{\gamma_{AB|C} }$ in $\mathcal{P}'_{BC}$ instead of $\mathcal{P}'_{AC}$ in line \ref{pcp_2:pval_4}. A similar set of operations eventually stores the maximum over $p_{B-C}$ and $p_{\gamma_{AB|C} }$ into $\mathcal{P}_{AC}'$ in line \ref{pcp_2:pval_5}. Note that multiple elements can enter into $\mathcal{P}_{BC}'$ and $\mathcal{P}_{AC}'$ when multiple v-structures can orient one edge. Finally, in line \ref{pcp_2:pval_9}, Algorithm \ref{pcp_2} takes the maximum over $\mathcal{P}_{AC}^1$ as returned from Algorithm \ref{pcp_1} and the sum of $\mathcal{P}_{AC}'$ to obtain $p_{A\to C}$ in $\mathcal{P}^2$ according to \eqref{bound_f_v} and similarly takes the maximum over $\mathcal{P}_{BC}^1$ and the sum of $\mathcal{P}_{BC}'$ to obtain $p_{B\to C}$ in $\mathcal{P}^2$.

Algorithm \ref{pcp_3} takes $\mathcal{P}^2$ from Algorithm \ref{pcp_2} as input. Next, in rule 1, Algorithm \ref{pcp_3} adds up the p-values associated with $\bm{C}_i \to A, \forall i$ and places the result in $\mathcal{P}_{AB}'$ in line \ref{pcp_3:pval_1} for computing \eqref{bound_o1}. Then, Algorithm \ref{pcp_3} sums over the maxima of $p_{A\to \bm{C}_i }$ and $p_{\bm{C}_i \to B}$ in rule 2 $\forall i$ s.t. $A\to \bm{C}_i \to B$ in line \ref{pcp_3:pval_2} for ultimately computing \eqref{bound_o2}. Subsequently, in rule 3, Algorithm \ref{pcp_3} finds all $n$ edges such that $A - \bm{C}_i \to B$. The algorithm then finds all of the $n$ choose 2 pairs, say $r$ of them. For each pair, say $A - \bm{C}_1\to B$ and $A - \bm{C}_2 \to B$, Algorithm \ref{pcp_3} computes $p_{A - \bm{C}_1\to B}$ and $p_{A - \bm{C}_2\to B}$ as the maximum over $p_{A - \bm{C}_1 }$ and $p_{\bm{C}_1\to B }$ and the maximum over $p_{A - \bm{C}_2 }$ and $p_{\bm{C}_2\to B }$, respectively. Algorithm \ref{pcp_3} next sums the p-values over all $r$ pairs in line \ref{pcp_3:pval_3} for computing \eqref{bound_o3}. Note that Algorithm \ref{pcp_3} also takes an outer-sum involving $\mathcal{P}_{AB}'$ in lines \ref{pcp_3:pval_1}, \ref{pcp_3:pval_2} and \ref{pcp_3:pval_3} of rules 1, 2 and 3, respectively; these summations correspond to logical disjunctions when multiple orientation rules can orient one edge in the same direction. For example, rules 2 and 3 can orient $A-B$ in the same direction in Figure \ref{fig_two_rules}. Two applications of rule 1 can also orient $A-B$ in the same direction in Figure \ref{fig_rule1_ambig}, if we remove one of the unshielded v-structures from the graph. Now, for all non-ambiguous edges, Algorithm \ref{pcp_3} then stores the maximum over the p-values from Algorithm \ref{pcp_1} and $\mathcal{P}'$ into $\mathcal{P}^2$ in line \ref{pcp_3:pval_4}. This process is repeated until no more edges can be oriented. Algorithm \ref{pcp_3} finally transfers the p-values of all of the remaining undirected edges in $\widehat{\mathbb{G}}$ from $\mathcal{P}^1$ to $\mathcal{P}^2$ in lines \ref{pcp_3:transfer1}- \ref{pcp_3:transfer2}. The algorithm therefore eventually outputs all of the final p-values in $\mathcal{P}^2$ as desired.

\subsection{Controlling the False Discovery Rate}

PC-p controls the FDR per hypothesis test as opposed to per edge, since the algorithm can sometimes orient two edges according to the same hypothesis test during unshielded v-structure discovery. Indeed, controlling the p-values per edge as opposed to per hypothesis test can result in overly conservative FDR estimation or control because an FDR estimator or controlling procedure may count the p-value of one hypothesis test multiple times.

PC-p keeps track of each distinct hypothesis test in Algorithms \ref{pcp_1}, \ref{pcp_2} and \ref{pcp_3} by using indexing cell $\mathcal{I}$ as follows. First, Algorithm \ref{pcp_1} assigns the same, unique identifier to the p-value bounds in both $\mathcal{P}_{AB}$ and $\mathcal{P}_{BA}$ in line \ref{pcp_1:identifier}. Next, if we have one v-structure $A\to C\leftarrow B$ that orients $A-C$ and $C-B$, then Algorithm \ref{pcp_2} associates both $A\to C$ and $C\leftarrow B$ with the same hypothesis test and therefore the same identifier in line \ref{pcp_2:identifier1}. On the other hand, if multiple unshielded v-structures can orient one edge, then Algorithm \ref{pcp_2} assigns the edge a unique identifier in line \ref{pcp_2:identifier2}, since a unique hypothesis test exists per edge in this case. Algorithm \ref{pcp_3} finally assigns a unique identifier to each newly oriented edge in line \ref{pcp_3:identifier} because each newly oriented edge also corresponds to a distinct hypothesis test. 

We can now use Algorithm \ref{pcp_4} to estimate and control the FDR using the identifiers in $\mathcal{I}$ and the p-value bounds in $\mathcal{P}^2$ as returned from Algorithm \ref{pcp_3}. Algorithm \ref{pcp_4} estimates the FDR by solving \ref{FDR_BY} to obtain $\widehat{FDR}_{BY}$, where $m$ corresponds to the number of unique identifiers in $\mathcal{I}$. The algorithm subsequently controls the FDR by solving \ref{alpha_star} to obtain $\alpha^*$. Algorithm \ref{pcp_4} then eliminates all edges with p-values below $\alpha^*$ in $\mathcal{P}^2$ in order to obtain $\widehat{\mathbb{G}}^*$; this process ensures that the expected FDR does not exceed $q$ in $\widehat{\mathbb{G}}^*$.

\subsection{Conclusion}

We wrap-up this section with the following theorem:
\begin{theorem}
Consider the same assumptions as Theorem \ref{theorem_type2}. Then PC-p achieves conservative point estimation and strong control of the FDR across the edges in $\widehat{\mathbb{G}}$.
\end{theorem}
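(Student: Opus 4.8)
The plan is to reduce the statement to the Benjamini--Yekutieli guarantee by showing that the quantity PC-p attaches to each hypothesis test is a valid conservative p-value for that test's null, i.e.\ a statistic $\hat p$ with $\Pr_{H_0}(\hat p\le\alpha)\le\alpha$ for every $\alpha$, where $H_0$ is the corresponding null in \eqref{undirected_HT}, \eqref{v_struc_inf}, \eqref{rule_one_test}, \eqref{rule_two_test} or \eqref{rule_three_test}. Once this holds, both conclusions are immediate: because $\widehat{FDR}_{BY}$ is a conservative FDR estimator and the BY procedure strongly controls the FDR under \emph{any} dependence among a set of valid p-values, feeding PC-p's bounds---indexed per distinct hypothesis test through $\mathcal I$, so that $m$ counts distinct tests---into Algorithm~\ref{pcp_4} yields $\mathbb E[\widehat{FDR}_{BY}(\alpha)]\ge FDR(\alpha)$ and $FDR(\alpha^*)\le q$ across the edges of $\widehat{\mathbb{G}}$.

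The first step is to guarantee that PC-p actually evaluates every CI test appearing in each bound; this is exactly Theorem~\ref{theorem_type2}. Under the three hypotheses, each formula in Table~\ref{table_bounds} can be computed from the CI-test p-values stored in $\mathcal P^1$ and $\mathcal P^2$. In particular the zero Type~II error rate forces $\{\bm N(A)\setminus B\}\subseteq\{\widehat{\bm N}(A)\setminus B\}$, so the maxima in \eqref{undir_bound2} and \eqref{first_v_struc_bound} range over at least the parent subsets they must dominate, and the reported quantities are genuine upper bounds rather than under-estimates.

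The second and main step is to verify validity of each bound by induction on the order in which PC-p orients edges, reusing the union and intersection bounds of Section~\ref{sec_3} at every stage. For the base cases, the intersection bound gives validity of $p_{A-B}=\max_i p_{A\ci B\mid \bm S_i}$ for \eqref{undirected_HT}---under its null some oracle is independent, so the corresponding CI p-value is conservative and is dominated by the maximum---and the same argument validates $p_{\gamma_{AB\mid C}}$ and hence the v-structure bound \eqref{bound_f_v}. For the inductive step I would treat each orientation-rule bound as a max/sum composition of already-validated p-values, using that both operations preserve conservativeness under arbitrary dependence: $\{\max_j p_j\le\alpha\}\subseteq\{p_i\le\alpha\}$, and since the summands are nonnegative $\{\sum_j p_j\le\alpha\}\subseteq\{p_1\le\alpha\}$, so in either case $\Pr_{H_0}(\hat p\le\alpha)\le\Pr(p_i\le\alpha)\le\alpha$. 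Applying this to \eqref{bound_o1}--\eqref{bound_o3} and checking that whichever disjunct of the composite null is true supplies a valid p-value dominated by the reported max establishes validity for every directed edge.

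The delicate point---and where I expect the real work---is this inductive step for the orientation rules, where the p-values $p_{\bm C_i\to A}$ and $p_{A\to\bm C_i}$ are \emph{themselves} bounds produced earlier, from v-structures or prior rule applications. I must confirm that the null of each composite rule test decomposes into exactly the disjuncts whose individual p-values were already shown valid, and that the indexing in $\mathcal I$ groups precisely those edges sharing a single test so that no true null is double-counted in $m$ or $R$. Settling this bookkeeping, together with the composite-null case analysis above, is what upgrades per-test conservativeness into the per-edge FDR statement and lets the cited BY guarantee close the argument.
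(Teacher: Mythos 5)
Your proposal is correct and follows essentially the same route as the paper: the paper's own proof is just the reduction you describe --- invoke Theorem \ref{theorem_type2} to conclude that all the edge-specific p-value bounds are controlled, then appeal to the Benjamini--Yekutieli results that $\widehat{FDR}_{BY}$ is a conservative estimator and that thresholding at $\alpha^*$ achieves strong control under arbitrary dependence among valid p-values. The additional work you outline (inductively re-verifying that each max/sum composition in Table \ref{table_bounds} is a conservative p-value, and checking the $\mathcal{I}$ bookkeeping so $m$ counts distinct tests) is material the paper establishes in Sections \ref{sec_3}--\ref{sec_4} and \ref{sec_p_val_comp} rather than inside this proof, so you are supplying detail the paper leaves implicit, not diverging from it.
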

\begin{proof}
We have already shown that PC-p can control the p-values of all of the edges in $\widehat{\mathbb{G}}$ from Theorem \ref{theorem_type2}. Estimation follows because the solution of \ref{FDR_BY} achieves conservative point estimation of the FDR at threshold $\alpha$ when the p-values are controlled \citep{Benjamini01}. Similarly, control follows because eliminating the edges associated with p-values above $\alpha^*$ as obtained from \ref{alpha_star} achieves strong control of the FDR at level $q$ when the p-values are in turn controlled \citep{Benjamini01}. 
\end{proof}
The PC-p algorithm thus corresponds to a valid method for estimating and controlling the FDR in the estimated CPDAG.

Note finally that PC-p takes slightly longer than original PC to complete because it performs extra computations. However, PC-p runs at approximately the same speed as PC-stable, since v-structure detection and orientation rule application take an infinitesimal amount of time compared to skeleton discovery.

\section{Experiments} \label{sec_6}
\subsection{Algorithms and Metrics} \label{sec_6_1}
We evaluated six algorithms:
\begin{enumerate}
\item PC-p,
\item PC-p without stabilization in the skeleton discovery procedure,
\item PC-p without ambiguous labelings during v-structure orientation and orientation rule application (PC-p without ambiguation),
\item PC-p without both stabilization and ambiguation,
\item PC-p without hypothesis tests with robust p-value bounds $-$ we chose the null hypotheses to be a series of logical conjunctions so that no edges are present between any of the variables. As a result, the p-values take on minimal values as described in Appendix \ref{appendix_2}. We call this procedure PC-p without robust p-values.
\item The original PC algorithm with p-value computation $-$ that is, we do not incorporate stabilization, and the algorithm arbitrarily over-writes edge orientations. We compute p-values according to the v-structure or rule which ultimately orients each edge in the CPDAG. The algorithm also performs some additional CI tests in order to compute \ref{first_v_struc_bound} as described in Section \ref{sec_4_1}.
\end{enumerate}
We ran these six algorithms because they are the only algorithms that allow us to compute the FDR across the entire CPDAG from the estimated p-values.

We assessed the FDR of the above six algorithms in detail using control and estimation bias\footnote{We also measured the false negative rate using the structural Hamming distance as a metric in Figure \ref{fig_appendix_SHdis} of the Appendix.}. An algorithm exhibits low control bias at FDR level $q$ when an FDR controlling procedure can accurately eliminate edges in the CPDAG using the p-values so that the FDR is in fact $q$. On the other hand, an algorithm exhibits low estimation bias when an FDR estimate closely matches the true FDR of the CPDAG. Notice that both control and estimation bias are important and can serve different purposes. As a result, we prefer an algorithm that exhibits both low control and estimation bias.

We used the mean of the following quantities to assess control bias:
\begin{equation}
\begin{aligned}
uc(\widehat{FDR}_{BY},q)&:=\max \{FDR ̂(\alpha^*)-q,0\}, \\
oc(\widehat{FDR}_{BY},q)&:=\max \{q-FDR ̂(\alpha^* ),0\},
\end{aligned}
\end{equation}
where $uc(\widehat{FDR}_{BY},q)$ denotes under-control at FDR level $q$ with the BY FDR estimate, and $oc(\widehat{FDR}_{BY},q)$ similarly denotes over-control. In the experiments, we varied $q$ from $[0.001,0.1]$ using 100 equispaced intervals. Note that we compute both under-control and over-control per CPDAG. A method achieves strong control when the mean under-control taken across the hypothesis tests is zero \citep{Armen14}. Moreover, the less the mean over-control, the tighter the strong control. As a result, achieving a lower mean under-control is more important than achieving a lower mean over-control. We therefore say that one method outperforms another if the method achieves a lower mean under-control while also maintaining a reasonably low mean over-control. 

We used the mean of the following similar quantities for estimation bias:
\begin{equation}
\begin{aligned}
ue(\widehat{FDR}_{BY},\alpha)&:=\max \{FDR(\alpha)-\widehat{FDR}_{BY} (\alpha),0\}, \\
oe(\widehat{FDR}_{BY},\alpha)&:=\max \{\widehat{FDR}_{BY} (\alpha)-FDR(\alpha),0\},
\end{aligned}
\end{equation}
where $ue(\widehat{FDR}_{BY},\alpha)$ denotes under-estimation at threshold level $\alpha$ with the BY FDR estimate, and $oe(\widehat{FDR}_{BY},\alpha)$ similar denotes over-estimation. We varied the $\alpha$ threshold from $[1\text{E-10},0.1]$ with 100 equispaced intervals in the experiments. Now, we say that estimation is conservative in a $\alpha$ threshold region when the underestimation is zero. Moreover, the greater the over-estimation in a p-value threshold region, the more conservative the estimate. A method should conservatively estimate the FDR but not do so over-conservatively. As a result, achieving lower under-estimation is more important than achieving lower over-estimation, and one method outperforms another if the method achieves a lower mean under-estimation while maintaining a reasonably low mean over-estimation.

Below, we report the relative performance differences of the six algorithms in recovering the CPDAG at a liberal $\alpha$ threshold of 0.20, since this threshold consistently provided a nice tradeoff between p-value bound looseness and low Type II error rates. We have reported the results using other $\alpha$ thresholds of 0.01, 0.05, 0.10, and 0.15 or 0.50 in Figures \ref{fig_appendix_lowD}-\ref{fig_appendix_GDP} of Appendix \ref{appendix_3}, with similar relative performance differences between the algorithms.  Figures \ref{fig_appendix_lowD_skel}-\ref{fig_appendix_real_skel} in the Appendix also contain results for skeleton discovery, where we compared the original skeleton discovery procedure of PC against the same procedure with stabilization. As expected, the stabilization procedure improved performance. We finally provide results with the more commonly used structural Hamming distance in \ref{fig_appendix_SHdis} of the Appendix; here, PC-p achieved superior performance by conservatively estimating the graph.

Note that for the simulations in Sections \ref{sec_6_2} and \ref{sec_6_3}, we generated the DAGs using the TETRAD V package (version 5.2.1) by drawing uniformly over all DAGs with a maximum in-degree of 2 and a maximum out-degree of 2. We then converted each of the DAGs to linear non-recursive SEM-IEs by 1) drawing the linear coefficients from independent standard normal distributions, and 2) setting independent Gaussian distributions over the error terms with standard deviations also drawn from the standard normal. Each linear SEM-IE with the error distributions therefore induced a multivariate Gaussian distribution across the observed variables. We finally ran all of the six algorithms using Fisher's z-test with a liberal $\alpha$ threshold of 0.20 and a maximum conditioning set size of 2. 

\subsection{Low Dimensional Inference} \label{sec_6_2}
\begin{figure}
\centering
    \includegraphics[width=1\textwidth]{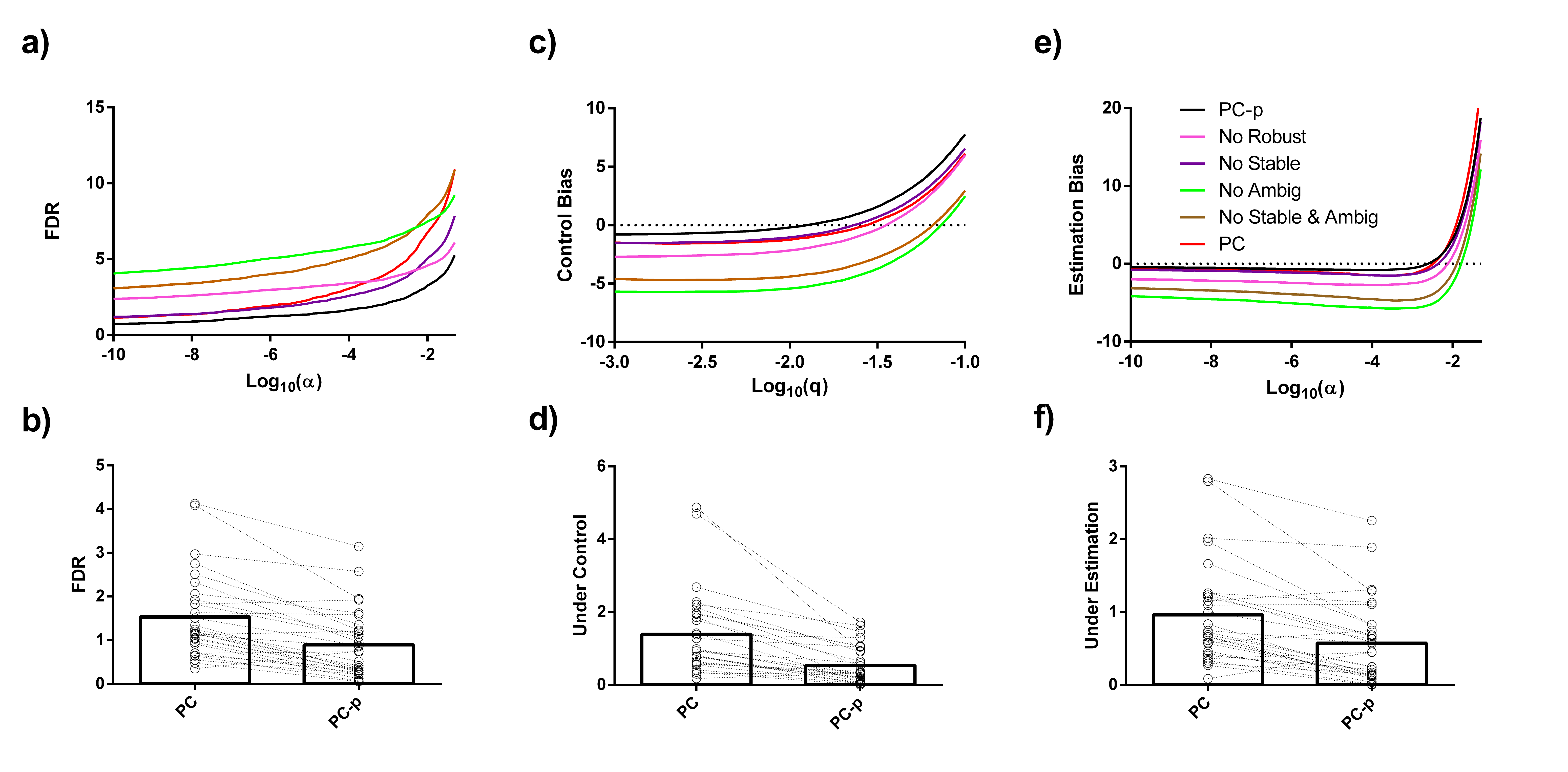}
\caption{Performances of PC-p, PC-p without robust p-values (no robust), PC-p without ambiguation (no ambig), PC-p without stabilization (no stable), PC-p without ambiguation and stabilization (no stable \& no ambig), and the original PC algorithm (PC) as assessed by (a,b) the FDR, (c,d) control bias, and (e,f) estimation bias in units of percent. PC-p achieved significantly lower FDR, under-estimation and under-control than the other five methods suggesting that robust p-values, stabilization and ambiguation are all important components of PC-p.}
 \label{fig_lowD}
\end{figure}
We generated 30 DAGs by drawing uniformly over all DAGs with 20 vertices. We converted each of the DAGs to 5 linear non-recursive SEM-IEs. We subsequently created 5 datasets using each linear SEM-IE with sample sizes of 100, 500, 1000, 5000, and 10000. We therefore created a total of $30\times 5\times 5=750$ datasets.

We analyzed the ability of the algorithms in correctly estimating the CPDAG in terms of the four metrics proposed in Section \ref{sec_6_1} as well as the FDR values. Results as averaged over DAGs, parameters and sample sizes are summarized in Figure \ref{fig_lowD}. We assessed the significance of all inter-algorithm differences using paired Wilcoxon signed rank tests. PC-p obtained lower mean FDR values than PC-p without robust p-values (Figures \hyperref[fig_lowD]{8a} and \hyperref[fig_lowD]{8b}; $z=\text{-}4.782, p=1.734\text{E-}6$), PC-p without stabilization ($z=\text{-}4.371, p=1.238\text{E-}5$), PC-p without ambiguation ($z=\text{-}4.782, p=1.734\text{E-}6$), PC-p without both stabilization and ambiguation ($z=\text{-}4.782, p=1.734\text{E-}6$), and PC original ($z=\text{-}4.433, p=9.316\text{E-}6$). Moreover, PC-p achieved significantly lower mean under-control than the competing methods (Figures \hyperref[fig_lowD]{8c} and \hyperref[fig_lowD]{8d}; vs. no robust: $z=\text{-}4.782, p=1.734E-6$; vs. no stable: $z=\text{-}3.898, p=9.711\text{E-}5$; vs. no ambig: $z=\text{-}4.782, p=1.734\text{E-}6$; vs. no stable \& no ambig: $z=-4.782, p=1.734\text{E-}6$; vs. PC original: $z=\text{-}4.700, p=2.603\text{E-}6$); meanwhile, PC-p kept the mean over-control small at $3.865\%$ (SD: $0.795\%$).

Results for estimation were similar. PC-p achieved significantly lower mean under-estimation than the competing methods (Figures \hyperref[fig_lowD]{8e} and \hyperref[fig_lowD]{8f}; vs. no robust: $z=\text{-}4.782, p=1.734\text{E-}6$; vs. no stable: $z=\text{-}4.206, p=2.597\text{E-}5$; vs. no ambig: $z=-4.782, p=1.734\text{E-}6$; vs. no stable \& no ambig: $z=\text{-}4.782, p=1.734\text{E-}6$; vs. PC original: $z=\text{-}4.186, p=2.843\text{E-}5$). PC-p also achieved a small degree of mean over-estimation ($8.222\%$, SD: $0.400\%$). We conclude that robust p-values, stabilization, and ambiguation all help PC-p achieve the lowest under-control and under-estimation.

\subsection{High Dimensional Inference} \label{sec_6_3}
\begin{figure}
\centering
    \includegraphics[width=1\textwidth]{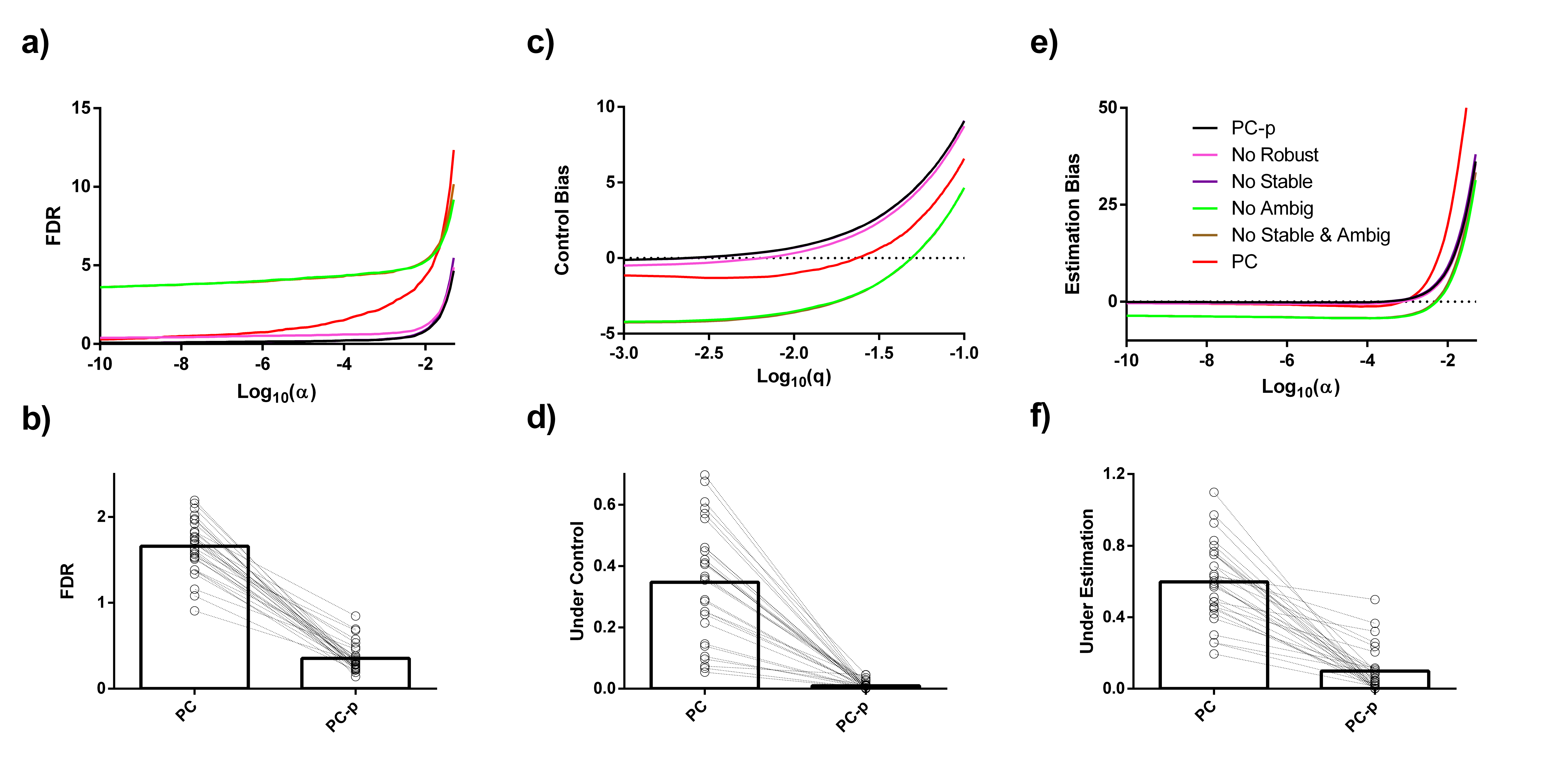}
\caption{Same setup as Figure \ref{fig_lowD} except with high dimensional data. PC-p significantly outperformed all other methods except PC-p without stabilization in terms of under-control and under-estimation.}
 \label{fig_highD}
\end{figure}
We next tested PC-p and the other five algorithms on high dimensional graph estimation. To do this, we generated thirty 100, thirty 200 and thirty 300 variable DAGs. We subsequently converted each of the DAGs to one linear non-recursive SEM-IE. Finally, we generated 1000 samples from each SEM-IE in order to obtain sample size to variable ratios of 10, 5 and 3.333.

Results are summarized using the FDR, control bias, and estimation bias metrics as averaged over the DAGs and their parameters in Figure \ref{fig_highD}. PC-p achieved similar results in low dimensions as it did for high dimensions. Specifically, PC-p obtained lower mean FDR values across the same $\alpha$ thresholds than PC-p without robust p-values (Figures \hyperref[fig_highD]{9a} and \hyperref[fig_highD]{9b}; $z=\text{-}4.703, p=2.563\text{E-}6$), PC-p without stabilization ($z=\text{-}2.232, p=0.026$), PC-p without ambiguation ($z=\text{-}4.782, p=1.734E-6$), PC-p without both stabilization and ambiguation ($z=\text{-}4.782, p=1.734\text{E-}6$), and PC original ($z=\text{-}4.782, p=1.734\text{E-}6$). Moreover, PC-p achieved significantly lower mean under-control than four of the five competing methods (Figures \hyperref[fig_highD]{9c} and \hyperref[fig_highD]{9d}; vs. no robust: $z=\text{-}4.623, p=3.790\text{E-}6$; vs. no ambig: $z=\text{-}4.782, p=1.734E-6$; vs. no stable \& no ambig: $z=\text{-}4.782, p=1.734\text{E-}6$; vs. PC original: $z=\text{-}4.782, p=1.734\text{E-}6$). PC-p did not outperform PC-p without stabilization at four of the five threshold $\alpha$ thresholds tested ($0.05: z=\text{-}1.121, p=0.262; 0.10: z=0.504, p=0.614$; $0.20: z=0.985, p=0.324$; $0.50: z=0.760, p=0.447$); however, PC-p did outperform PC-p without stabilization at an $\alpha$ threshold of 0.01 ($z=\text{-}3.692, p=2.225\text{E-}4$). Meanwhile, PC-p kept the mean over-control small at $4.507\%$ (SD: $0.240\%$). 

Results for estimation were again similar. PC-p achieved significantly lower mean under-estimation than the competing methods except PC-p without stabilization (Figures \hyperref[fig_highD]{9e} and \hyperref[fig_highD]{9f}; vs. all methods except no stable: $z=\text{-}4.782, p=1.734\text{E-}6$). PC-p did not outperform PC-p without stabilization at four of the five threshold $\alpha$ thresholds tested ($0.05: z=\text{-}1.820, p=0.069$; $0.10: z=0.175, p=0.861$; $0.20: z=0.625, p=0.532$; $0.50: z=0.608, p=0.543$); however, PC-p did outperform PC-p without stabilization at an $\alpha$ threshold of 0.01 ($z=\text{-}2.293, p=0.028$). PC-p also achieved a small degree of mean over-control ($2.129\%$, SD: $0.084\%$). 

We conclude that the results for control and estimation bias for high dimensional graph estimation are similar to the low dimensional case. However, stabilization only increased performance at lower $\alpha$ thresholds in the high dimensional scenario; we may nonetheless view this as a desirable property, since a lower $\alpha$ threshold helps the algorithm complete more quickly. 

\subsection{Real Data: CYTO}
\begin{figure}
\centering
    \includegraphics[width=1\textwidth]{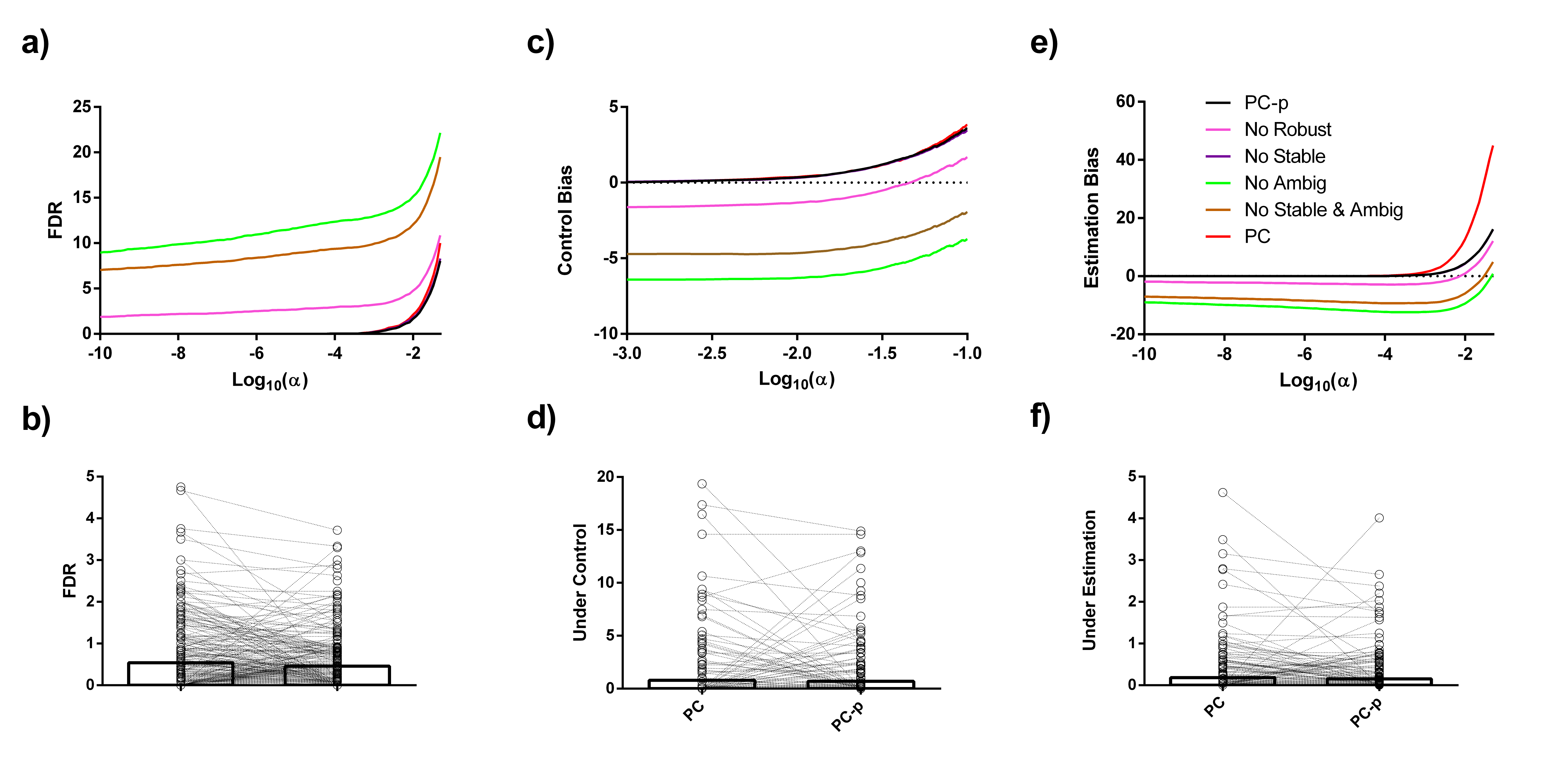}
\caption{Same setup as Figure \ref{fig_lowD} except with the CYTO dataset. PC-p significantly outperformed all methods across all metrics except the original PC algorithm in under-control.}
 \label{fig_real}
\end{figure}
We evaluated the six algorithms on the CYTO dataset which contains single cell recordings of the abundance of 11 phosphoproteins and phospholipids in human primary naive CD4+ T cells using flow cytometry \citep{Sachs05}. The variables in the dataset and their causal relationships can be represented as a DAG, where vertices are proteins or lipids and edges are phosphorylation interactions between the proteins and lipids. We used the general perturbation samples (i.e., CD3-CD28 and CD3-CD28-ICAM2) as our observational data; these perturbations are required to activate the phosphorylation pathways. Note that algorithms typically cannot accurately infer the gold standard solution set using the observational data alone, as noted by the original authors\footnote{In general, we do not have gold standard causal graphs for real data, so we must approximate the solution in some manner.}. As a result, we created a silver standard DAG by running LiNGAM as implemented in TETRAD V using default parameters on the full dataset of 1,755 samples; recall that LiNGAM is a method within a different class of causal discovery algorithms based on functional causal models. We then ran the six algorithms described in Section \ref{sec_6_1} on 1000 bootstrapped datasets of sample size 100 using Spearman's rho to handle the class of non-paranormal distributions.

We have summarized the results in Figure \ref{fig_real}. PC-p obtained lower mean FDR across the same $\alpha$ thresholds than PC-p without robust p-values ($z=\text{-}12.774, p=2.282\text{E-}37$), PC-p without stabilization ($z=-8.402, p=4.378\text{E-}17$), PC-p without ambiguation ($z=\text{-}24.598, p=1.343\text{E-}133$), PC-p without both stabilization and ambiguation ($z=\text{-}23.714, p=2.616\text{E-}124$), and original PC ($z=\text{-}4.924, p=8.469\text{E-}7$). Moreover, PC-p achieved significantly lower mean under-control than four of the five competing methods (vs. no robust: $z=\text{-}12.601, p=2.081\text{E-}36$; vs no stable: $z=\text{-}4.310, p=1.631\text{E-}5$; vs. no ambig: $z=\text{-}24.339, p=7.559\text{E-}131$; vs. no stable \& no ambig: $z=\text{-}22.893, p=5.515\text{E-}116$). PC-p did not outperform the original PC algorithm in mean under-control ($z=0.827, p=0.408$); however, PC-p did outperform the original PC algorithm in mean under-estimation $(z=-2.662, p=0.008)$. Meanwhile, PC-p kept the mean over-control small at $4.232\%$ (SD: $1.635\%$). PC-p also outperformed the other four methods in mean under-estimation (vs. no robust: $z=\text{-}12.684, p=7.289\text{E-}37$; vs. no stable: $z=\text{-}4.893, p=9.917\text{E-}7$; vs. no ambig: $z=\text{-}24.411, p=1.322\text{E-}131$; vs. no stable \& no ambig: $z=\text{-}23.301, p=4.333\text{E-}120$) while maintaining low mean over-estimation at $1.200\%$ (SD: $0.559\%$). We conclude that PC-p outperforms the other methods similar to the results with synthetic data. PC-p only outperformed PC in 2 of the 3 metrics, however, probably because the LiNGAM solution is only an estimate of the ground truth. 

\subsection{Real Data: GDP Dynamics}
\begin{figure}
\centering
    \includegraphics[width=1\textwidth]{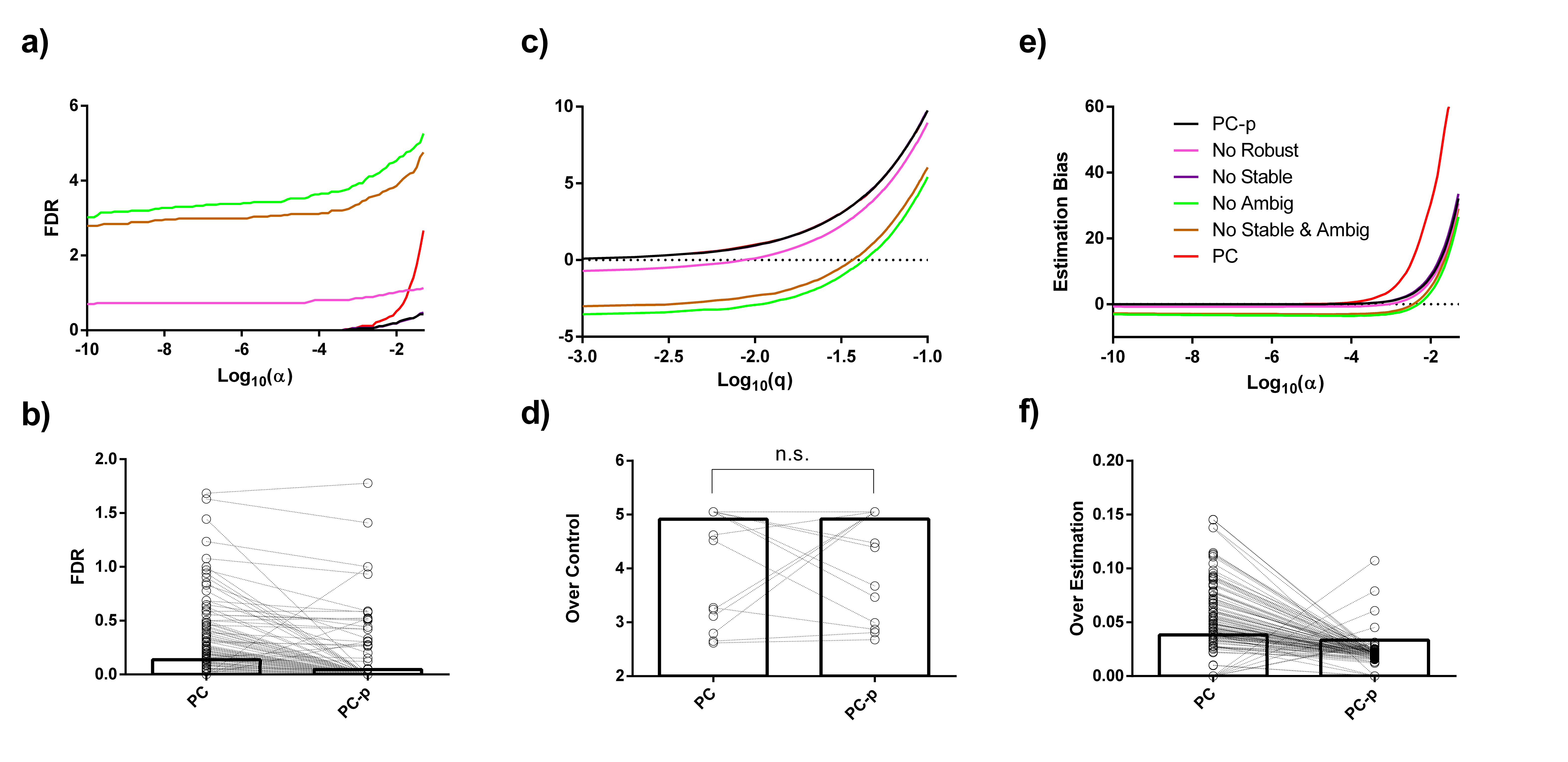}
\caption{Similar to Figure \ref{fig_lowD} except with the GDP dataset as well as over-control and over-estimation bias values instead of under. PC-p did not achieve lower under-control and under-estimation than PC, but it did achieve significantly lower mean FDR and over-estimation that PC.}
 \label{fig_real2}
\end{figure}
One way of approximating the underlying DAG involves learning the graph with a large number of samples. Another way uses time series data, where we know a priori that we must have contemporaneous causal relations or causal relations directed forward in time. In this experiment, we strip the time information from the six algorithms, and then identify the false discoveries when algorithms mistakenly detect a causal relation directed backwards in time. We used a time series dataset downloaded from the Economic Research Service of the United States Department of Agriculture containing ten economic indicators per year related to GDP among 192 countries\footnote{Web link: http://www.ers.usda.gov/data/macroeconomics/Data/HistoricalRealGDPValues.xls}. We specifically evaluated the algorithms on their ability to discover causal relations among the indicators within and between 1987, 1988 and 1989, where we treated each country as an i.i.d. sample and used 100 bootstrapped datasets.

We have summarized the results in Figure \ref{fig_real2}. PC-p again obtained lower mean FDR values across the $\alpha$ thresholds than PC-p without robust p-values ($z=\text{-}4.623, p=3.784\text{E-}6$), PC-p without ambiguation ($z=\text{-}8.054, p=4.128\text{E-}16$), PC-p without both stabilization and ambiguation ($z=\text{-}8.135, p=4.128\text{E-}16$), and original PC ($z=\text{-}6.624, p=3.500\text{E-}11$). However, PC-p did not obtain significantly lower mean FDR values than PC-p without stabilization ($\text{signed-rank}=70, p=0.600$). Next, PC-p achieved significantly lower mean under-control than three of the five competing methods including PC-p without robust p-values ($z=\text{-}4.374, p=1.218\text{E-}5$), PC-p without ambiguation ($z=\text{-}7.867, p=3.647\text{E-}15$), and PC-p without both stabilization and ambiguation ($z=\text{-}7.819, p=5.306\text{E-}15$). PC-p did not outperform PC-p without stabilization ($\text{signed-ranked}=3, p=1$) as well as the original PC algorithm ($\text{signed-ranked}=7, p=0.625$) in mean under-control; nevertheless, PC-p did outperform the former in over-control ($\text{signed-ranked}=3, p=0.020$) while keeping its own mean over-control low at $4.920\%$ (SD: $0.483\%$). PC-p also outperformed the same three methods in mean under-estimation (vs. no robust: $z=\text{-}4.372, p=1.229\text{E-}5$; vs. no ambig: $z=\text{-}7.818, p=5.363\text{E-}15$; vs. no stable \& no ambig: $z=\text{-}7.770, p=7.850\text{E-}15$) while maintaining low mean over-estimation at $2.032\%$ (SD: $0.281\%$). PC-p again did not outperform PC-p without stabilization ($\text{signed-rank}= 2, p=0.750$) and original PC ($\text{signed-rank}= 7, p=0.625$) in under-estimation, but it outperformed both in over-estimation (no stable: $z=\text{-}7.386, p=1.519\text{E-}13$; PC original: $z=\text{-}8.682, p=3.897\text{E-}18$). We conclude that PC-p outperforms most methods in either the under or over-metrics. The results however are not as clean as the results with the synthetic data because we only have access to a portion of the ground truth. 

\section{Conclusion} \label{sec_7}
We developed a new algorithm called PC-p which outputs a causal DAG with p-value bounds associated with each edge. One can then use the bounds with the BY procedure to achieve almost strong control and estimation of the FDR. The PC-p algorithm specifically integrates the skeleton discovery procedure of PC-stable, edge orientation with ambiguation, and robust hypothesis tests in order to accurately estimate p-values bounds while maintaining computational efficiency.

The PC-p algorithm represents the first global constraint-based method which can recover p-value estimates for every edge of a CPDAG. In our opinion, the algorithm is a significant advancement over previous methods which can only achieve strong control of the FDR under special conditions. Moreover, PC-p lays a foundation for developing similar methods which can also recover edge-specific p-values and achieve strong control of the FDR for graphs recovered by algorithms such as FCI and CCD. In particular, we suspect that a combination of the max and union bounds will also be sufficient for deriving upper bounds of the edge-specific p-values for more sophisticated constraint-based methods. The proposed approach may therefore represent one the earliest forms of a ``causal p-value.''

Now readers may wonder whether PC-p can also use the p-values to control the family-wise error rate (FWER). The answer is yes, and we recommend using the Benjamini-Holm step-down procedure as opposed to Hochberg's step-up procedure to control the FWER \citep{Hochberg88}, since the latter assumes positive dependency among the test statistics. However, application of an FWER controlling procedure to constraint-based causal discovery requires additional justification, since most investigators do not use constraint-based methods to definitively conclude causal relationships but rather to screen for potential causal variables. With the screening goal in mind, the FWER may be too conservative in practice, since it controls the rate of making a single Type I error across all of the hypothesis tests as opposed to controlling the proportion of Type I errors.

In summary, we introduced an algorithm called PC-p which outputs a causal DAG along with edge-specific p-value bounds. One can then use the BY procedure with the bounds to achieve almost strong control or estimation of the FDR and therefore assess the algorithm's confidence in each edge in a principled manner. We ultimately hope that this work will encourage more applications of constraint-based causal discovery to important problems in science.

\acks{Research reported in this publication was supported by grant U54HG008540 awarded by the National Human Genome Research Institute through funds provided by the trans-NIH Big Data to Knowledge initiative. The research was also supported by the National Library of Medicine of the National Institutes of Health under award numbers T15LM007059 and R01LM012095. The content is solely the responsibility of the authors and does not necessarily represent the official views of the National Institutes of Health.}


\newpage

\appendix
\section{Appendix}
\subsection{PC Algorithm Pseudocode} \label{appendix_1}
We provide pseudocode for the original PC algorithm. We summarize skeleton discovery in Algorithm \ref{pc_1}, unshielded v-structure discovery in Algorithm \ref{pc_2}, and orientation rule application in Algorithm \ref{pc_3}.

\begin{algorithm}[] \label{pc_1}
 \KwData{$\bm{X}^n$, $\alpha$}
 \KwResult{$\widehat{\mathbb{G}}$, $\mathcal{S}$}
 \BlankLine
 Form a completely connected undirected graph $\widehat{\mathbb{G}}$ on the vertex set $\bm{X}$\\
 $l=-1$ \\
 \Repeat{all ordered pairs of adjacent variables $(A,B)$ in $\widehat{\mathbb{G}}$ satisfy $|\widehat{\bm{N}}(A)\setminus B|\leq l$}{
 $l=l+1$ \\
 \Repeat{all ordered pairs of adjacent variables $(A,B)$ in $\widehat{\mathbb{G}}$ with $|\widehat{\bm{N}}(A)\setminus B| \geq l$ have been considered}{
 Select a new ordered pair of variables $(A, B)$ that are adjacent in $\widehat{\mathbb{G}}$ s.t. $|\bm{N}(A)\setminus B|\geq l$ \\
 \Repeat{$A - B$ is deleted from $\widehat{\mathbb{G}}$ or all $\bm{S} \subseteq \{\bm{N}(A)\setminus B \}$ with $|\bm{S}| =l$ have been considered}{Choose a new set $\bm{S} \subseteq \{\bm{N}(A)\setminus B\}$ with $|\bm{S}|=l$ using $order(\bm{X})$\\
 $p \leftarrow$ p-value from $test_{A \ci B | \bm{S}}$\\
 \If{$p > \alpha$}{Delete $A - B$ from $\widehat{\mathbb{G}}$ \\
 Insert $\bm{S}$ into $\mathcal{S}_{A,B}$ and $\mathcal{S}_{B,A}$ \\
 }
 }
}
}

 \caption{Skeleton Discovery}
\end{algorithm}

\begin{algorithm}[] \label{pc_2}
 \KwData{$\widehat{\mathbb{G}}$, $\mathcal{S}$}
 \KwResult{$\widehat{\mathbb{G}}$}
 \BlankLine
 \For{all ordered pairs of non-adjacent variables $(A, B)$ with common neighbor $C$}{
 \If{$C \not \in$ any set in $\mathcal{S}_{i,j}$}{
 Replace $A - C - B$ with $A \to C \leftarrow B$}
 }

 \caption{Unshielded V-structures} 
\end{algorithm}

\begin{algorithm}[] \label{pc_3}
 \KwData{$\widehat{\mathbb{G}}$}
 \KwResult{$\widehat{\mathbb{G}}$}
 \BlankLine
 \Repeat{there are no more edges to orient}{
 \uIf{$A - B$ and $\exists C$ s.t. $C \to A$, and $C$ and $B$ are non-adjacent}{
 Replace $A-B$ with $A \to B$
 }
 \uElseIf{$A - B$ and $\exists C$ s.t. $A \to C \to B$}{
 Replace $A - B$ with $A \to B$
 }
 \ElseIf{$A - B$ and $\exists B,D$ s.t. $A - C \to B$, $A - D \to B$, and $C$ and $D$ are non-adjacent}{
 Replace $A - B$ with $A \to B$
 } 
 
 }

 \caption{Orientation Rules}
\end{algorithm}

\subsection{Hypothesis Tests with Less Robust Bounds} \label{appendix_2}
We claimed to propose edge-specific hypothesis tests whose bounds are robust to Type II errors in Section \ref{sec_4_4}. We now explain our rationale.

Consider the following modification to \eqref{max_bound_gen}, where we have replaced the null with a series of logical conjunctions:
\begin{equation} \label{min_bound_gen1}
\begin{aligned}
H_0: &\bigwedge_{i=1}^m \text{oracle } i \text{ outputs } \neg P_i, \\
H_1: &\bigwedge_{i=1}^m \text{oracle } i \text{ outputs } P_i.
\end{aligned}
\end{equation}
We can bound the Type I error rate of the above hypothesis test as follows:
\begin{align*} \label{non_robust_bound}
\text{Pr(Type I error)}&=\text{Pr(}\bigwedge_{i=1}^m \text{test $i$ outputs $P_i$} |H_0 ) \\ \nonumber
&\leq\min_{i=1,\dots,m}{\text{Pr(test $i$ outputs $P_i$}  | H_0)} \\ \nonumber
&=\min_{i=1,\dots,m}{\text{Pr(test $i$ outputs $P_i$}  | \text{oracle $i$ outputs $\neg P_i$ )}} \\ \nonumber
&= \min_{i=1,\dots,m} g_i, \numberthis
\end{align*}

We can use the above bound with the following variant of \eqref{one_v_bound_f} for unshielded v-structures:
\begin{equation} \label{nonrobust0}
\begin{aligned}
H_0: &\text{ }\neg(A-C)\wedge\neg(B-C)\wedge \neg\gamma_{AB|C}, \\
H_1: &\text{ }(A-C)\wedge(B-C)\wedge \gamma_{AB|C},
\end{aligned}
\end{equation}
The above hypothesis test follows from the following natural hypothesis test:
\begin{equation} \label{nonrobust1}
\begin{aligned}
H_0: &\text{ All edges between $A,C,B$ are absent}, \\
H_1: &\text{ }(A-C)\wedge(B-C)\wedge \gamma_{AB|C},
\end{aligned}
\end{equation}
since the null of \eqref{nonrobust1} implies the null in \eqref{nonrobust0}.
From \eqref{non_robust_bound}, the Type I error rate of \eqref{nonrobust0} is bounded by $\min\{p_{A-C},p_{B-C},p_{\gamma_{AB|C} } \}$. This is a less robust bound than \eqref{one_v_bound_f} in terms of the Type II error rate, since failing to control one p-value can cause an algorithm to under-estimate the bound. For example, suppose the underlying truth corresponds to $p_{A-C}=0.01$, $p_{B-C}=0.03$, $p_{\gamma_{AB|C}}=0.02$. Thus, the Type I error rate of \eqref{nonrobust1} is truly bounded by $0.01$. However, suppose a Type II error causes PC-p to skip CI tests and therefore compute $p_{A-C}=0.01$, $p_{B-C}=0.03$, $p_{\gamma_{AB|C}}=0.003$, where the third term is under-estimated. Then, PC-p will under-estimate the bound at $0.003$ instead of the true $0.01$.

Note that generalizing \eqref{nonrobust1} to account for multiple possible ways of orienting a v-structure does not robustify the bound either, since we have:
\begin{equation}
\begin{aligned}
H_0: &\text{ All edges between } A,C,\bm{B}_1 \text{ are absent, and all edges between } \\ &\text{ } A,C,\bm{B}_2 \text{ are absent}, \\
H_1: &\text{ } (A-C)\wedge\Big(\Big[(\bm{B}_1-C)\wedge\gamma_{A\bm{B}_1 | C} ]\vee[(\bm{B}_2-C)\wedge\gamma_{A\bm{B}_2 | C} \Big]\Big).
\end{aligned}
\end{equation}
whose Type I error rate is bounded by $\min\Big\{p_{A-C},\min\{ p_{\bm{B}_1-C},p_{\gamma_{A\bm{B}_1 |C} } \}+\min\{p_{\bm{B}_2-C} ,p_{\gamma_{A\bm{B}_2 |C}} \} \Big\}$.

More broadly, we can consider the following hypothesis test:

\begin{equation} \label{min_bound_gen1}
\begin{aligned}
H_0: &\bigvee_{i=1}^m \Big( \bigwedge_{j=1}^{n_i} \text{oracle } i,j \text{ outputs } \neg P_{i,j} \Big), \\
H_1: &\bigwedge_{i=1}^m \Big( \bigwedge_{j=1}^{n_i} \text{oracle } i,j \text{ outputs } P_{i,j} \Big).
\end{aligned}
\end{equation}
We bound its Type I error rate as follows:
\begin{align*} \label{under_bound2}
&\text{Pr(Type I error)}=\text{Pr($\bigvee_{i=1}^m \bigwedge_{j=1}^{n_i}$ test $i,j$ outputs $P_{i,j} |H_0$ )} \\ \nonumber
&\leq\max_{i=1,\dots,m} \min_{j=1,\dots,n_i}{\text{Pr(test $i,j$ outputs $P_{i,j}$}  | H_0)} \\ \nonumber
&=\max_{i} \min_{j}{\text{Pr(test $i,j$ outputs $P_{i,j} |$ oracle $i,j$ outputs $\neg P_{i,j}$)}} \\ \nonumber
&= \max_{i} \min_{j} g_{i,j}, \numberthis
\end{align*}
The above bound is less robust to Type II errors than \eqref{max_bound_gen2}, since under-estimating one term in each group $i$ composed of $n_i$ terms can cause PC-p to also under-estimate \eqref{under_bound2}.

\subsection{Other Experimental Results} \label{appendix_3}

We have summarized the results for the low dimensional, high dimensional and real datasets across multiple $\alpha$ thresholds in Figures \ref{fig_appendix_lowD}, \ref{fig_appendix_highD}, \ref{fig_appendix_real} and \ref{fig_appendix_GDP} respectively. Relative differences in performance largely remained consistent across the thresholds, since PC-p usually achieved the lowest mean FDR, under-control and under-estimation values with minimal increases in mean over-control and over-estimation.

We have also summarized the results for adjacency discovery in Figures \ref{fig_appendix_lowD_skel}, \ref{fig_appendix_highD_skel}, and \ref{fig_appendix_real_skel}, where we tested whether the skeleton discovery procedure of PC with stabilization could improve the estimation of the p-value bounds relative to the procedure without stabilization. Results show that stabilization improves performance across the three metrics particularly with the low $\alpha$ threshold values of 0.01 and 0.05. Note that we cannot compute the same figures for the GDP dataset, since we can only evaluate relative performance levels based on edge direction in this case.

We have finally summarized the results using the structural Hamming distance in Figure \ref{fig_appendix_SHdis}. Notice that ambiguation helps PC-p achieve significantly lower Hamming distances across multiple $\alpha$ thresholds by forcing the algorithm to conservatively orient the edges. Again, we cannot compute the structural Hamming distances for the GDP dataset for the aforementioned reason.

\begin{figure}
\centering
    \includegraphics[width=1\textwidth]{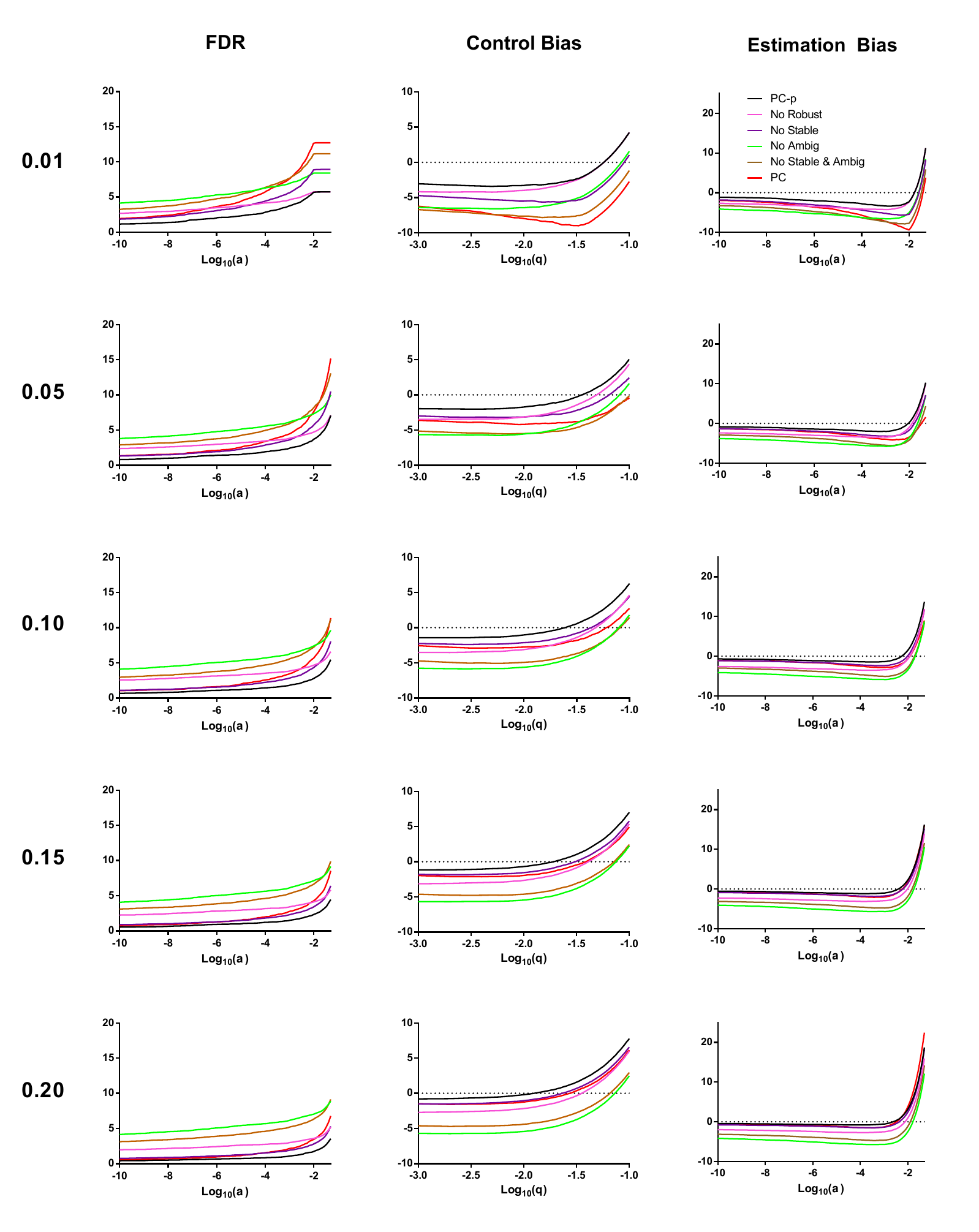}
\caption{Mean FDR, control bias, and estimation bias values across multiple $\alpha$ thresholds for the low dimensional datasets.}
 \label{fig_appendix_lowD}
\end{figure}

\begin{figure}
\centering
    \includegraphics[width=1\textwidth]{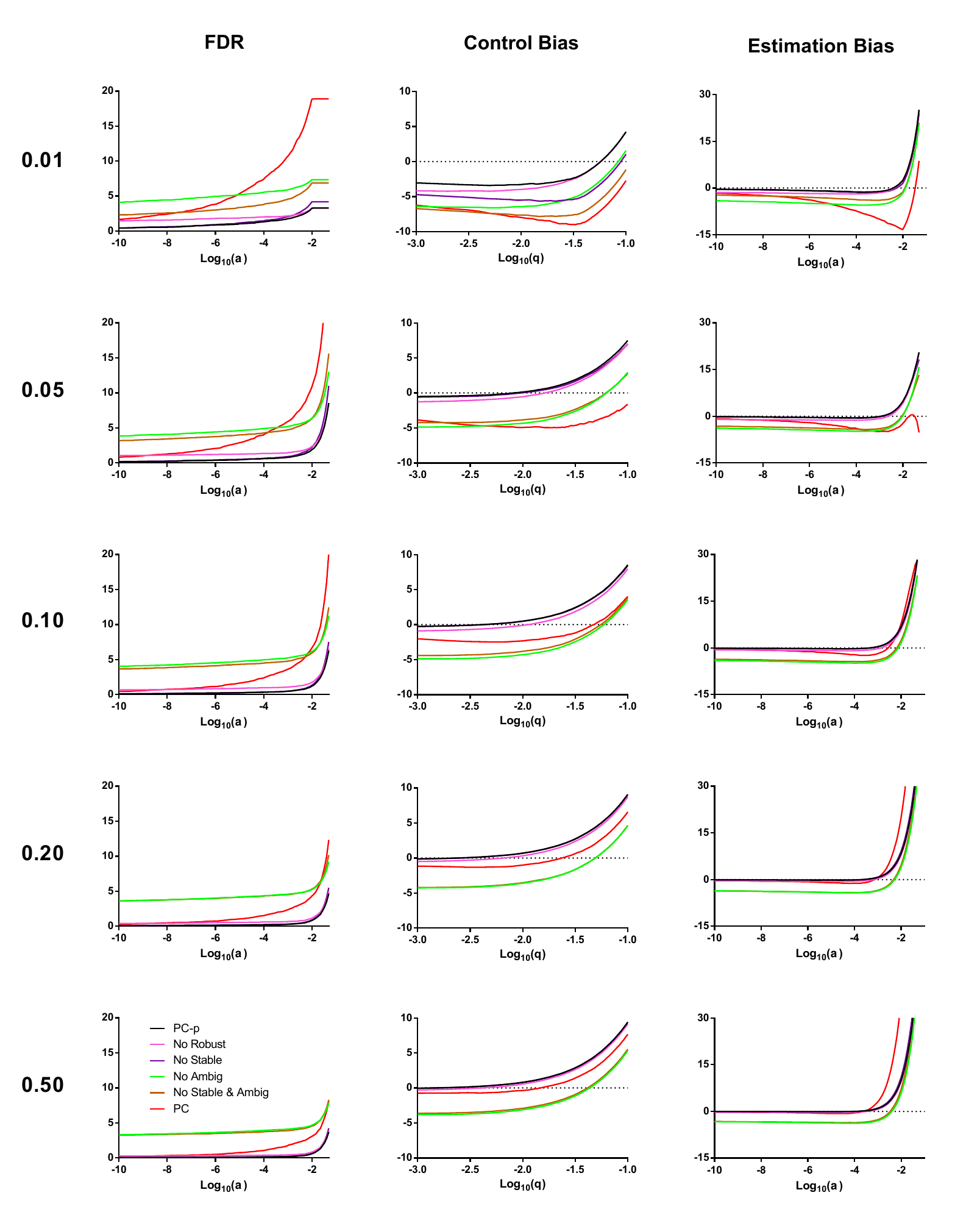}
\caption{Same as Figure \ref{fig_appendix_lowD} except with high dimensional datasets.}
 \label{fig_appendix_highD}
\end{figure}

\begin{figure}
\centering
    \includegraphics[width=1\textwidth]{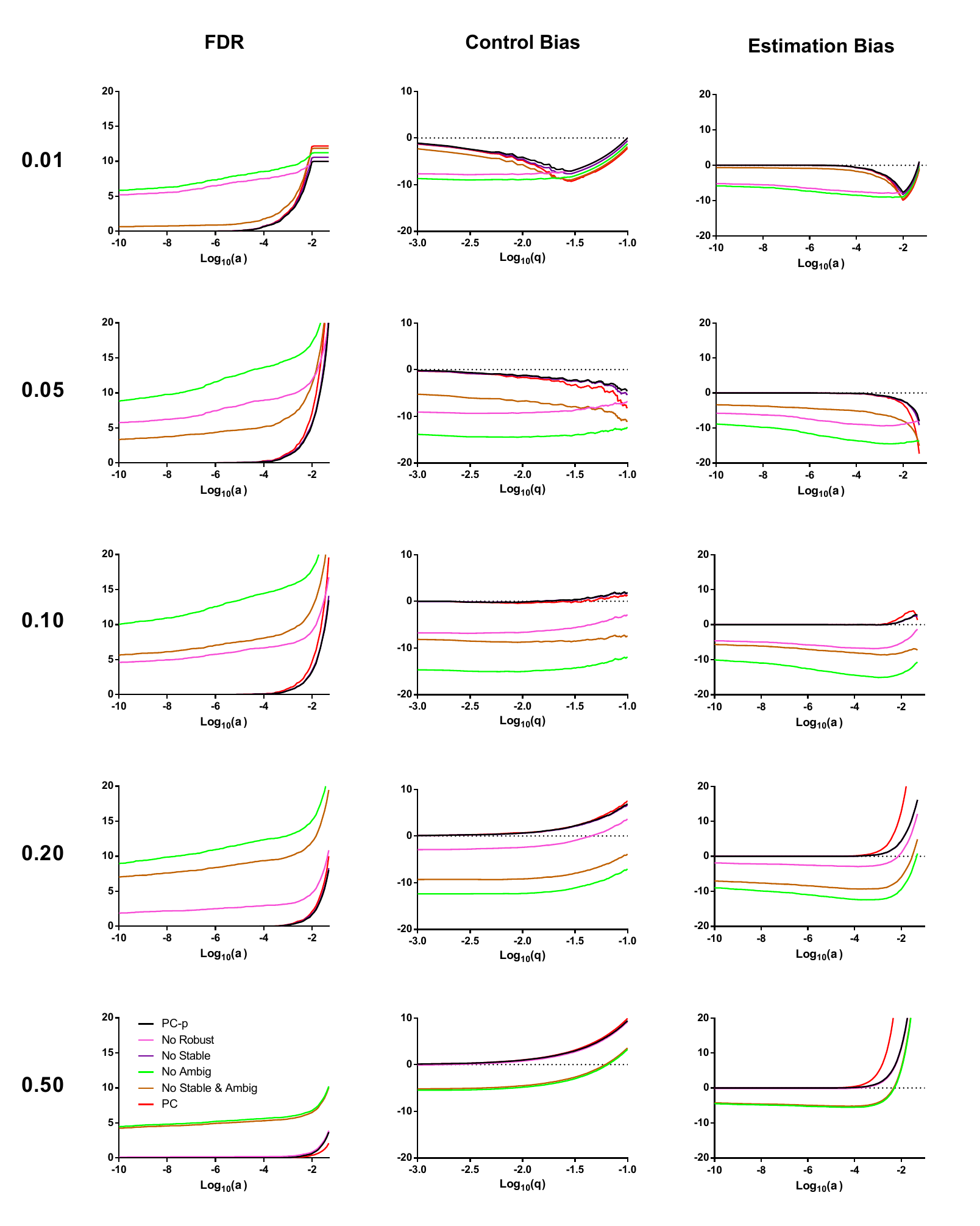}
\caption{Same as Figure \ref{fig_appendix_lowD} except with bootstrapped CYTO datasets.}
 \label{fig_appendix_real}
\end{figure}

\begin{figure}
\centering
    \includegraphics[width=1\textwidth]{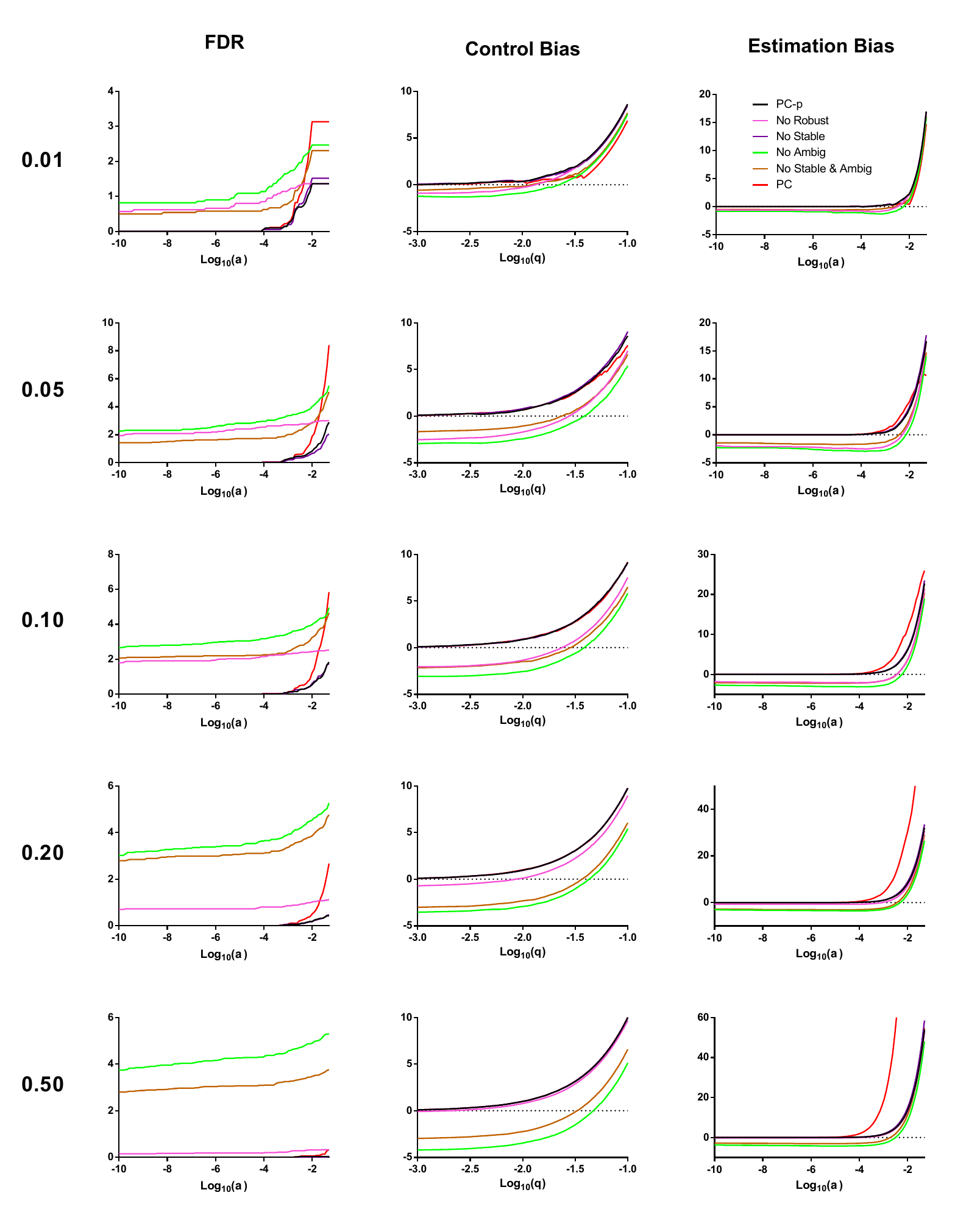}
\caption{Same as Figure \ref{fig_appendix_lowD} except with bootstrapped GDP datasets.}
 \label{fig_appendix_GDP}
\end{figure}

\begin{figure}
\centering
    \includegraphics[width=1\textwidth]{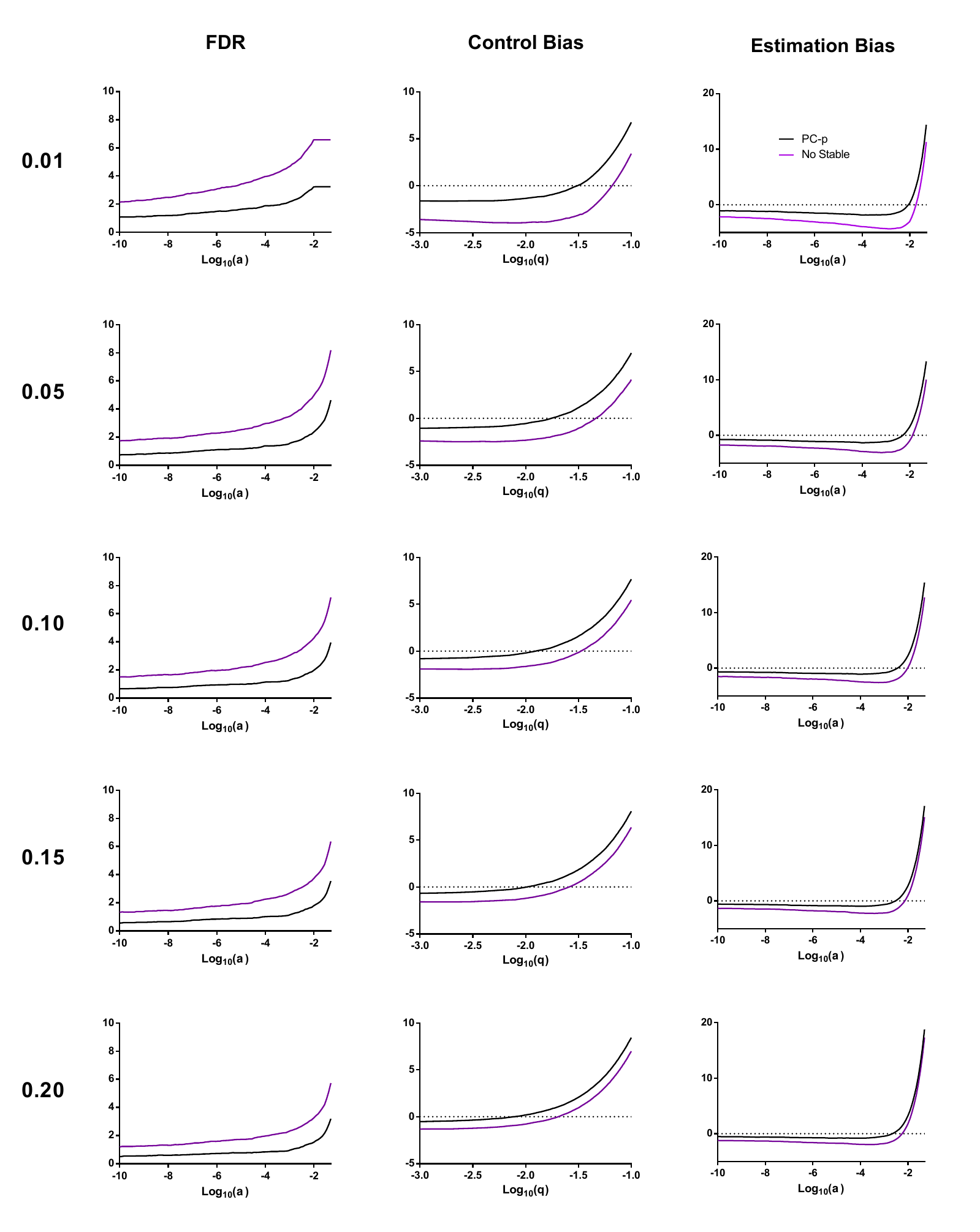}
\caption{Mean FDR, control bias, and estimation bias values across multiple $\alpha$ thresholds for the low dimensional datasets.}
\label{fig_appendix_lowD_skel}
\end{figure}

\begin{figure}
\centering
    \includegraphics[width=1\textwidth]{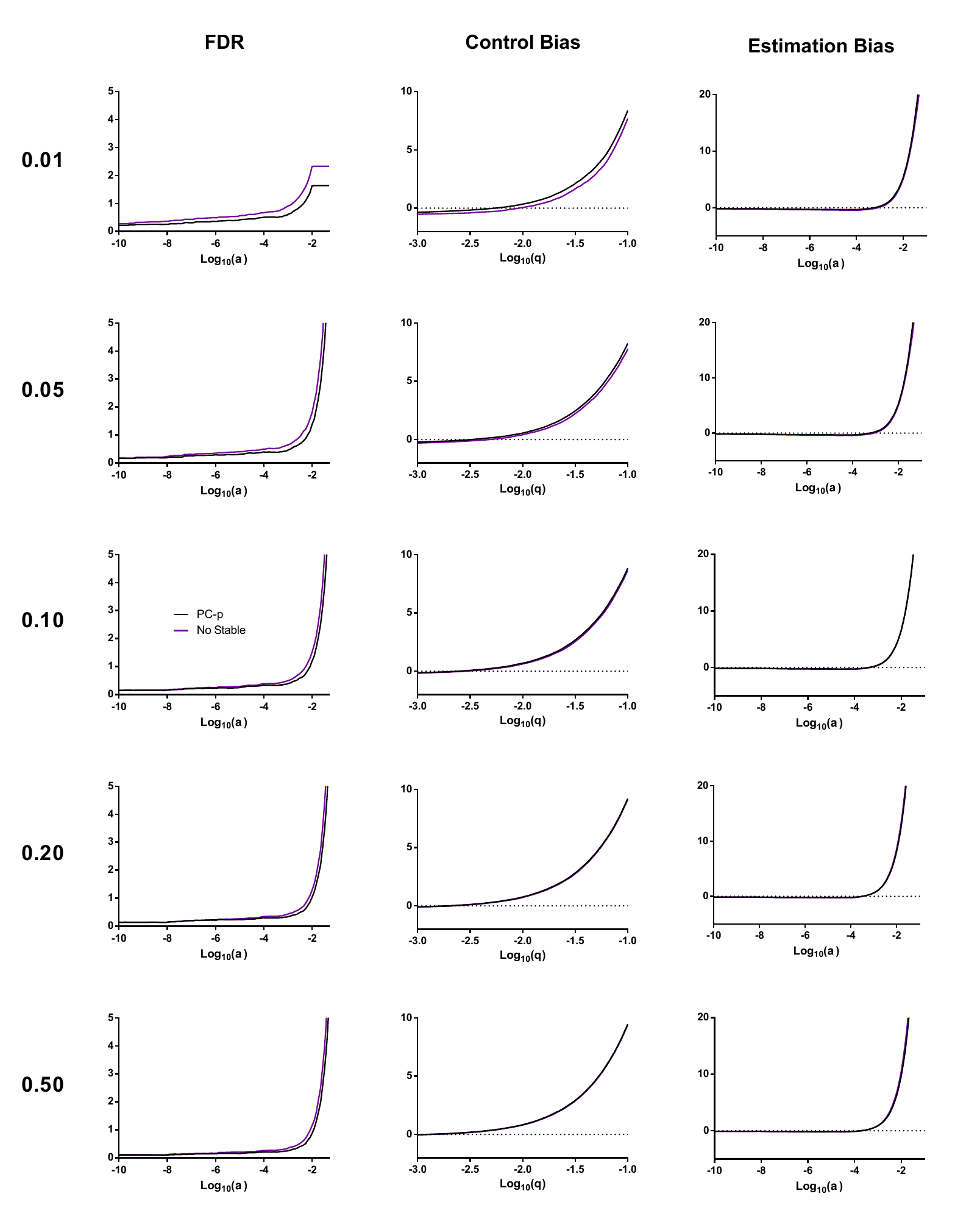}
\caption{Same as Figure \ref{fig_appendix_lowD_skel} except with high dimensional datasets.}
\label{fig_appendix_highD_skel}
\end{figure}

\begin{figure}
\centering
    \includegraphics[width=1\textwidth]{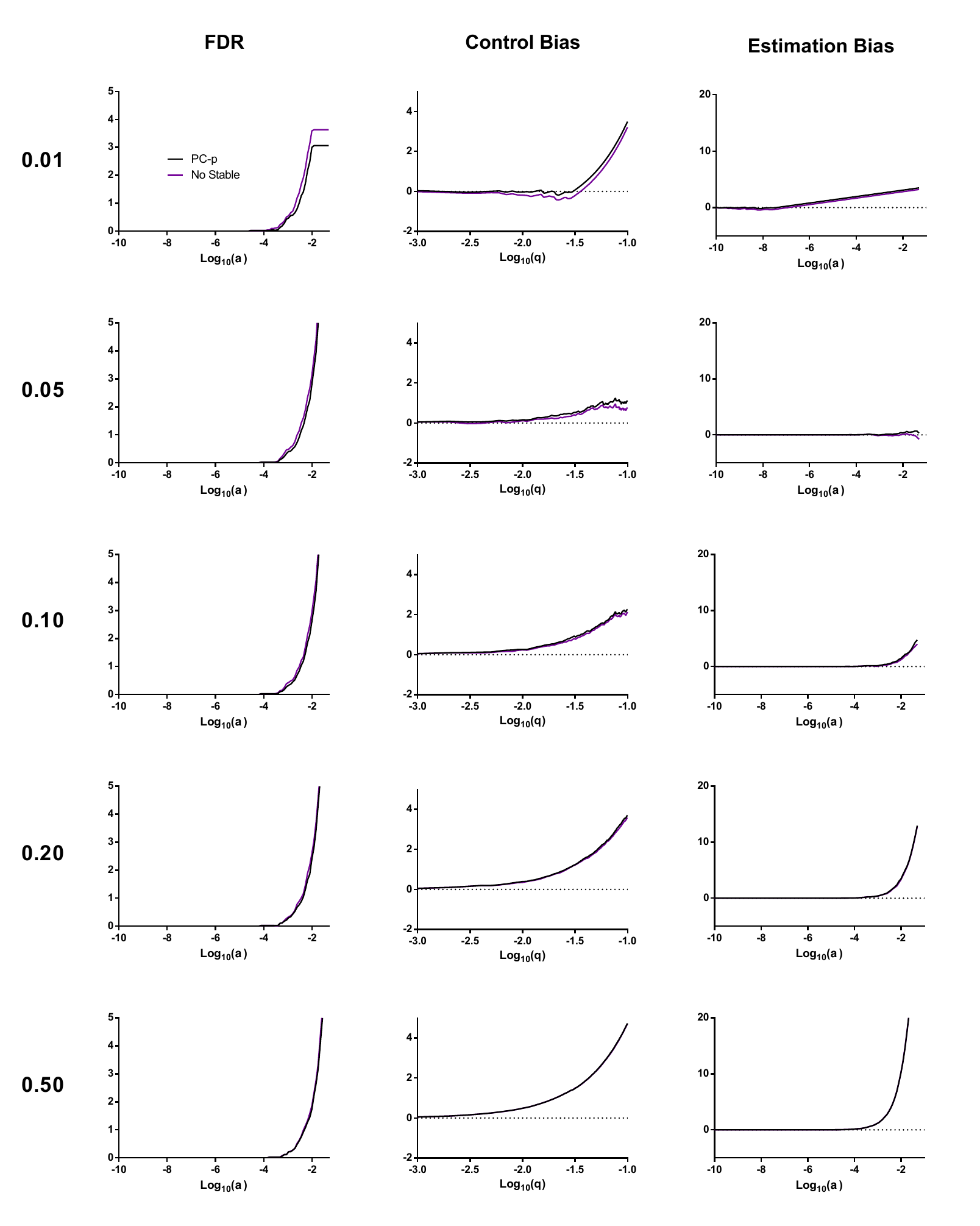}
\caption{Same as Figure \ref{fig_appendix_lowD_skel} except with bootstrapped CYTO datasets.}
 \label{fig_appendix_real_skel}
\end{figure}

\begin{figure}
\centering
    \includegraphics[width=1\textwidth]{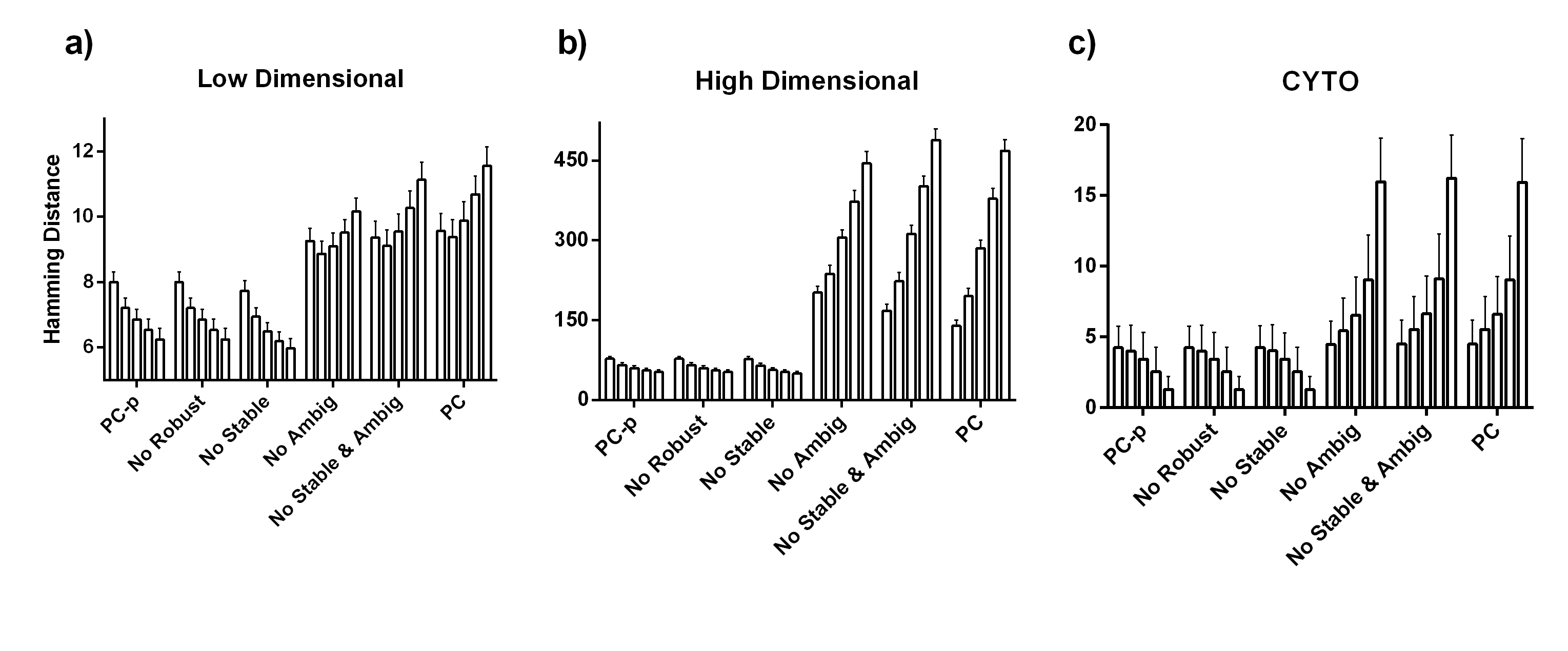}
\caption{Structural Hamming distances for the a) low dimensional, b) high dimensional and c) CYTO datasets. The three sub-figures associate 5 bars with each algorithm; these bars correspond to $\alpha$ thresholds of 0.01, 0.05, 0.1, 0.20 and 0.50 for the low dimensional and CYTO datasets, and $\alpha$ thresholds of 0.01, 0.05, 0.1, 0.15 and 0.20 for the high dimensional datasets. Error bars represent standard errors for a) and standard deviations otherwise.}
 \label{fig_appendix_SHdis}
\end{figure}

\vskip 0.2in
\bibliography{bibliography}

\end{document}